\theoremstyle{plain}
\newtheorem{theorem}{Theorem}[section]
\newtheorem{proposition}[theorem]{Proposition}
\newtheorem{lemma}[theorem]{Lemma}
\theoremstyle{definition}
\theoremstyle{remark}
\definecolor{our-blue}{HTML}{1f77b4}
\definecolor{lightblue}{HTML}{84C7F9}
\definecolor{lighterblue}{HTML}{D4ECFF}
\newtcolorbox{mybox}{colback=lighterblue,colframe=lightblue}
\date{}
\newcommand{\diag}{\mathrm{diag}}
\title{A Unification of Discrete, Gaussian, and \\Simplicial Diffusion}
\author{\textbf{Nuria Alina Chandra}$^{*1}$
\quad \textbf{Yucen Lily Li}$^{*1}$
\quad \textbf{Alan N. Amin}$^{*1}$
\quad \textbf{Alex Ali}$^{1}$\\
\textbf{Joshua Rollins}$^{2}$ \quad
\textbf{Sebastian W. Ober}$^{3}$ \quad
\textbf{Aniruddh Raghu}$^{3}$\quad
\textbf{Andrew Gordon Wilson}$^{1}$ \\
$^{1}$New York University \quad $^{2}$CUNY \quad $^{3}$BigHat Biosciences \quad $^{*}$Equal contribution
}
\author{Nuria Alina Chandra\thanks{Equal contribution.} \\
New York University \\
\And
Yucen Lily Li\footnotemark[1] \\
New York University \\
\And
Alan N. Amin\footnotemark[1] \\
New York University \\
\And
Alex Ali \\
New York University \\
\And
Joshua Rollins \\
CUNY \\
\And
Sebastian W. Ober \\
BigHat Biosciences \\
\And
Aniruddh Raghu \\
BigHat Biosciences \\
\And
Andrew Gordon Wilson \\
New York University \\
}
\begin{document}

\doparttoc
\faketableofcontents 

\maketitle
\begin{abstract}
To model discrete sequences such as DNA, proteins, and language using diffusion, practitioners must choose between three major methods: diffusion in discrete space, Gaussian diffusion in Euclidean space, or diffusion on the simplex. Despite their shared goal, these models have disparate algorithms, theoretical structures, and tradeoffs: discrete diffusion has the most natural domain, Gaussian diffusion has more mature algorithms, and diffusion on the simplex in principle combines the strengths of the other two but in practice suffers from a numerically unstable stochastic processes. Ideally we could see each of these models as instances of the same underlying framework, and enable practitioners to switch between models for downstream applications. 
However previous theories have only considered connections in special cases.
Here we build a theory unifying all three methods of discrete diffusion as different parameterizations of the same underlying process: the Wright-Fisher population genetics model. 
In particular, we find simplicial and Gaussian diffusion as two large-population limits.
Our theory formally connects the likelihoods and hyperparameters of these models and leverages decades of mathematical genetics literature to unlock stable simplicial diffusion.
Finally, we relieve the practitioner of balancing model trade-offs by demonstrating it is possible to train a single model that can perform diffusion in any of these three domains at test time.
Our experiments show that Wright-Fisher simplicial diffusion is more stable and outperforms previous simplicial diffusion models on conditional DNA generation.
We also show that we can train models on multiple domains at once that are competitive with models trained on any individual domain.
\end{abstract}

\section{Introduction}
To generate high quality sequences conditioned on desired properties, 
practitioners build diffusion models of language, DNA, and proteins~\citep{Sahoo2024-uj, Sarkar2024-mk, Alamdari2023-nj, li2024absorb}.
These models corrupt each letter in a sequence -- the ``forward" process -- and train a model to reverse that corruption -- the ``backward'' process.
A model which has been trained to de-noise can be used for high-quality conditional generation~\citep{Wang2024-cf}, for optimization~\citep{Gruver2023-sf}, and myriad other downstream tasks~\citep{Luo2022-ha, Baron2025-wg}.

A practitioner has three main choices of forward process (Fig.~\ref{fig:unificationmain}), each with their own strengths:
\begin{enumerate}
    \item \textbf{Discrete:} occurs in the most natural domain~\citep{Campbell2022-zm}.
    \item \textbf{Gaussian:} has more mature sampling and training procedures~\citep{Dieleman2022-ym}.
    \item \textbf{Simplicial:} in theory inherits the continuous algorithms of Gaussian diffusion while in a natural space, but in practice suffers from numerical instability~\citep{Avdeyev2023-mv}.
\end{enumerate}
Unfortunately, there is little theoretical infrastructure to compare these models, and thus practitioners have little tacit knowledge to rely on when selecting or designing a model. 
This gap in understanding is particularly evident in two basic comparison problems which have yet to be solved.
First, despite models from the three frameworks achieving similar likelihood values, there is a belief that the ``continuous-space likelihood is not directly comparable with discrete-space likelihood" \citep{Avdeyev2023-mv}.
Second, forward processes in each of these models are specified by hyperparameters with vastly different interpretations.
It is unclear how to qualitatively compare the assumptions embedded into each set of hyperparameters across models.

Here we address these theoretical and practical challenges by unifying these streams with a process from human population genetics -- the Wright-Fisher (WF) model.
Our contributions are as follows:
\begin{itemize}
    \item We formally prove all three methods are instances of WF (Fig.~\ref{fig:unification}).
    In particular discrete diffusion corresponds to the WF model with a population size of $1$, and simplicial and Gaussian diffusion correspond to large population limits with and without reproduction.
    \item We use this connection to answer the two comparison questions above. Surprisingly, we show that likelihoods can only be compared in some cases, depending on a seemingly inconsequential parameterization choice introduced for only discrete diffusion models in \citet{Austin2021-dg} which we call the \textbf{hollow parameterization}.
    \item We apply our theory to explain and solve the instability of simplicial diffusion by leveraging decades of mathematical genetics literature.
    We show that this stable simplicial diffusion is superior in conditional generation of DNA.
    \item We leverage our theory to show that a particular parameterization choice -- the \textbf{sufficient-statistic parameterization} -- allows one to train a single model that can perform diffusion on all three domains at test time\footnote{
    Of independent interest, it also explains the root of the noted ``time-invariance'' of masking diffusion and extends this property to every diffusion model.
    We discuss this in App.~\ref{app: time invariance}.
    }.
    We show in experiment that models trained this way are competitive with models trained on single domains.
    This removes the necessity for the practitioner to choose a particular model before training.
\end{itemize}
\begin{figure}
    \centering
    \begin{subfigure}[b]{0.25\textwidth}
        \centering
        \raisebox{0.6cm}{\includegraphics[width=\linewidth]{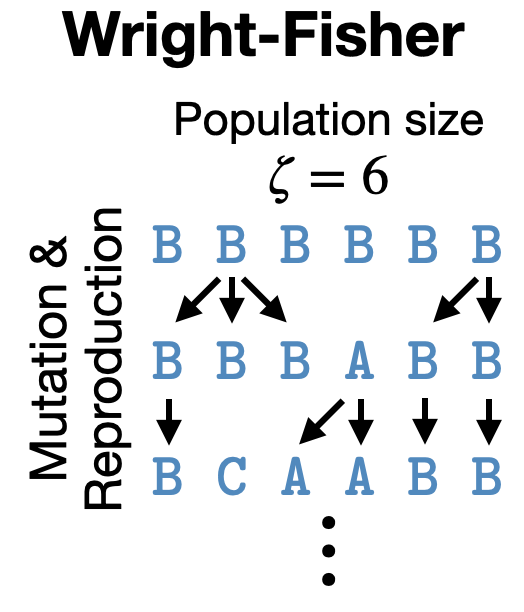}}
        \caption{}
        \label{fig:unificationwf}
    \end{subfigure}
    \hspace{1cm}
    \begin{subfigure}[b]{0.43\textwidth}
        \centering
        \includegraphics[width=\linewidth]{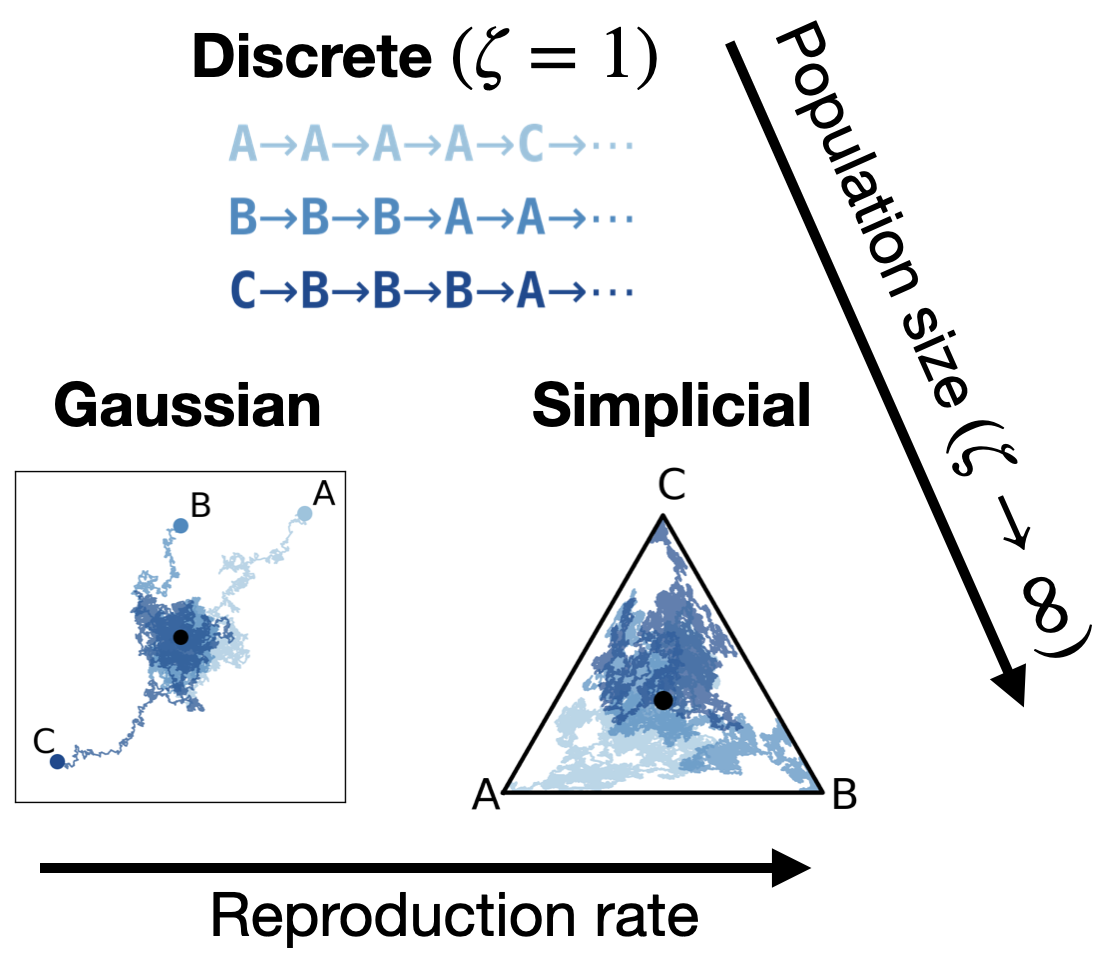}
        \caption{}
        \label{fig:unificationmain}
    \end{subfigure}
    \caption{\textbf{Discrete, Gaussian, and Simplicial diffusion for discrete data are unified by Wright-Fisher diffusion.} 
    \textbf{(a)} Wright-Fisher diffusion with population size $\zeta=6$, showing mutation and reproduction processes across generations.
    \textbf{(b)} The three diffusion methods emerge as different limits of Wright-Fisher: discrete diffusion corresponds to $\zeta=1$, while Gaussian and simplicial diffusion arise as $\zeta \to \infty$ with zero and non-zero reproduction rates.
}
    \label{fig:unification}
\end{figure}

Our code is available at \url{https://github.com/yucenli/unify-diffusion}.

\section{Related work}\label{sec: related-work}
We discuss past unification theories and attempts at stable simplicial diffusion. 
In App.~\ref{app: related work} we discuss related works in classical diffusion theory, and parameterizations of diffusion models.

\paragraph{Theories unifying discrete and continuous diffusion}
\citet{Winkler2024-am} indirectly used a result from \citep{Stone1963-ui} to connect the special case of one-dimensional, unbiased discrete diffusion to one-dimensional Gaussian diffusion.
They use this observation to heuristically argue, or conjecture, the convergence of the backwards processes as well.
\citet{Sahoo2025-dt} suggested that by taking Gaussian diffusion and applying argmax, one recovers discrete diffusion.
\footnote{
Interestingly, \citet{Stone1963-ui} also wrote discrete diffusion as the function of an underlying Gaussian diffusion. However the function from \citet{Stone1963-ui} was a path-dependent time-dilation rather than $\mathrm{argmax}$.}
They used this insight to answer the loss comparison problem by proving that the ELBO of discrete diffusion is always superior to that of continuous diffusion.
Unfortunately, this is based on a mathematical error (details in App.~\ref{sec: sahoo error}): by applying argmax to Gaussian diffusion one does not get a Markov process, a property which was crucial to their proof of the loss comparison question. 
In our approach, we build a mathematically rigorous foundation to compare these models. 

\paragraph{Stable simplicial diffusion models}
\citet{Richemond2022-de} and \citet{Avdeyev2023-mv} suggest diffusion on a simplex using two processes used in finance -- the ``Cox-Ingersoll-Ross process'', and its normalization onto the simplex, the ``Jacobi process'' --  and \citet{Benton2024-br} suggest ``Wright–Fisher diffusion'' in a toy experiment.
However these models struggle from numerical instability.
One solution to this instability is to essentially perform Gaussian diffusion (see App.~\ref{app: related work}).
Another is to build flow-matching models on the simplex \citep{Stark2024-rf, Tang2025-an, Davis2024-iu, eijkelboom2024variational}.
However these sacrifice the ability to straightforwardly calculate a likelihood and access to many diffusion algorithms, such as classifier guidance.

\section{Background and Motivation}
First we describe diffusion models for discrete data and the challenges unifying the frameworks.
\subsection{Diffusion models for discrete data}
We consider modelling a distribution $p(x_0)$ over a discrete space of size $B$, and will extend to sequences of discrete objects below.
Our model will begin with a distribution that is easy to sample from, $q(x_1)$, and then applies a stochastic process parametrized by $\theta$ from time $1$ to $0$.
This produces a trajectory $q_\theta((x_t)_{t=0}^1)$ and we hope to pick $\theta$ so that $q_\theta(x_0)\sim p(x_0).$
\paragraph{Markov processes}
To generate training data to fit $q_\theta((x_t)_{t=0}^1)$, we take samples $x_0\sim p(x_0)$ and evolve it according to a Markov process to get a trajectory $p((x_t)_{t=0}^1)$.
We can train $q_\theta$ on these trajectories by optimizing a negative ELBO
\begin{equation}\label{eqn: main ELBO}
    \begin{aligned}
        -\log q_\theta(x_0)\leq& -\mathbb E_{p((x_t)_{t=0}^1|x_0)}\log\frac{q_\theta((x_t)_{t=0}^1)}{p((x_t)_{t=0}^1|x_0)}\\
    =&-\mathbb E_{p((x_t)_{t=0}^1|x_0)}\log\frac{q_\theta((x_t)_{t=0}^1|x_1)}{p((x_t)_{t=0}^1|x_0, x_1)}+\mathrm{KL}(p(x_1|x_0)|q(x_1)).
    \end{aligned}
\end{equation}
\paragraph{The time dilation function}
To make the second term of Eqn.~\ref{eqn: main ELBO} small we need $p(x_1|x_0)\approx q(x_1)$.
Conveniently, applying a Markov process to $x_0$ usually leads to $p(x_t|x_0)$ converging to a stationary distribution $p(x_\infty)$ as $t\to\infty$, a good choice for $q(x_1)$.
However our $t$ is on the interval $[0, 1]$,
so we compress $[0, \infty)$ into $[0, 1]$:
we pick an increasing ``time dialation'' function $\tau: [0, 1]\to [0, \infty)$ and simulate $x_t$ so that it has had the Markov process applied to it for time time $\tau_t$.
In particular, if $\tau_1$ is very large, $p(x_1|x_0)\approx p(x_\infty)=q(x_1)$.
$\tau_t$ is a more convenient parametrization for our presentation than equivalent functions $\beta_t=\dot\tau_t, \alpha_t=\exp(-\tau_t)$ in other works~\citep{Shi2024-fs}. 
Picking $\tau_1$ very large, the second term of the ELBO can be made arbitrarily small, so we leave it out of the presentation below.
\paragraph{Matching forward and backward flow}
$q_\theta$ is usually parameterized to take $x_t, t$ and predict the $x_0$ that generated $x_t$, that is, approximate $p(x_0\mid x_t, t)$; we represent this prediction $\tilde x_{0}=q_\theta(x_0|x_t, t)$ as a vector of probabilities over the $B$ tokens $\sum_b \tilde x_{0, b}=1$.
Some rearrangement then allows one to rewrite the first term of Eqn.~\ref{eqn: main ELBO} as an expectation of a term $L$ that can be interpreted as the divergence between the ``infinitesimal flow'' forward $p$ and backward $q_\theta$ at $x_t$:
$$E_{t\sim\mathrm{Unif}(0, 1)}E_{p(x_t|x_0)}L(x_t, t, x_0, \tilde x_{0}).$$
Thus getting a stochastic estimate of the ELBO has 3 steps:
(1) Sample noisy $x_t$ by simulating the Markov process for time $\tau_t$, (2) Predict de-noised $x_0$ with $q_\theta(x_0\mid x_t, t)$, and (3) Estimate the ELBO by computing the particular form of $L$.

\paragraph{Moving to multiple dimensions}
To model sequences of discrete objects $x_0=x_0^{1}\cdots x_0^{D}$, we simply apply the forward process to each position $x_0^{d}$ independently.
``{Sample noisy} $x_t$" remains the same, repeated for every $d$.
The ``infinitesimal flow'' for each position is also independent: the ``{Estimate ELBO}'' step also remains the same, repeated for every $d$ and then summed across all $d$.
Therefore, in the ``{Predict de-noised} $x_0$'' step we will predict $\tilde x_{0}^d=q_\theta(x_0^d|x_t, t)$ for each $d$.

\subsection{Challenges comparing domains for discrete diffusion}

\paragraph{Comparing diffusion models}
A practitioner much choose a forward process which will determine how they train their diffusion model.
For discrete diffusion, the forward process is mutation defined with a rate matrix $\mathcal L$; the form for $L$ was derived in \citet{Campbell2022-zm}.
This gives Alg.~\ref{alg: discrete diffusion}, where $\vec x_0$ is the indicator vector for the token $x_0$, $\mathbb D(\lambda_1||\lambda_2)=\lambda_1\log\frac{\lambda_1}{\lambda_2}-\lambda_1+\lambda_2$ is the KL divergence between two Poisson distributions, $\hat w(\tilde x_0):=\sum_b\tilde x_{0b}\hat w(b)$, and $\dot \tau_t$ is the derivative of $\tau_t$.
For Gaussian diffusion, the forward process is Brownian motion on embeddings $\mathrm{emb}(x_0)\in\mathbb R^r$; the form for $L$ was derived in \citet{Ho2020-yq}.
This gives Alg.~\ref{alg: Gaussian diffusion}.
Now, how can a practitioner compare how well each model fits its data, and how can they leverage their expert knowledge when designing their forward process?
Unfortunately there is little infrastructure for answering these questions.

\begin{minipage}{0.48\textwidth}
\begin{algorithm}[H]
\caption{ELBO for discrete diffusion\label{alg: discrete diffusion}}
\begin{algorithmic}[1]
\State Sample $t \sim \mathrm{Unif}(0, 1)$
\State \textbf{Sample noisy $x_t$:}
\State Sample $ x_t\sim \mathrm{Categorical}( \vec x_0^Te^{\tau_t\mathcal L})$\vphantom{$x_t=e^{-\tau_t}\mathrm{emb}( x_0)+\sqrt{1-e^{-2\tau_t}}N(0, I)$}
\State \textbf{Predict de-noised $x_0$:}
\State Predict $\tilde{x}_0 = q_\theta(x_0|x_t, t)$
\State \textbf{Estimate ELBO:}
\State $\hat w(b) = (\vec b^Te^{\tau_t\mathcal L})(1/\vec b^Te^{\tau_t\mathcal L})^{T}$\vphantom{$L = \frac{\dot\tau_t e^{-2\tau_t}}{(1-e^{-2\tau_t})^2}\Vert \mathrm{emb}( x_0)-\mathrm{emb}( \tilde x_0)\Vert^2\vphantom{x^Te^{\tau_t\mathcal L}}$}
\State $L = \sum\limits_{b\neq x_t}\mathcal L_{b\to x_t} \dot\tau_t\mathbb{D}\left(\hat w(x_0)_{bx_t}||\hat w(\tilde x_0)_{bx_t}\right)$
\end{algorithmic}
\end{algorithm}
\end{minipage}
\hfill
\begin{minipage}{0.50\textwidth}
\begin{algorithm}[H]
\caption{ELBO for Gaussian diffusion\label{alg: Gaussian diffusion}}
\begin{algorithmic}[1]
\State Sample $t \sim \mathrm{Unif}(0, 1)$
\State \textbf{Sample noisy $x_t$:}
\State Set $x_t=e^{-\tau_t}\mathrm{emb}( x_0)+\sqrt{1-e^{-2\tau_t}}N(0, I)$
\State \textbf{Predict de-noised $x_0$:}
\State Predict $\tilde{x}_0 = q_\theta(x_0|x_t, t)$
\State \textbf{Estimate ELBO:}
\State $L = \frac{\dot\tau_t e^{-2\tau_t}}{(1-e^{-2\tau_t})^2}\Vert \mathrm{emb}( x_0)-\mathrm{emb}( \tilde x_0)\Vert^2\vphantom{x^Te^{\tau_t\mathcal L}}$
\State \vphantom{$L = \sum\limits_{b\neq x_t}\mathcal L_{b\to x_t} \dot\tau_t\mathbb{D}\left(\hat w(x_0)_{bb'}||\hat w(\tilde x_0)_{bb'}\right)$}
\end{algorithmic}
\end{algorithm}
\end{minipage}

\textit{Likelihood comparison}
We would like to compare the ELBOs $\mathbb E[L]$ of discrete and Gaussian diffusion, but the later are infinity due to a singularity as $t$ becomes small\footnote{
To see this, note at initialization $\Vert \mathrm{emb}( x_0)-\mathrm{emb}( \tilde x_0)\Vert^2$ is roughly a constant, and for the classical choice $\tau_t=-\frac 1 2\log(1-t)$, the square error in Alg.~\ref{alg: Gaussian diffusion} is weighted by $\frac{1}{2t^2}$, so the loss is $\gtrsim\int_0^1t^{-2}dt=\infty;$
a different choice of $\tau_t$ only acts as a change-of-variables, and therefore cannot make the loss finite.
}.
Practitioners must therefore choose a minimum $t_{\min}$\footnote{
Some discrete diffusion models also have a singularity at $t\to 0^+$, requiring one to specify a $t_{\min}$~\citep{Campbell2022-zm, Lou2023-vm}.
This is not the case for ``schedule-conditioned'' models, including masking, partially explaining its popularity~\citep{Amin2025-ag, Shi2024-fs}.
}.
Formally this is equivalent to estimating an ELBO for $\log p(x_{t_{\min}})$ instead of $\log p(x_0).$
However, $p(x_{t_{\min}})$ is a continuous density, fundamentally a different object than the probability of a discrete object $p(x_0)$.
Paradoxically, the values of the ELBOs of the two models are often close suggesting they may nevertheless be formally comparable.

\textit{Hyperparameter comparison}
Discrete and Gaussian diffusion models are specified by hyperparameters $\mathcal L$ and $\mathrm{emb}$ with vastly different interpretations:
a matrix whose entry $\mathcal L_{b_1\to b_2}$ describes the rate at which $b_1$ mutates to $b_2$, versus an embedding function $\mathrm{emb}$ that takes the alphabet into Euclidean space $\mathbb R^r$ for some $r$ (we write $\mathrm{emb}(\tilde x_0)$ as shorthand for $\sum_b\tilde x_{0, b}\mathrm{emb}(b)$).

\paragraph{Stability of simplex diffusion}
Below we'll also discuss simplicial diffusion which in principle combines the combines the strengths of discrete and Gaussian diffusion.
In practice, it is numerically unstable and slow as \textbf{Sample noisy $x_t$} involves ``sampling from Jacobi diffusion processes [which] is more expensive than commonly used SDEs'', and \textbf{Estimate ELBO} involves a calculation which ``at very small $t$ tends to become very large and cause numerical issues'' \citep{Avdeyev2023-mv}.

\paragraph{Practical unification}
Currently, practitioners must commit to a $q_\theta(x_0\mid x_t, t)$ trained on one these three modalities before training, restricting their access to downstream algorithms.
Ideally they could avoid making this choice.

\section{Unifying discrete and Gaussian diffusion}

To build the infrastructure for comparing domains for discrete diffusion, we unify discrete and Gaussian diffusion in a broader framework.
Our results lead to better understanding of loss and hyperparameter comparisons.
In the following section we extend our framework to simplicial diffusion.

\subsection{Unification result}

Our idea is to represent each dimension of a sequence with $\zeta$ copies to get a \textit{sequence of sequences}.
$$\text{ex. for}\ \zeta=4, x_0=\texttt{A|C|C|T}\text{ is represented as } \texttt{AAAA|CCCC|CCCC|TTTT}.$$
Then each letter in each sequence is evolved by the mutation matrix $\mathcal L$.
When $\zeta=1$ we get discrete diffusion and we show that as $\zeta\to\infty$ we get Gaussian diffusion.
Below we discuss the case where $x_0$ is a single letter / token, which can naturally be extended to a multi-dimensional diffusion model.

\begin{wrapfigure}{r}{0.51\textwidth}
\vspace{-0.3cm}
\begin{minipage}{\linewidth}
\begin{algorithm}[H]
\caption{ELBO for $\zeta$ discrete diffusion\label{alg: zeta arb}}
\begin{algorithmic}[1]
\State Sample $t \sim \mathrm{Unif}(0, 1)$
\State \textbf{Sample noisy $x_t$:}
\State Sample $ \vec x_t\sim \textcolor{blue}{\mathrm{Multinomial}}(\textcolor{blue}{\zeta}, \vec x_0^Te^{\tau_t\mathcal L})\textcolor{blue}{/\zeta}$
\State \textbf{Predict de-noised $x_0$:}
\State Predict $\tilde{x}_0 = q_\theta(x_0\mid \vec x_t, t)$
\State \textbf{Estimate ELBO:}
\State $\hat w(b) = (\vec b^Te^{\tau_t\mathcal L})(1/\vec b^Te^{\tau_t\mathcal L})^{T}$\vphantom{$L = \frac{\dot\tau_t}{2\tau_t^{2}}\Vert \textcolor{blue}{P_1}x_0-\textcolor{blue}{P_1}\tilde x_0\Vert^2\vphantom{X^Te^{\tau_t\mathcal L}}$}
\State $L =\! \sum\limits_{b\neq b'}\mathcal L_{b\to b'} \dot\tau_t\textcolor{blue}{ \zeta \vec x_{tb'}} \mathbb{D}\left(\hat w(x_0)_{bb'}||\hat w(\tilde x_0)_{bb'}\right)$
\end{algorithmic}
\end{algorithm}
\end{minipage}
\end{wrapfigure}
\paragraph{$\vec x_t$ on the simplex}
We will ultimately arrive at a Gaussian limit in Euclidean space, but we first represent $x_t$ on the simplex.
Above, $x_t$ was one of $B$ tokens; now it's one of $B^{\zeta}$ sequences of $B$ tokens $x_t=x_t^{(1)}\cdots x_t^{(\zeta)}$.
It can be generated by sampling each $x_t^{(z)}\sim \mathrm{Categorical}( \vec x_0^Te^{\tau_t\mathcal L}).$
In App.~\ref{app: mid zeta proof} we note that the loss and target $p(x_0\mid x_t, t)$ \textit{do not depend on the order} of $x_t$.
Therefore we can represent $x_t$ as a normalized vector of counts $\vec x_{t, b}=\#\{b\text{ in }x_t\}/\zeta$.
In App.~\ref{app: mid zeta proof} we derive the ELBO, giving Alg.~\ref{alg: zeta arb} -- differences to discrete diffusion in Alg.~\ref{alg: discrete diffusion} are in \textcolor{blue}{blue}.

\paragraph{Gaussian limit as $\zeta\to\infty$}
The main idea of our proof below is that as $\zeta\to \infty$, trajectories converge quickly to $\vec \pi$, the stationary distribution of $\mathcal L$, and behave like Gaussians near $\vec \pi$ because of the central limit theorem (Fig.~\ref{fig: gaussian convergence}).
As $\zeta\to\infty$ we will zoom further and further into the neighbourhood of $\pi$ where the diffusion occurs -- we move from \textit{diffusion on the simplex} to \textit{diffusion in Euclidean space}.
Our proof extends previous results in one-dimension~\citep{Stone1963-ui}, but uses more modern machinery;
    interestingly, we see that in the multi-dimensional case, the relevant Gaussian diffusion occurs in a subspace determined by the first eigenspace of $\mathcal L$.
\footnote{This is analogous to asymptotic methods that zoom into a point in a bounded space to get a limit in its unbounded tangent plane (ex. chapter 20 of \citet{Van_der_Vaart1998-qy})}.
\begin{theorem}\label{thm: gaussian}
    (Formal statement and proof in App.~\ref{app: gaussian proof})
    Call $0>-\lambda_1>-\lambda_2>\dots$ the eigenvalues of $\mathcal L$ and $P_1$ the projection onto the left eigenspace corresponding to $\lambda_1$.
    Without loss of generality, assume $\lambda_1=1$\footnote{
    This assumption is for convenience. Rescale $\mathcal L^{\mathrm{new}}=\frac 1 {\lambda_1}\mathcal L$ and $\tau_t^{\mathrm{new}}=\lambda_1\tau_t$ to get the same diffusion.}.
    For each $\zeta$ pick time dilation $\tau^\zeta_t=\frac 1 {2}\log\left(\zeta e^{-2\tau_t}-\zeta+1\right)$ and rescale $\vec x_t^\zeta = \sqrt{\zeta-(\zeta-1)e^{2\tau_t}}(\vec{x_t}-\vec\pi)/\sqrt{\vec\pi}$.
    Define the embedding into $\mathbb R^{\mathrm{rank}(P_1)}$, $Q_1=\mathfrak j_1(\tilde Q_1\tilde Q_1^T)^{-1/2}\tilde Q_1$ where $\tilde Q_1=\diag(\vec\pi)^{-1/2}P_1\diag(\vec\pi)^{1/2}$ and $\mathfrak j_1$ is any isometry from $\mathrm{Im}(\tilde Q_1)\to\mathbb R^{\mathrm{rank}(P_1)}$.
    
    \textbf{When $\zeta=1$ we get discrete diffusion}: $\tau^\zeta_t=\tau_t$ and $\vec x_t^\zeta$ is only linearly transformed $(\vec x_t-\vec\pi)/\sqrt{\vec\pi}.$

    \textbf{When $\zeta\to\infty$, we get Gaussian diffusion in the first eigenspace}:
    \begin{itemize}
    \item Only the first eigenspace has signal: the component of $x_t^\zeta$ in $\mathrm{Ker}Q_1$ becomes independent of $x_0$.
    \item The paths $(Q_1\vec x_t^\zeta)_{t\in(0, 1)}$ converge in distribution to paths from Gaussian diffusion with time dilation $\tau_t$ and embedding $\mathrm{emb}(x_0)=Q_1(\vec x_0/\sqrt{\vec\pi})$.
    \item The ELBO in Alg.~\ref{alg: zeta arb} converges to the ELBO in Alg.~\ref{alg: Gaussian diffusion}.
    \end{itemize}
\end{theorem}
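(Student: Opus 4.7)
The plan is to work directly with the marginal law of $\vec x_t$ under the $\zeta$-diffusion (a rescaled multinomial) together with the spectral decomposition $\vec x_0^T e^{\tau \mathcal L} = \vec\pi^T + e^{-\tau}\vec x_0^T P_1 + \sum_{i \geq 2} e^{-\lambda_i \tau}\vec x_0^T P_i$, and then lift marginal convergence to path convergence via a classical Markov-chain-to-diffusion argument. The choices of $\tau^\zeta_t$ and the rescaling factor are engineered to satisfy two key algebraic identities in the limit: the product of the rescaling with $e^{-\tau^\zeta_t}$ equals $e^{-\tau_t}$, and the rescaling squared divided by $\zeta$ equals $1-e^{-2\tau_t}$. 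Once these are verified, the leading first-eigenspace contribution in the mean lines up with $e^{-\tau_t}\mathrm{emb}(x_0)$, and the multinomial covariance $(\diag(\vec\pi)-\vec\pi\vec\pi^T)/\zeta$, after $1/\sqrt{\vec\pi}$ conjugation, becomes $(1-e^{-2\tau_t})$ times the projection $I-\sqrt{\vec\pi}\sqrt{\vec\pi}^T$, matching the Gaussian-diffusion mean and variance.

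With those identities in hand, each bullet falls out. For kernel independence, contributions from higher eigenspaces decay as $e^{-(\lambda_i-1)\tau^\zeta_t}\to 0$ (using $\lambda_i>\lambda_1=1$ and $\tau^\zeta_t\to\infty$), so any $x_0$-dependence in the $\mathrm{Ker}(Q_1)$ component vanishes, leaving only universal CLT noise. For marginal Gaussianity, a multivariate CLT for multinomial counts with rate converging to $\vec\pi$ gives an isotropic Gaussian in the complement of $\sqrt{\vec\pi}$; restricting to the image of $Q_1$ yields the required $\mathcal N(e^{-\tau_t}\mathrm{emb}(x_0),(1-e^{-2\tau_t})I)$. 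For path-level convergence I would compute the infinitesimal generator of the rescaled chain, verify pointwise convergence on a core of smooth test functions to the Ornstein--Uhlenbeck generator on the first-eigenspace subspace, establish tightness via martingale bounds on the bounded jump sizes, and invoke an Ethier--Kurtz / Stroock--Varadhan characterization to upgrade finite-dimensional convergence to weak convergence in Skorokhod space---a multivariate extension of Stone 1963.

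For the ELBO bullet, I would Taylor-expand the Poisson KL as $\mathbb D(\lambda_1\|\lambda_2)\approx(\lambda_1-\lambda_2)^2/(2\lambda_1)$, expand $\hat w(x_0)_{bb'}-\hat w(\tilde x_0)_{bb'}$ via the spectral decomposition so its leading piece is $e^{-\tau^\zeta_t}$ times a first-eigenspace expression in $\vec x_0-\tilde{\vec x}_0$, square it, use the symmetrization built into $\tilde Q_1=\diag(\vec\pi)^{-1/2}P_1\diag(\vec\pi)^{1/2}$ to collapse the double sum $\sum_{b\neq b'}\mathcal L_{b\to b'}(\cdot)$ into $\|\mathrm{emb}(x_0)-\mathrm{emb}(\tilde x_0)\|^2$, concentrate $\zeta\vec x_{tb'}$ at $\zeta\pi_{b'}$ via the CLT, and apply the chain rule to convert $\dot\tau^\zeta_t$ into $\dot\tau_t$; the prefactors combine to yield the Gaussian weight $\dot\tau_t e^{-2\tau_t}/(1-e^{-2\tau_t})^2$.

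The hard part will be the ELBO convergence: it requires simultaneously tracking three limits ($\zeta\to\infty$, the spectral decay of $\mathcal L$, and the reparametrization $\tau^\zeta_t\leftrightarrow\tau_t$), and the delicate algebraic step is the collapse of the $\mathcal L_{b\to b'}$-weighted double sum---against the symmetrized projection $\tilde Q_1$---into a clean quadratic form in $\mathrm{emb}(x_0)-\mathrm{emb}(\tilde x_0)$. The path convergence is also nontrivial because one must verify that the limit generator's reduction from the full simplex to the first-eigenspace subspace is compatible with the $Q_1$ projection, but once marginal convergence and the generator computation are established, this reduction is structurally classical.
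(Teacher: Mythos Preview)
Your proposal is correct in outline and matches the paper's proof closely for the ELBO convergence: the paper does exactly the Taylor expansion of $\mathbb D$, the spectral decomposition of $e^{\tau_t^\zeta\mathcal L}$, the concentration $\vec x_{t,b}\to\pi_b$, and the chain-rule conversion of $\dot\tau^\zeta_t$ that you describe. The ``delicate algebraic step'' you flag is precisely what the paper isolates as a matrix $\Sigma=-\tfrac12\diag(\sqrt{\vec\pi})\mathcal L\diag(1/\sqrt{\vec\pi})-\tfrac12(\cdot)^T$ and then shows $\tilde Q_1^T\Sigma\tilde Q_1=Q_1^TQ_1$ using $\tilde Q_1^T\diag(\sqrt{\vec\pi})\mathcal L\diag(1/\sqrt{\vec\pi})=-\tilde Q_1^T$.

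The one genuine methodological difference is in the path-convergence argument. You propose to compute the infinitesimal generator of the rescaled chain and verify convergence to the OU generator on a core, then invoke Ethier--Kurtz/Stroock--Varadhan. The paper instead avoids generators entirely: it writes down the marginal law $\vec x_t^\zeta$ (a rescaled multinomial) and the conditional law $\vec x_t^\zeta\mid\vec x_s^\zeta$ (a rescaled sum of independent multinomials) explicitly, applies a Berry--Esseen-type CLT to each to get the OU marginal and transition Gaussians, and checks tightness via a direct second-moment bound $\mathbb E\|\vec x_t^\zeta-\vec x_s^\zeta\|^2\lesssim |t-s|$. This is packaged as a short custom lemma (marginals + locally-uniform conditionals + moment tightness $\Rightarrow$ path convergence). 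Their route is arguably simpler here because the transition kernel is explicit and no generator computation is needed; your route would additionally have to contend with the fact that the rescaling factor $\sqrt{\zeta-(\zeta-1)e^{-2\tau_t}}$ is time-dependent, making the rescaled chain time-inhomogeneous and requiring uniform-in-$t$ control of the generator convergence. Both approaches are valid, but the paper's is more economical for this particular process.
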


\begin{wrapfigure}{l}{0.5\linewidth}
    \centering
    \vspace{-0.5cm}
    \includegraphics[width=1\linewidth]{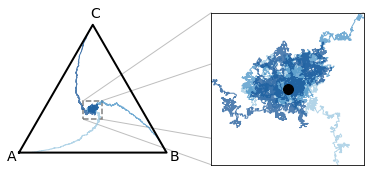}
    \caption{\textbf{Discrete diffusion with a large population converges to Gaussian diffusion.}
    With $\zeta=1000$, we show example trajectories $(\vec x_t)_t$ that converge to approximate Gaussians near $\vec \pi$.
    \vspace{-1cm}
  }
    \label{fig: gaussian convergence}
\end{wrapfigure}\textit{Proof idea:}
As $\zeta\to\infty$, by the law of large numbers, $\vec x_t$ approaches $\vec x_0^Te^{\tau_t\mathcal L}$ which itself goes to the stationary distribution of $\mathcal L$.
We can therefore decompose
\begin{equation*}
\vec x_t-\vec \pi=  \underbrace{\vec x_0^Te^{\tau_t^\zeta\mathcal L}-\vec \pi}_{\text{signal}}+\underbrace{\vec x_t - \vec x_0^Te^{\tau_t^\zeta\mathcal L}}_{\text{noise}}.
\end{equation*}
The ``noise'' term is $\vec x_t-\mathbb E\vec x_t$.
Since $x_t$ is an average of $\zeta$ samples, by the central limit theorem, it is approximately Gaussian with scale $\zeta^{-1/2}$ and independent of $x_0$.
The ``signal'' term therefore is what allows us to predict $x_0$.

The only relevant behaviour is that of the slowest-decaying eigenspaces of $\mathcal L$: the top eigen-space represents the convergence to $\vec \pi$ and cancels with $-\vec \pi$, the next one is $P_1$ with eigenvalue $-1$, and all others vanish quickly.
Therefore the signal is approximately $e^{-\tau_t^\zeta}P_1\vec x_0$.
This means
$$\vec x_t-\vec \pi\approx e^{-\tau_t^\zeta}P_1\vec x_0 + \frac{1}{\sqrt{\zeta}} \mathcal{N}(0, \Sigma)\text{ for some }\Sigma.$$
Finally, choosing the right scaling and $\tau_t^\zeta$ gives us Gaussian diffusion.
Most of the formal proof involves checking regularity conditions.
\hfill$\square$
\subsection{Application: Understanding comparisons of losses and hyperparameters}\label{sec: solve theory}
\begin{wrapfigure}{r}{0.25\linewidth}
    \centering
    \vspace{-0.4cm}
    \includegraphics[width=1\linewidth]{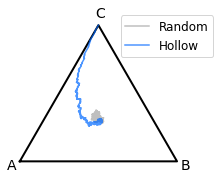}
    \caption{\textbf{The hollow parameterization leads to realistic reverse path samples.} $\zeta=300$.
    \vspace{-0.4cm}
}
    \label{fig: hollow p}
\end{wrapfigure}\paragraph{Loss comparison}
Thm.~\ref{thm: gaussian} suggests that there is virtually no difference to training a discrete diffusion model with $\zeta=10^{100}$ and training Gaussian diffusion with Alg.~\ref{alg: Gaussian diffusion} on a computer, suggesting their ELBOs are comparable.
Yet the limiting Gaussian ELBO is infinite!
Fig.~\ref{fig: gaussian convergence} suggests why:
paths from $\vec x_t$ have two phases, a nearly deterministic phase at low $t$ (Fig \ref{fig: gaussian convergence} left), and then a random phase (Fig \ref{fig: gaussian convergence} right).
Diffusion models reversing these paths should therefore go through a random phase, until $p(x_0\mid \vec x_t, t)$ becomes obvious, and then trace a deterministic path back to $x_0$.
However, at initialization, $x_0$ is ``never obvious'' to the neural network $q_\theta(x_0\mid \vec x_t, t)$, leading to mismatches to the deterministic paths in samples (Fig.~\ref{fig: hollow p} ``Random").
As $\zeta$ gets larger, the paths get more deterministic, \textbf{causing the singularity in the limit}.

The practical solution is simple -- weight the output of the neural network by the evidence for each $x_0$, $q_\theta(x_0\mid x_t, t)\propto p( x_t\mid x_0, t)q_\theta(x_0)$ where $p(x_t\mid x_0, t)$ ``automatically handles'' deciding when $x_0$ is obvious (Fig.~\ref{fig: hollow p} ``Hollow'').
This was suggested in the appendix of~\citet{Austin2021-dg} as way to improve discrete diffusion models, but becomes important here as a way to build new Gaussian diffusion models with formally comparable likelihoods\footnote{Note the hollow parametrization is specific to \textit{discrete data} where there are only finitely many possible $x_0$.}.
\citet{Amin2025-ag} showed that in higher dimensions this becomes equivalent to using the \textbf{``hollow'' predictor}\footnote{This does not require a change of architecture: the network can take in $x_t$ but must learn to disregard $x^d_t$.} $q_\theta(x_0^d\mid x_t, t)\propto p(x_t^d\mid x_0^d, t)q_\theta(x_0^d|x_t^{-d}, t)$ where $x_t^{-d}$ is all positions except $d$.
In App.~\ref{app: hollow proof} we formally prove that the hollow parametrization removes the singularity of the ELBO.

\begin{wrapfigure}{l}{0.3\linewidth}
    \vspace{-0.3cm}
    \centering
    \includegraphics[width=1\linewidth]{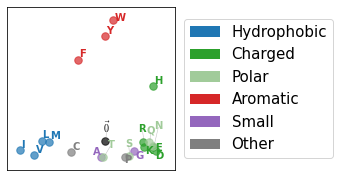}
    \caption{\textbf{$\mathrm{emb}$ of amino acids from BLOSUM $\mathcal L$.}
    $\mathrm{emb}(x_0)$ from Thm.~\ref{thm: gaussian} for $\mathcal L$ from~\citet{Amin2025-ag}.
    \vspace{-0.2cm}}
    \label{fig: blosum}
\end{wrapfigure}\paragraph{Hyperparameter comparison}
Thm.~\ref{thm: gaussian} gives us a formula for $\mathrm{emb}$ determined by the slowest-decaying directions in $\mathcal L$.
App.~\ref{app: rep to inf gen} also shows that every $\mathrm{emb}$ can be induced from some $\mathcal L$.
Remarkably, this connection accommodates embeddings in different dimensions $\mathbb R^r$: $r$ is determined by the dimension of the dominant eigenspace of $\mathcal L$.
In Fig.~\ref{fig: blosum} we show $\mathrm{emb}$ for the BLOSUM stochastic processes for amino acids, and see it clusters similar amino acids together.
The practical implications are 
(1) one can sanity-check their designed $\mathcal L$ by plotting its induced embeddings, and 
(2) discrete diffusion offers a richer design space, as one can specify all the interacting eigenspaces of $\mathcal L$ rather than just the dominant one, $\mathrm{emb}$.

\clearpage
\section{Unifying simplicial diffusion}\label{sec: wf}
Now we add simplicial diffusion to our unification of discrete and Gaussian diffusion.
Proving the equivalence of the forward process: we add reproduction to our population of $\zeta$ letters and simply refer to the well known result of~\citet{Kimura1955-aj} from mathematical genetics.
We also derive new results on the limit of the ELBO and explore our connection with theory of mathematical genetics; this will allow us to address the instabilities that plague simplicial diffusion models.
\subsection{Unification results}
\paragraph{The Wright-Fisher model}
We now allow our population of $\zeta$ to reproduce.
The population is swapped with a new generation at rate $\zeta$ (that is, a new generation occurs at $\Delta\tau\sim\mathrm{Exp}(1)/\zeta$) and at each generation we create $\zeta$ ``children'' which pick a parent uniformly at random.
Between generations, individuals also mutate according $\mathcal L$ (Fig.~\ref{fig:unificationwf}).
We now ask what happens when $\zeta\to\infty$.   

\paragraph{The limit of $p((x_t)_t)$} \citet{Kimura1955-aj} was the first to derive the $\zeta\to\infty$ limit of the stochastic process.
Unlike the mutation-only case which zooms in to $\vec\pi$, this limiting distribution has paths that travel throughout the simplex (Fig.~\ref{fig:unificationmain}).
This limit, often itself called ``Wright-Fisher diffusion'' is exactly the forward process in simplicial diffusion~\citep{Avdeyev2023-mv}.
Details are in App.~\ref{app: wf proof fwd}.
One biologically reasonable assumption past works make is a parent-independent mutation rate matrix, that is, $\mathcal L=\psi\times(\mathbbm{1}\vec\pi^T-I)$ for stationary distribution $\vec\pi$ and mutation rate $\psi>0$.
This does not restrict the design space of simplicial diffusion, which is specified by an intensity parameter $\psi$ and stationary distribution $\vec\pi$, so we make the same assumption.  

\paragraph{The limit of the ELBO} We derive the limit of the discrete diffusion ELBO.
Remarkably, we get an objective that matches ``score functions'' $\vec s$ like that heuristically derived in~\citet{Avdeyev2023-mv}.
It is also mathematically equivalent to the expression in Eqn 27 of~\citet{Benton2024-br} with two differences: (1) we avoid taking the derivative of the neural network, and (2) their expression differs by an unknown constant from the ELBO, while ours is directly comparable with ELBOs from other models.
\begin{theorem}\label{thm: simplex loss}
    (Proof in App.~\ref{app: wf proof elbo})
    As $\zeta\to\infty$, the discrete diffusion objective in Alg.~\ref{alg: discrete diffusion} converges to the quantity in line 9 of Alg.~\ref{alg: simplicial diffusion}.
\end{theorem}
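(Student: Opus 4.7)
The plan is to pass to the $\zeta\to\infty$ limit in the per-transition Poisson-KL ELBO of Alg.~\ref{alg: zeta arb} (which, with reproduction added, is the finite-$\zeta$ Wright-Fisher objective) by a second-order Taylor expansion, and to recognise the resulting quadratic form as the score-matching ELBO of Wright-Fisher diffusion on the simplex. First I would fix a time $t$ and invoke the \citet{Kimura1955-aj} convergence (used just above for the forward process) to identify the limiting diffusion tensor $\Sigma(\vec x_t)_{bb'}=\vec x_{tb}(\delta_{bb'}-\vec x_{tb'})$. The prefactor $\zeta\vec x_{tb'}$ in the ELBO is then $O(\zeta)$, which must be compensated by an $O(\zeta^{-1})$ contribution from the Poisson-KL term for the sum to admit a finite limit; the expansion predicts exactly this behaviour.

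The key analytic identity is
$$\mathbb D(\lambda_1\Vert\lambda_2) = \frac{(\lambda_1-\lambda_2)^2}{2\lambda_2} + O\!\left(\frac{(\lambda_1-\lambda_2)^3}{\lambda_2^2}\right),$$
applied with $\lambda_i=\hat w(\cdot)_{bb'}$ evaluated at $x_0$ and $\tilde x_0$. Both arguments share the same leading $e^{\tau_t\mathcal L}$ kernel and differ only through the initial condition, so the difference $\hat w(x_0)_{bb'}-\hat w(\tilde x_0)_{bb'}$ is, after appropriate rescaling, a linear functional of the prediction error $\vec x_0-\tilde{\vec x}_0$. Using the backward Kolmogorov identity for the mutation generator $\mathcal L$, this linear functional can be rewritten as the $(b,b')$ component of $\Sigma(\vec x_t)\bigl(\vec s(\vec x_t,x_0)-\vec s_\theta(\vec x_t,t)\bigr)$, where $\vec s(\vec x_t,x_0)=\nabla_{\vec x_t}\log p(\vec x_t\mid x_0)$ is the Wright-Fisher score. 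Summing over $b\neq b'$ with weights $\mathcal L_{b\to b'}$ collapses the Poisson-KL sum into the Fisher-information-type quadratic $\langle \vec s-\vec s_\theta,\Sigma(\vec x_t)(\vec s-\vec s_\theta)\rangle$ against the WF diffusion tensor, which is exactly the integrand that line 9 of Alg.~\ref{alg: simplicial diffusion} prescribes.

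Finally, to upgrade pointwise convergence of the integrand to convergence of the expected ELBO, I would use classical moment bounds for WF transition densities with strictly positive mutation rate $\mathcal L=\psi(\mathbbm{1}\vec\pi^T-I)$, which imply uniform-in-$\zeta$ concentration of $\vec x_t$ around $\vec x_0^T e^{\tau_t\mathcal L}$ and away from the simplex boundary. The main obstacle is precisely at that boundary: as $\vec x_{tb'}\to 0$ the Poisson approximation degrades and the score $\vec s$ blows up, but the two singularities should cancel against the factor $\vec x_{tb'}$ inside $\Sigma$. Establishing this cancellation \emph{uniformly in $\zeta$}, rather than only in the $\zeta=\infty$ limit, is the technical crux; I would draw on entry-time and hitting-probability estimates for WF diffusion with positive mutation from the mathematical genetics literature to dominate the Taylor remainder and justify exchanging the limit with the expectation.
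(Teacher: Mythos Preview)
Your plan has a genuine gap: you are expanding the wrong part of the objective. You take as your starting point the mutation-only structure of Alg.~\ref{alg: zeta arb}, with its sum over single-letter transitions $b\neq b'$ weighted by $\mathcal L_{b\to b'}$ and prefactor $\zeta\vec x_{tb'}$. But once reproduction is added, the finite-$\zeta$ generator is $\zeta\mathcal L^{\mathrm{wf}}+\mathcal L^{\mathrm{mut}}$, where $\mathcal L^{\mathrm{wf}}_{\vec x'\to\vec x}=\mathrm{Mult}(\zeta,\vec x')(\zeta\vec x)$ connects \emph{every} lattice point, not just nearest neighbours. The paper's proof shows that the mutation piece you expand is in fact negligible: there are $O(\zeta)$ single-step transitions, each contributing $\mathbb D=O(\zeta^{-2})$ after the Taylor expansion, so $\sum_{\vec x'}\mathcal L^{\mathrm{mut}}_{\vec x'\to\vec x}\mathbb D(\vec x')=o(1)$. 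The entire limiting ELBO comes from the reproduction term $\zeta\mathcal L^{\mathrm{wf}}$, whose second-moment matrix $\sum_{\vec x'}\zeta\mathcal L^{\mathrm{wf}}_{\vec x'\to\vec x}(\vec x'-\vec v)(\vec x'-\vec v)^T$ converges (via a Stirling/Gaussian approximation of the multinomial) to $\mathrm{diag}(\vec v)-\vec v\vec v^T$. Your route of ``summing over $b\neq b'$ with weights $\mathcal L_{b\to b'}$'' cannot produce this tensor.

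A second, related issue: your identification of the likelihood ratios with $\hat w(x_0)_{bb'}$ built from the mutation semigroup $e^{\tau_t\mathcal L}$ is incorrect once reproduction is present. The arguments of $\mathbb D$ are now the full Wright-Fisher ratios $p(\vec x_t^{\prime\zeta}\mid x_0)/p(\vec x_t^{\zeta}\mid x_0)$, and the paper needs a separate uniform-convergence step (its Part~2, using the Dirichlet--multinomial mixture representation from Lemma~\ref{lem: finite wf}) to show these ratios are within $O(\zeta^{-1})$ of the continuous WF density ratios, so that the Taylor expansion yields the \emph{continuous} score difference $\vec s(\vec v\mid x_0)-\vec s(\vec v\mid\tilde x_0)$. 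Finally, the theorem as stated (and as proved in the paper) is pointwise convergence for $\vec v$ in the interior of the simplex; your last paragraph on dominating the boundary to exchange limit and expectation addresses a stronger claim than what is asserted.
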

The main idea of the proof is an application of a Taylor expansion and Stirling's approximation; the main challenge is handling of behaviour at the boundaries of the simplex and regularity conditions.

\subsection{Application: fast and stable simplicial diffusion}\label{sec: application wf improved}
We have unified simplicial diffusion with discrete and Gaussian diffusion, in particular allowing likelihood comparison, which will be crucial in the following section.
Our unification also immediately suggests a connection to the mathematical genetics literature.
We now apply the solutions from that literature to improve simplicial diffusion models.
Many of the formulas are standard but long
-- we save their statement and experimental validation to App.~\ref{app: wf details}.

\begin{minipage}{\linewidth}
\begin{algorithm}[H]
\caption{ELBO for simplicial diffusion\label{alg: simplicial diffusion}. Our changes to \citet{Avdeyev2023-mv} are coloured.}
\begin{algorithmic}[1]
\State Sample $t \sim \mathrm{Unif}(0, 1)$
\State \textbf{Sample noisy $x_t$:}
\State \textcolor{blue}{Sample $m\sim A(\psi, \tau_t)$ with Alg.~\ref{alg: jenkins general}}; \textcolor[RGB]{255,30,77}{if $\tau_t<0.05$, use Alg.~\ref{alg: jenkins low t}}
\State \textcolor{blue}{Sample $\vec x_t\sim\mathrm{Dirichlet}(\psi\vec\pi +m\vec x_0).$}
\State \textbf{Predict de-noised $x_0$:}
\State Predict $\tilde{x}_0 = q_\theta(x_0\mid x_t, t)$
\State \textbf{Estimate ELBO:}
\State Compute $\vec s(\vec x_t\mid x_0, t)=\nabla_{x_t}\log p(x_t| x_0, t)$ with Eqn.~\ref{eqn:WF score}
\State $L= \textcolor[RGB]{34,139,34}{\frac{\dot\tau_t}{2}}\|\vec s(\vec x_t\mid x_0, t)-\vec s(\tilde x_t\mid x_0, t)\|^2_{\textcolor[RGB]{34,139,34}{\diag(\vec x_t)-\vec x_t\vec x_t^T}}$ \textcolor[RGB]{34,139,34}{(this is an ELBO)}; \textcolor[RGB]{255,30,77}{if $\tau_t<0.05$, use Eqn.~\ref{eqn: L bound}}
\end{algorithmic}
\end{algorithm}
\end{minipage}

\paragraph{\textcolor{blue}{Sampling noisy $x_t$}}
\citet{Avdeyev2023-mv} samples $x_t$ by costly and approximate simulation from a stochastic differential equation (SDE).
Instead, the suggestively titled paper ``Exact simulation of the Wright-Fisher diffusion'' \citep{Jenkins2017-ry} gives a fast exact formula for the marginals $x_t$.
The algorithm samples $\vec x_t$ from a Dirichlet that is centred at the stationary mutation distribution $\vec\pi$ when $m=0$ and becomes more concentrated around the signal $x_0$ when $m$ is larger.
$m$ itself is an integer sampled from a distribution $A(\psi, \tau_t)$ that represents, going back in time $\tau_t$, how many ancestors the population descend from -- it is small when $\tau_t$ is large, when everyone descended from a handful of individuals from far back in time.
\citet{Benton2024-br} proposed the same procedure, but applied it in a toy setting.

\paragraph{\textcolor[RGB]{34,139,34}{Computing the loss}} For the loss, \citet{Avdeyev2023-mv} derived a likelihood that involved calculating the derivative the predictor $q_\theta(x_0 \mid x_t, t)$ making it too expensive to train on.
They instead suggest training a heuristically motivated loss matching the vector $\vec s$ to a ground truth.
In Thm.~\ref{thm: simplex loss} we recognize this loss as an ELBO and derive the appropriate scaling ${\frac{\dot\tau_t}{2}}$ and metric ${\diag(\vec x_t)-\vec x_t\vec x_t^T}$.
\paragraph{\textcolor[RGB]{255,30,77}{Low $\hspace{0.05em}t\hspace{0.05em}$ behaviour}}
Both the simulation of $A(\psi, \tau_t)$ and the calculation of the gradients $\nabla_{x_t}\log p(x_t\mid x_0)$ involves an infinite series~\citep{Tavare1984-uz}.
Luckily the terms converge extremely fast.
This is not true however at low $t$, which is the primary cause of the instability of simplicial diffusion.
This instability is also well known in the genetics literature, with \citet{Griffiths1984-bo} emphatically stating that using the infinite series at low $t$ ``produces nonsense from a computer.''

The solution at low $t$ is to replace the series approximation, which gets worse with lower $t$, with a central limit approximation for $A(\psi, \tau_t)$~\citep{Griffiths1984-bo, Jenkins2017-ry} that improves with lower $t$;
this is analogous to how reflected diffusion models were made stable despite their own infinite series expansion with the same problem~\cite{Luo2022-ha}.
We picked the $\tau_t<0.05$ threshold as recommended by ~\citet{Jenkins2017-ry}.
In App.~\ref{app: wf details low t} we describe how to use this approximation to also stabilize the loss computation.

\begin{figure}[t]
    \centering
    \begin{subfigure}[b]{0.5\textwidth}
    \centering{
    \includegraphics[width=1.3\textwidth]{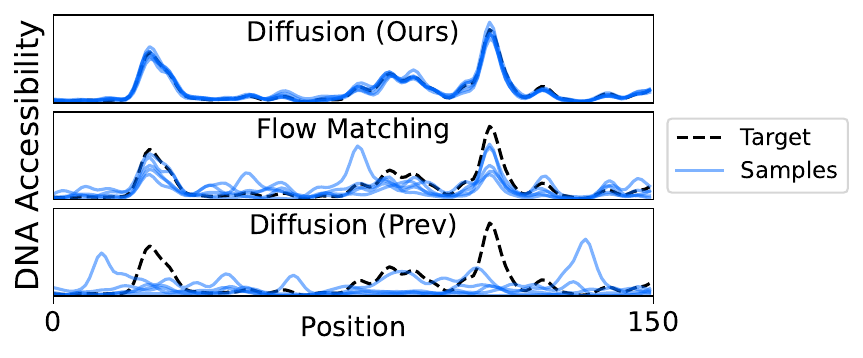}}
        \caption{Example samples}
        \captionsetup{
            justification=raggedright,
            singlelinecheck=false,
            margin={-5cm,0cm}  
        }

        \label{fig: dna examples}
    \end{subfigure}
    \hspace{2cm} 
    \begin{subfigure}[b]{0.3\textwidth}
        \centering
        \includegraphics[width=0.9\linewidth]{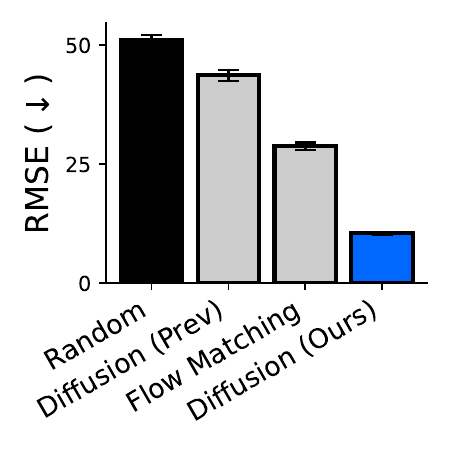}
        \caption{Average error}
        \label{fig: dna msa}
    \end{subfigure}
    \caption{\textbf{Improved simplicial diffusion performs accurate conditional DNA generation.} 
    We generate DNA samples of length 500 conditioned on accessibility with a classifier.
    \textbf{(a)} For an example target, we plot predicted accessibility profiles at the centre 150 positions of 5 example samples from each model.
    We smooth profiles with a bandwidth of 2.
    \textbf{(b)} For 1000 targets and 10 samples from each model, we plot the error between the predicted and target profiles and its standard error.
    }
    \label{fig:cond gen}
    \vspace{-0.35cm}
\end{figure}

\paragraph{State of the art DNA generation conditioned on a classifier}
Simplicial diffusion models are state of the art tools for generating DNA conditioned on high-dimensional epigenetic properties~\citep{Avdeyev2023-mv};
however they have recently been surpassed by flow-matching models \citep{Stark2024-rf}, which are more stable but sacrifice a closed-form ELBO and access to diffusion sampling algorithms.
Given our stability improvements above, we expect to be able to generate higher quality sequences than previous methods.
We fit the state of the art diffusion model ~\citep{Avdeyev2023-mv} and flow-matching model~\citep{Stark2024-rf} to DNA data ($B=4$) of length $D=500$
and generate samples conditioned on achieving target ``DNA accessibility profiles.''

First we see our model leads to a much better fit of the data.
The diffusion model from~\citet{Avdeyev2023-mv}, was only able to achieve an average ELBO of 8 nats / position (12.7 before training), while a trivial model which predict uniform letters in each position achieves 1.39.
In contrast, our model achieves an ELBO of 1.30.
In Fig.~\ref{fig:cond gen} we also see our new model generates conditional samples with profiles that much better match the target.
Experimental details are in App.~\ref{app: experiment-dets}.

\section{Practical unified diffusion models}

Our results show that discrete, Gaussian and simplicial diffusion are three views of the same process.
But which view should a practitioner choose for their particular downstream task?
Unfortunately, there is limited theoretical infrastructure we can use to answer such a question.

Instead our theory provides a practical solution: leveraging our finding that these methods have comparable likelihoods, we show through a particular parameter choice (Fig.~\ref{fig: phi ex}), one can train a single neural network that can perform diffusion on any domain at test time.
In App.~\ref{app: time invariance} we also show this parameterization will also allow us to make any diffusion model time-invariant, explaining and generalizing a celebrated property of masking diffusion.

\subsection{The sufficient statistic parameterization (SSP)}
\begin{minipage}{0.6\linewidth}

The goal of a diffusion model is to predict $x_0^d$.
To do so, one must integrate over the unseen $x_0^{-d}$ weighted by their likelihood of producing the data $x_t^{-d}$:
$$p(x_0^d\mid x_t^{-d})=\int p(x_0^d\mid x_0^{-d})dp(x_0^{-d}\mid x_t^{-d}).$$
We can summarize this ``evidence'' in the normalized vector
$\vec \phi(x_t^{d'}, t)_b\propto p(x_t^{d'}\mid t, x_0^{d'}=b)$ (Fig.~\ref{fig: phi ex}).

Some algebra shows that  $\vec \phi$'s are sufficient statistics, that is, they contain all relevant information about the diffusion process and $t$, leaving a regression task that invariant to both. 
\end{minipage}
\hfill
\begin{minipage}{0.35\linewidth}
\begin{figure}[H]    \centering
    \includegraphics[width=1\linewidth]{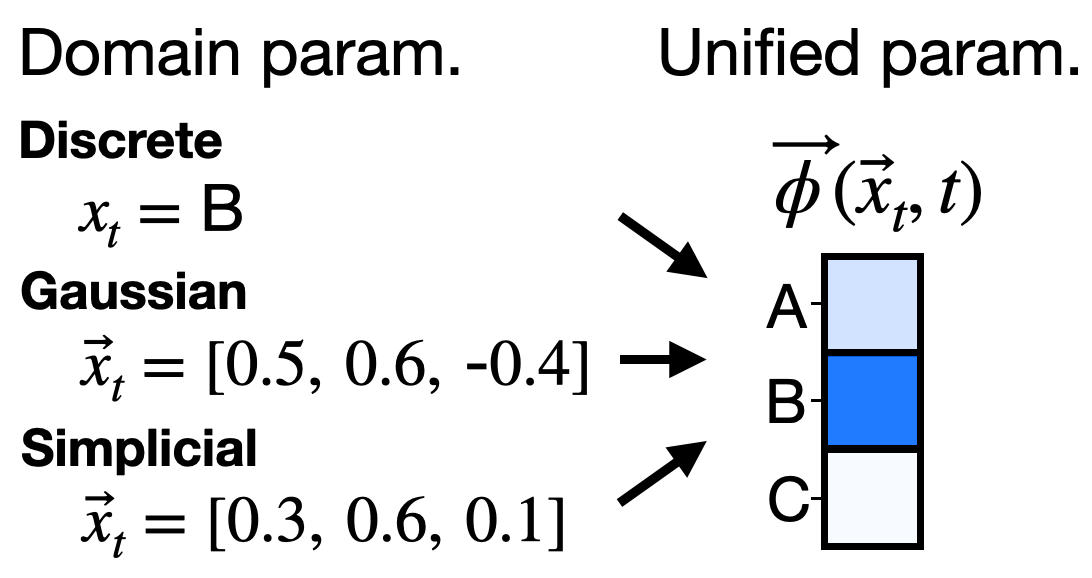}
    \caption{
    \textbf{The sufficient statistic parameterization represents $\vec x_t$ from all diffusion models in the same space.}
    }
    \label{fig: phi ex}
\end{figure}
\end{minipage}

\begin{proposition}\label{prop: ssp}
    (Proof in App.~\ref{app: ssp proof})
    There is a function $F^d$, \textbf{depending on $p(x_0)$ and not on the diffusion process or $t$,}
    such that
    $$p(x_0^d\mid x_t^{-d}, t)=F^d(\vec\phi(\vec x_t^1, t), \dots, \vec\phi(\vec x_t^D, t)).$$
\end{proposition}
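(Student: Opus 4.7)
The plan is to compute $p(x_0^d \mid x_t^{-d}, t)$ directly by Bayes' rule and watch all dependence on the diffusion forward process and on $t$ collapse into the $\vec\phi$'s. First I would expand
\begin{equation*}
p(x_0^d \mid x_t^{-d}, t) \;=\; \sum_{x_0^{-d}} p(x_0^d, x_0^{-d} \mid x_t^{-d}, t)
\;\propto\; \sum_{x_0^{-d}} p(x_0^d, x_0^{-d})\, p(x_t^{-d} \mid x_0^d, x_0^{-d}, t),
\end{equation*}
where the proportionality is in $x_0^d$. The key structural fact is that the forward process acts independently across positions, so
\begin{equation*}
p(x_t^{-d} \mid x_0, t) \;=\; \prod_{d' \ne d} p(x_t^{d'} \mid x_0^{d'}, t).
\end{equation*}

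Next I would convert each factor into a $\vec\phi$. Since $\vec\phi(x_t^{d'}, t)_b$ is obtained by normalizing $p(x_t^{d'} \mid t, x_0^{d'} = b)$ in $b$, there is a constant $C(x_t^{d'}, t) = \sum_{b'} p(x_t^{d'} \mid t, x_0^{d'} = b')$, independent of $b$, with $p(x_t^{d'} \mid x_0^{d'}, t) = C(x_t^{d'}, t)\,\vec\phi(x_t^{d'}, t)_{x_0^{d'}}$. Because $C(x_t^{d'}, t)$ does not involve any $x_0$, the product $\prod_{d' \ne d} C(x_t^{d'}, t)$ factors out of both the numerator (after summing over $x_0^{-d}$) and the normalizing denominator (after summing over $x_0^d$ as well), and hence cancels. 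This leaves
\begin{equation*}
p(x_0^d \mid x_t^{-d}, t) \;=\; \frac{\sum_{x_0^{-d}} p(x_0^d, x_0^{-d}) \prod_{d' \ne d} \vec\phi(x_t^{d'}, t)_{x_0^{d'}}}{\sum_{x_0} p(x_0) \prod_{d' \ne d} \vec\phi(x_t^{d'}, t)_{x_0^{d'}}}.
\end{equation*}
The right-hand side is a fixed function of the vectors $(\vec\phi(x_t^{1}, t), \dots, \vec\phi(x_t^{D}, t))$ whose only other ingredient is $p(x_0)$; call it $F^d$. Note that $F^d$ does not even depend on $\vec\phi^d$ since only $x_t^{-d}$ was conditioned on, but it is harmless (and matches the proposition's signature) to regard it as a function of all $D$ arguments.

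I do not anticipate a genuine obstacle: the whole argument is Bayes' rule plus position-wise independence of the forward process. The only subtlety worth checking carefully is the cancellation of the $C(x_t^{d'}, t)$ factors, i.e.\ confirming that the proportionality in the definition of $\vec\phi$ really is in $b$ alone and that its normalizer is $x_0$-free. One should also verify that the definition of $\vec\phi$ makes sense uniformly across the three diffusion types considered (discrete, Gaussian on embeddings, and simplicial), where $p(x_t^{d'} \mid x_0^{d'}, t)$ is respectively a probability mass function and a density; but since in every case it is a nonnegative function of $x_0^{d'}$ for fixed $(x_t^{d'}, t)$, normalization over the finite alphabet is well-defined and the argument goes through without modification.
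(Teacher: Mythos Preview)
Your proposal is correct and follows essentially the same route as the paper: Bayes' rule, position-wise independence of the forward process, and cancellation of the per-position normalizers $C(x_t^{d'},t)=\sum_b p(x_t^{d'}\mid x_0^{d'}=b,t)$ to leave an expression in the $\vec\phi$'s and $p(x_0)$ alone. The paper writes the result in the equivalent form $\mathbb E_{p(x_0^{-d})}\big(p(x_0^d\mid x_0^{-d})\prod_{d'\neq d}\vec\phi(x_t^{d'})_{x_0^{d'}}\big)\big/\mathbb E_{p(x_0^{-d})}\big(\prod_{d'\neq d}\vec\phi(x_t^{d'})_{x_0^{d'}}\big)$, which is exactly your ratio after marginalizing your denominator over $x_0^d$.
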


Therefore we can parametrize our neural network $q_\theta(x_0^d\mid x_t^{-d}, t)=F^d_\theta(\vec\phi(\vec x_t^1, t), \dots, \vec\phi(\vec x_t^D, t))$ for a neural network $F^d_\theta$ that tries to learn the ``universal'' $F^d$.

\begin{figure}[t]
    \centering
    \begin{subfigure}[b]{0.47\linewidth}
        \centering
        \includegraphics[width=\textwidth]{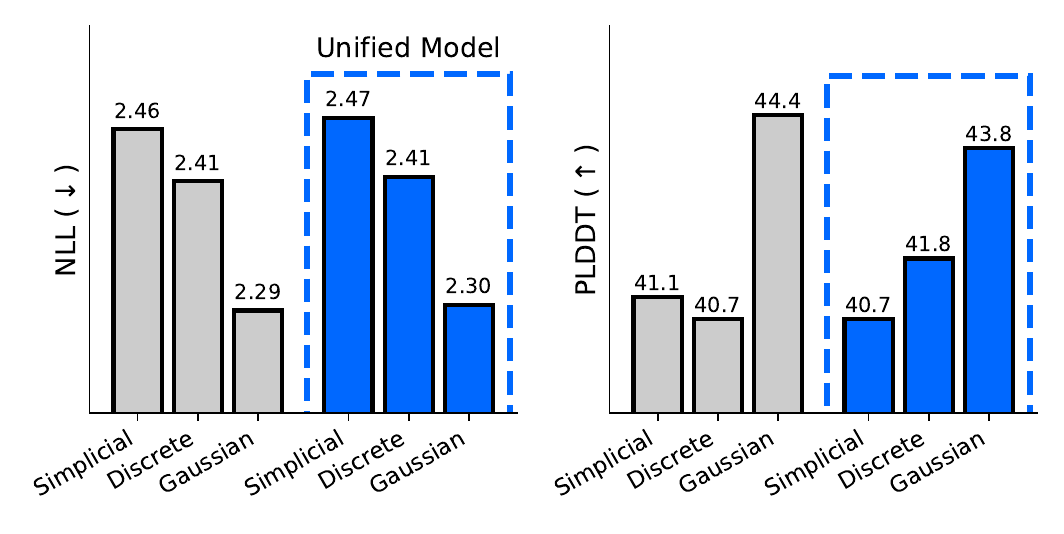}
        \caption{Protein likelihood and sample quality}
    \end{subfigure}
    \hfill
    \begin{subfigure}[b]{0.47\linewidth}
        \centering
        \includegraphics[width=\textwidth]{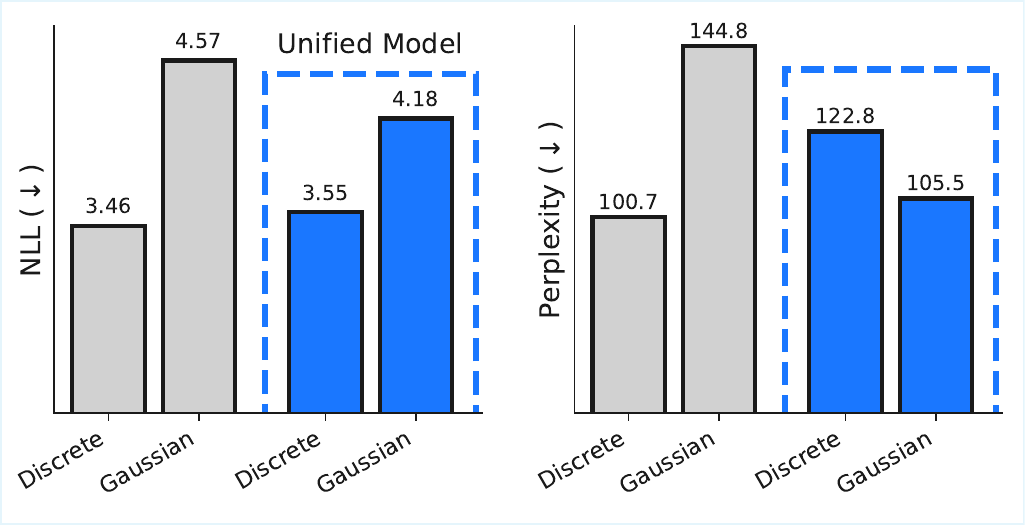}
        \caption{Language likelihood and sample quality}
    \end{subfigure}
    \caption{\textbf{The sufficient statistic parametrization enables a single model to perform competitive discrete, Gaussian, and simplicial diffusion.}
    We compare individual models for each modality with a single unified model using the SSP.
    \textbf{(a)} We train on proteins and measure sample quality by predicted protein fold-ability (pLDDT). Each model was trained for the same amount of time.
    \textbf{(b)} We train on language and measure sample quality using the perplexity of a much larger language model. Each model was trained for 33 epochs.}
    \label{fig: transfer}
\end{figure}

\subsection{Application: unified diffusion models}\label{sec:application unification}
Practitioners must commit upfront to the domain their diffusion occurs.
The SSP instead enables training a single neural network that can perform diffusion on any domain at test time: as long the target distribution $p(x_0)$ remains constant the optimum $F^d$ remains the same.
Furthermore, we've shown above that the ELBOs of each modality are comparable, so we can train $F_\theta$ by alternating minimizing the ELBO of a different modality in each batch.

We train discrete, Gaussian, and simplicial diffusion models on proteins and compare to a single model trained using the SSP which alternates between discrete, Gaussian, and simplicial training steps.
We trained our models to approach the performance of state-of-the-art protein diffusion model DPLM~\citep{Wang2024-bn} in likelihoods (2.36) and a ``foldability'' metric for samples (45.2)~\citep{Amin2025-ag}.
In Fig.~\ref{fig: transfer} we see that a single SSP model trained on proteins for 48 hours is competitive in perplexity and sample quality with three single-domain models each trained for the same amount of time.
We perform a similar experiment for discrete and Gaussian language models (simplicial diffusion models are challenging to scale to a large vocabulary size of $B\approx 3\times 10^4$) and see similar results.
We trained our language models with the same amount of training data as a state-of-the-art diffusion language model~\citep{Lou2023-vm}, matching its likelihood (SEDD uniform has an NLL of 3.70).
Experimental details are in App.~\ref{app: experiment-dets}, another downstream task is tested in App.~\ref{app: thermo experiment}, and we also repeat these results on MNIST images in App.~\ref{app: mnist}.

\section{Conclusion}
Our theoretical and practical unification developed foundations that we used to improve simplicial diffusion and avoid the need to choose a specific model at train time.
However, the theory suggests a number of other directions we have not yet explored.

Notable omissions from our presentation are reflected diffusion~\citep{Lou2023-jc}, flow matching~\citep{Campbell2024-hb}, masking diffusion~\citep{Shi2024-fs}, and diffusion with insertions and deletions~\citep{Johnson2021-gq}.
The later two can likely be easily accommodated with previous theories unifying masking and uniform diffusion on one hand~\citep{Amin2025-ag}, and substitution and insertion - deletion diffusion on the other~\citep{Johnson2021-gq}.

As well, our framework suggests new types of diffusion models ``between'' the three existing streams of diffusion which we only use as a lens for understanding existing models.
Implementing these intermediate models may be of independent practical interest. 

Finally, the SSP can be used to unify models beyond the three modalities.
For instance it can be used to train models across hyperparameter settings, or optimize hyperparameters without retraining.
In principle, the SSP can even be used to transfer a model to a modality it was not trained on.

\section*{Acknowledgements}
This work was supported in part by NSF CAREER IIS-2145492, NSF CDS\&E-
MSS 2134216, NSF HDR-2118310, BigHat Biosciences, and Capital One.

\bibliographystyle{abbrvnat}
\bibliography{bib}

\newpage
\appendix

\section{Extended related work}\label{app: related work}
We add more related work beyond those in Sec.~\ref{sec: related-work}.
\paragraph{Classical theories unifying discrete and continuous stochastic processes}
There is a long history of deriving continuous limits of discrete processes, the ``forward'' processes of diffusion models.
Groundbreaking work by \citet{Stone1963-ui} derived Gaussian diffusion as a limit to biased one-dimensional random walks.
In one of the most celebrated results in mathematical genetics, \citet{Kimura1955-aj} also derived a continuous limit of the Wright-Fisher process with non-zero reproduction. These models were originally developed to describe the stochastic fluctuations of allele frequencies in the absence of selection, also called genetic drift.
We (1) apply these results to understand and improve diffusion models, (2) also show convergence of the ELBO of diffusion models, and, to our knowledge, (3) derive a new result -- the multi-dimensional Gaussian-diffusion limit of Wright-Fisher with zero reproductions -- demonstrating previously un-characterized behaviour dependent on the eigenspace of the mutation operator.
Results (2) and (3) are what allow us to compare likelihoods and hyperparameters.

\paragraph{Parameterizations of discrete diffusion models}
In diffusion, one uses a neural network to ``denoise'' sequences; we call the choice of inputs and outputs of these neural networks the ``\textbf{parameterization}''.
A number of works suggest superficially distinct, but ultimately equivalent parameterizations \citep{Campbell2022-zm, Lou2023-vm}.
\citet{Austin2021-dg} however suggested a distinct parametrization for discrete diffusion models by scaling the output of the neural network to ``automatically'' incorporate the information about the noised token about that particular location; we call this choice the ``hollow'' parametrization for reasons discussed below.
\citet{Zheng2024-ky}, \citet{Ou2024-ms}, and \citet{Sahoo2024-uj} suggested that masking diffusion enables a special choice of ``time-invariant'' parametrization; in App.~\ref{app: time invariance} our theory shows on the contrary that every diffusion model can be made time-invariant.

\paragraph{Gaussian diffusion which appears as simplicial diffusion}
\citet{Han2022-pu, Mahabadi2023-ln, Shabalin2025-ad}, and \citet{Floto2023-mp} suggest a stable diffusion model on the simplex by applying softmax to Gaussian diffusion and using It\^{o}'s theorem.
This parameterization is stable because forward and backward diffusion can occur as Gaussian diffusion in the logit-space.
\citet{Lou2023-jc} has a similar idea, swapping the softmax for an asymmetric transformation and Gaussian diffusion with reflected Gaussian diffusion.
With these simplifications however, the process is exactly (reflected) Gaussian diffusion except the input to the neural network is transformed onto a simplex;
    in particular, it doesn't interact with the topology of the simplex.
In other words, this implements simplicial diffusion in the parametrization of the neural network, but not in the sampling or loss computation.

\paragraph{Another unification theory and a connection to evolution}
\citet{Li2025-go} looked at Gaussian diffusion with a generalized noising strategy; they noted a special case resembled masking diffusion.
However the training procedure and ELBO of this special case are distinct from standard masking diffusion~\citep{Shi2024-fs}.
\citet{zhang2024diffusion} connect diffusion with the evolution process to suggest an optimization algorithm, but do not formally establish connections with biological evolution. In contrast, our work makes this connection explicit. 

\section{Mathematical error in Sahoo et al. (2025)}\label{sec: sahoo error}

In Theorem 3.1, \citet{Sahoo2025-dt} shows that the ELBO of a discrete diffusion model is always tighter than that of a Gaussian diffusion model.
In its proof, with $w_t$ from Gaussian diffusion, $z_t=\mathrm{argmax}(w_t)$, and $x=z_0= w_0$, they state ``Since the transition $z_t\to z_s$ is Markov, we get: $q(z_s\mid w_t,z_t,x)=q(z_s\mid z_t,x)$''.
Putting aside the correctness of this statement, it is clear that the proof as stated requires the Markov property of $(z_t)_t$.

The way the Markov property is shown is as follows.
They first define a discrete diffusion model, let's call this $(\tilde z_t)_t$, such that $\tilde z_0$ comes from the data distribution and $\tilde z$ evolves with respect to a uniform forward process with rate parameter $\beta(t)$ chosen such that the marginals match $p(z_t|z_0)=p(\tilde z_t|\tilde z_0).$
In Eqn. 29 they compute $\frac{d}{dt}p(z_t|z_0)$ and in Eqn. 32 they compute $\frac{d}{dt}p(\tilde z_t|\tilde z_0)$ for all starting points and show they are identical.
After noting the equivalence of equations 29 and 32, they state "This pmf and the ODE are the unique signatures of a Uniform-state discrete diffusion process (Lou et al., 2023; Schiff et al., 2025)." and from this conclude that the path distributions of $(\tilde z_t)_t$ and $(z_t)_t$ are equivalent, and in particular, that $(z_t)_t$ is Markov\footnote{This interpretation of the text was confirmed in personal communication with the first author of \citet{Sahoo2025-dt}}.

However, despite a similar result for Markov chains (two Markov processes with identical semi-groups are equivalent), $p(z_t|z_0)=p(\tilde z_t|\tilde z_0)$ and $\frac{d}{dt}p(z_t|z_0)=\frac{d}{dt}p(\tilde z_t|\tilde z_0)$ for all starting points is not enough to conclude the identity of the path distributions $p((z_t)_t|z_0)=p((\tilde z_t)_t|\tilde z_0)$.
First note that $\frac{d}{dt}p(z_t|z_0)=\frac{d}{dt}p(\tilde z_t|\tilde z_0)$ is not an independent condition: it follows from $p(z_t|z_0)=p(\tilde z_t|\tilde z_0)$.
Next consider this counter example:
\begin{itemize}
\item $\tilde z_0=1$ and $(\tilde z_t)_t$ evolves by switching sign with rate $1$. 
Therefore $p(\tilde z_t=0)=1-\frac 1 2e^{-2t}$.
\item $z_0=1$ and $(z_t)_t$ has a 50\% chance to stay at $0$ forever and a 50\% chance to swap sign at time $-\frac 1 2\log U$ for a $U\sim\mathrm{Uniform}$ and never again.
Therefore $p(\tilde z_t=1)=\frac 1 2(1+p(-\frac 1 2\log\mathrm{Uniform}>t))=1-\frac 1 2e^{-2t}$.
\item When $z_0=-1$ or $\tilde z_0=-1$, then swap signs.
\end{itemize}
We have $p(z_t|z_0)=p(\tilde z_t|z_0)$ for all $z_0$ and therefore $\frac{d}{dt}p(z_t|z_0)=\frac{d}{dt}p(\tilde z_t|z_0)$ but clearly $p((z_t)_t)\neq p((\tilde z_t)_t)$.

Simple computer simulations indeed show that $p((\mathrm{argmax}(w_t))_t)$ and $p((\tilde z_t)_t)$ are different.
We show this in Fig.~\ref{fig:sahoo error}.
Indeed a statistical test applied to these simulations shows $p((\mathrm{argmax}(w_t))_t)\neq p((\tilde z_t)_t)$: a Mann-Whitney test shows that the paths of the argmax of Gaussian diffusion have more transitions that those of discrete diffusion with $p<10^{-300}$.
\begin{figure}[H]
    \centering
    \includegraphics[width=0.75\linewidth]{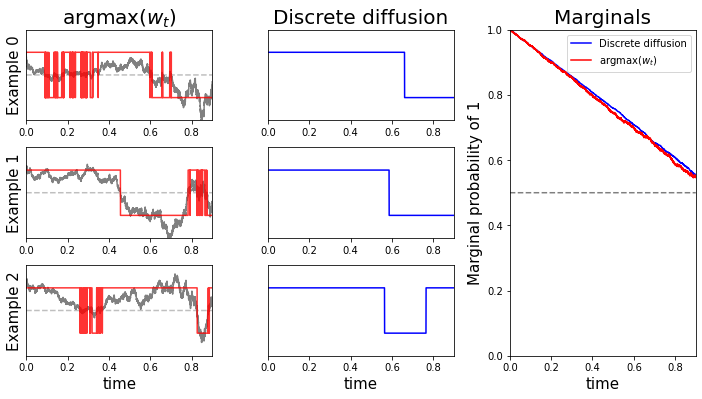}
    \caption{\textbf{The argmax of Gaussian diffusion appears different from discrete diffusion in simulation, despite having the same marginals.}
    We compare example paths of $p((\mathrm{argmax}(w_t))_t)$ (left, red; we show Gaussian diffusion $w_t$ in grey), $p((\tilde z_t)_t)$ for uniform discrete diffusion (centre, blue), and their empirical marginals over 10'000 simulations (right); we simulate using a grid size of 0.0001.
    Note the two processes have the same marginals but their paths appear different; in particular, whenever $w_t$ is near $0$, $(\mathrm{argmax}(w_t))_t$ undergoes a very large number of transitions in a small time\protect\footnotemark.}
    \label{fig:sahoo error}
\end{figure}\footnotetext{Indeed, noting the self-similarity of Brownian motion, one can show that, conditioned on $w_t=0$, with probability $1$ $(z_t)_t$ makes infinitely many transitions in the interval $[t, t+\epsilon)$ for any $\epsilon>0$.
The probability of infinitely many transitions in a bounded interval for discrete diffusion however is $0$.}

\section{Wright-Fisher sampling and score calculations}\label{app: wf details}
Here we discuss the details of the methods in Sec.~\ref{sec: wf}.
Note, just like App.~\ref{app: mid zeta proof}, we can deal with $\vec x_t$ rather than the actual sequences $x_t$.
In App.~\ref{app: simplicial alg details} we discuss details about our algorithms, in particular how to sample form $A(\psi, \tau_t)$ and calculate the functions $\vec s(\vec v\mid x_0).$
In App.~\ref{app: wf computational complexity} we discuss the computational complexity and stability of these algorithms in theory and in experiment.
In App.~\ref{app: wf details low t} we discuss how we sample and calculate the ELBO at low $t$.
Finally, in App.~\ref{app: sampling} we discuss our condiitonal sampling procedure.

We also note two more differences between our method and that of \citet{Avdeyev2023-mv}: 
\begin{itemize}
    \item Our neural network directly predicts $\vec x_0$ rather than the ``score'' $\vec s$.
    \item As described in App.~\ref{app: wf proof fwd}, we use the natural permutation-symmetric ``multi-allelic'' extension to the 1D SDE when $B>2$, while they use a stick-breaking procedure.
    \item We use high-precision operations to calculate large alternating series accurately, as described in App.~\ref{app: wf computational complexity}.
\end{itemize}

\subsection{Algorithm details}\label{app: simplicial alg details}
\paragraph{Sample noisy $x_t$}
We've discussed the algorithm from \citet{Jenkins2017-ry} in the main text. 
We now present their algorithm for sampling from $A(\psi, \tau_t).$

\begin{algorithm}[H]
\caption{Exact sampling from ancestral process $A(\psi, \tau_t)$}\label{alg: jenkins general}
\begin{algorithmic}[1]
\State Define coefficients: $c_{km}^{\psi} = \frac{(2k+\psi-1)(\psi + m)_{(k-1)}}{m!(k-m)!} e^{-k(k+\psi-1)\tau_t/2}$ for $k \geq m$
\State Sample $U \sim \mathrm{Uniform}[0, 1]$
\State Initialize $M \leftarrow 0$
\State Initialize an empty vector $\vec k = ()$
\While{True}
    \State Find $k_M\geq M$ such that $c_{(k_M+1)M}^{\psi} < c_{k_MM}^{\psi}$
    \State Make $k_M$ even: $k_M\gets2\lceil k_M/2\rceil$
    \State Update lower bound: $S^- \leftarrow S^- + \sum_{k=M}^{k_M+1} (-1)^{k-M} c_{kM}^{\psi}$
    \State Update upper bound: $S^+ \leftarrow S^+ + \sum_{k=M}^{k_M} (-1)^{k-M} c_{kM}^{\psi} $
    \State Update $\vec k = (k_0, \dots, k_{M-1}, k_{M})$
    \While{$S^- < U<S^+$}
        \State Update lower bound: $S^- \leftarrow S^- + \sum_{m=0}^M(c_{(k_m+2)m}^{\psi}-c_{(k_m+3)m}^{\psi}) $
        \State Update upper bound: $S^+ \leftarrow S^- +\sum_{m=0}^M(-c_{(k_m+1)m}^{\psi}+c_{(k_m+2)m}^{\psi}) $
        \State Update $\vec k = \vec k +(2, \dots, 2)$
    \EndWhile
    \If{$S^- > U$}
        \State \Return $m = M$
    \ElsIf{$S^+ < U$}
        \State $M \leftarrow M + 1$
    \EndIf
\EndWhile
\end{algorithmic}
\end{algorithm}

\paragraph{Compute loss}
We present a formula for $\vec s(\vec v\mid x_0, t)=\nabla\log p(\vec x_t\mid x_0, t)|_{\vec v}$ to enable computation of the loss.
\citet{Avdeyev2023-mv} computed these scores using a previously determined result with $B=2$ then generalizing to higher dimensions with their stick-breaking procedure and a change of variables.
We are instead able to derive it directly from first principles.

There are two infinite series which will be important,
\begin{gather*}
    G_\psi(\tau, x_0, \vec x_t)=1+\sum_{k=1}^\infty(-1)^ka_k^\psi(\tau, \pi_{x_0}, \vec x_{t, x_0})\\
    F_\psi(\tau, x_0, \vec x_t)=1+\sum_{k=1}^\infty(-1)^kb_k^\psi(\tau, \pi_{x_0}, \vec x_{t, x_0})\\
\end{gather*}
where 
\begin{gather*}
    a_k^\psi(\tau, \pi_{x_0}, \vec x_{t, x_0}) = e^{-\frac{k(k+\psi-1)\tau}{2}}\frac{(2k+\psi-1)(\psi)_{(k-1)}}{k!}{_2F_1(-k,\psi+k-1;\psi\pi_{x_0};\vec x_{t, x_0})}\\
    b_k^\psi(\tau, \pi_{x_0}, \vec x_{t, x_0}) = e^{-\frac{k(k+\psi+1)\tau}{2}}\frac{(\psi)_{(k)}}{k!}\frac{(2k+\psi+1)(\psi+k)}{(\psi+1)\psi}{_2F_1(-k,\psi+k+1;\psi\pi_{x_0}+1;\vec x_{t, x_0})}
\end{gather*}
where $_2F_1$ is the hypergeometric function.
Although these look complicated, in practice, most terms in the numerators and denominator of $a$ and $b$ nearly cancel to $1$, and, when $t$ is not too small, $e^{-k(k+\psi+1)\tau/2}$ decays extremely quickly.

Using the results in~\citet{Tavare1984-uz} we compute $\vec s(\vec v\mid x_0)$ in terms of these series.
Since we're only interested in differences for calculating the ELBO,
$\vec s(\vec v\mid x_0, t) - \vec s(\vec v\mid \tilde x_0, t)$ we ignore constants not depending on $x_0$.
\begin{proposition}\label{prop: wf scores}
    (Proof in App.~\ref{app: wf scores proof})
    $$p(\vec x_t\mid x_0, t)=\mathrm{Dirichlet}(\pi\psi)(\vec x_t)G_\psi(\tau_t, x_0, \vec v).$$
    For $\vec c(\vec v)=\nabla\log \mathrm{Dirichlet}(\pi\psi)(\vec x_t)=(\psi\vec\pi-\mathbbm 1)/\vec x_t$ which does not depend on $x_0$,
    \begin{equation}\label{eqn:WF score}
        \vec s(\vec v\mid x_0, t)=\vec c(\vec v)+\vec x_0w(x_0, \vec v)
    \end{equation}
    where
    $$w(x_0, \vec v) = \frac{e^{-\psi\tau_t/2}(\psi+1)}{\pi(x_0)}\frac{F_\psi(\tau_t, x_0, \vec v)}{G_\psi(\tau_t, x_0, \vec v)}.$$
\end{proposition}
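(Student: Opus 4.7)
The plan is to break the proposition into two independent pieces: first establish the density formula $p(\vec x_t\mid x_0,t)=\mathrm{Dirichlet}(\psi\vec\pi)(\vec x_t)\,G_\psi(\tau_t,x_0,\vec v)$, and then differentiate to obtain the gradient formula. For the density, I would invoke the standard coalescent-based mixture representation of the parent-independent Wright--Fisher transition density due to \citet{Tavare1984-uz}: conditional on there being $m$ surviving ancestors (which are distributed according to $A(\psi,\tau_t)$, the ancestral process used in Alg.~\ref{alg: jenkins general}), one has $\vec x_t\mid x_0,m\sim\mathrm{Dirichlet}(\psi\vec\pi+m\vec x_0)$. Factoring out $\mathrm{Dirichlet}(\psi\vec\pi)(\vec x_t)$, the density ratio $D(\psi\vec\pi+m\vec x_0;\vec v)/D(\psi\vec\pi;\vec v)$ equals $\tfrac{(\psi)_{(m)}}{(\psi\pi_{x_0})_{(m)}}v_{x_0}^m$, so the transition density factors as $\mathrm{Dirichlet}(\psi\vec\pi)(\vec v)\sum_m q_m^\psi(\tau_t)\tfrac{(\psi)_{(m)}}{(\psi\pi_{x_0})_{(m)}}v_{x_0}^m$. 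I would then verify that substituting Tavaré's explicit series for $q_m^\psi(\tau_t)$ and interchanging the two sums collapses the Pochhammer symbols into the compact hypergeometric form appearing in $a_k^\psi$, yielding exactly $G_\psi$.

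For the score formula, the key observation is that every summand of $G_\psi$ depends on $\vec v$ only through the single coordinate $v_{x_0}$ (since ${}_2F_1(-k,\psi+k-1;\psi\pi_{x_0};v_{x_0})$ has $v_{x_0}$ as its argument). Therefore $\nabla_{\vec v}\log G_\psi = \vec x_0\,\partial_{v_{x_0}}\log G_\psi$, and combined with $\nabla\log\mathrm{Dirichlet}(\psi\vec\pi)(\vec v)=\vec c(\vec v)$, the score splits as claimed. All that remains is to evaluate $\partial_{v_{x_0}}G_\psi$, which I would do term-by-term using the hypergeometric derivative identity
\begin{equation*}
\tfrac{d}{dz}{}_2F_1(-k,\psi+k-1;\psi\pi_{x_0};z)=\tfrac{-k(\psi+k-1)}{\psi\pi_{x_0}}\,{}_2F_1(-k+1,\psi+k;\psi\pi_{x_0}+1;z).
\end{equation*}
Interchanging differentiation with the series (justified by uniform convergence away from the boundary via the exponential decay $e^{-k(k+\psi-1)\tau_t/2}$), the $k=0$ term vanishes. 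Reindexing $k'=k-1$ and absorbing the factor $-k(\psi+k-1)/(\psi\pi_{x_0})$ into the Pochhammer and sign structure, the remaining sum has the form $\sum_{k'\ge 0}(-1)^{k'}(\cdots){}_2F_1(-k',\psi+k'+1;\psi\pi_{x_0}+1;v_{x_0})$.

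The main algebraic obstacle is then matching this reindexed series to $F_\psi$. The exponent becomes $-(k'+1)(k'+\psi)\tau_t/2 = -k'(k'+\psi+1)\tau_t/2 - \psi\tau_t/2$, pulling out the prefactor $e^{-\psi\tau_t/2}$. The ratio of the remaining coefficients to those in $b_{k'}^\psi$ simplifies, after cancellation in $(2k'+\psi+1)(\psi)_{(k')}(\psi+k')$ divided by $\tfrac{(\psi)_{(k')}(2k'+\psi+1)(\psi+k')}{(\psi+1)\psi}$, to the constant $(\psi+1)/\pi_{x_0}$. Putting this together, $\partial_{v_{x_0}}G_\psi = \tfrac{e^{-\psi\tau_t/2}(\psi+1)}{\pi_{x_0}}\,F_\psi$, and dividing by $G_\psi$ gives exactly $w(x_0,\vec v)$. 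The hardest part will be bookkeeping the Pochhammer and exponential rescalings so that the shift $k\mapsto k-1$ cleanly converts the $G_\psi$ structure (eigenvalue rate $k(k+\psi-1)$, symbol $(\psi)_{(k-1)}$) into the $F_\psi$ structure (rate $k(k+\psi+1)$, symbol $(\psi)_{(k)}$, extra factor $(\psi+k)/(\psi(\psi+1))$); once these match, the remainder is standard.
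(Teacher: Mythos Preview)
Your proposal is correct and follows essentially the same route as the paper's proof: both factor the transition density through the ancestral-process mixture $\mathbb E_{m\sim A(\psi,\tau_t)}\mathrm{Dirichlet}(\psi\vec\pi+m\vec x_0)$, pull out the stationary Dirichlet to leave the moment $\mathbb E_m\tfrac{(\psi)_{(m)}}{(\psi\pi_{x_0})_{(m)}}v_{x_0}^m$, identify this with $G_\psi$ via Tavar\'e's series, and then differentiate term-by-term using the standard ${}_2F_1$ derivative identity followed by the reindex $k\mapsto k-1$ to recover $F_\psi$. Your bookkeeping of the exponent shift $-(k'+1)(k'+\psi)\tau_t/2=-k'(k'+\psi+1)\tau_t/2-\psi\tau_t/2$ and the constant factor $(\psi+1)/\pi_{x_0}$ matches the paper's computation exactly.
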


With the hollow parameterization,
calling $\vec w_b=w(b)$,
we get
$$\vec s(\vec v\mid \tilde x_0, t)_b=\vec c(\vec v)_b+\frac{e^{-\psi\tau_t/2}(\psi+1)}{\pi(x_0)}\frac{\tilde x_{0, b}F_\psi(\tau_t, b, \vec v)}{\sum_{b'}\tilde x_{0, b}G_\psi(\tau_t, b', \vec v)}.$$

\subsection{Computational complexity and stability}\label{app: wf computational complexity}

\paragraph{Numerical stability}
Sampling from $A(\psi, \tau_t)$ and calculating $G_\psi$ and $F_\psi$ involve alternating series of many terms which vary by many orders of magnitude, and cancel out leaving very small residuals -- known as ``catastrophic cancellation''.
To calculate these accurately, we may need higher precision than provided by \texttt{float64};
we perform any high precision calculations using the \texttt{mpmath} library~\citet{Johansson2010-fq}.
\citet{Avdeyev2023-mv} did not use high precision in their calculations, potentially introducing errors and instability to their loss computation.

We perform all calculations at \texttt{float64} to take advantage of parallel GPU computations, estimate the error of each computation using a \textit{condition number} and recompute just those terms with condition number above a threshold in \texttt{mpmath} on a CPU.
In practice, we only need to perform calculations at high precision for small $t$, before we switch to the ``low time regimen'' $\tau_t<0.05$ where we switch tot he Griffiths approximation.

The condition number of a series $a_1+a_2+\dots+a_M$ is defined as $\eta = \sum_m |a_m|/|\sum_ma_m|$;
    one can estimate the error of their summation at finite precision by 
    $$\mathrm{error}\approx \eta\times\mathrm{precision}.$$
When there is catastrophic cancellation, the denominator in the definition of $\eta$ will be very large, representing that error might be high.
To estimate $\eta$, we keep track of $\tilde \sum_m |a_m|$ ($\tilde\sum$ representing our finite-precision summation) and estimate $\eta \approx \tilde \sum_m |a_m|/|\tilde \sum_ma_m|$.
If $\eta > \text{desired error}/ \text{float 64 precision}=10^{-6}\times 2^{52}$, then we recompute at higher precision.

\paragraph{Sampling}
The complexity for sampling $\vec x_t$ involves (1) $O(m)$ for sampling $m$, which is $O(1/\tau_t)$ in expectation (see App.~\ref{app: wf details low t}), and (2) $O(B)$ time for sampling $\vec x_t$ from a Dirichlet.
Crucially, the complex calculations involving an infinite series occur in (1) and are independent of the alphabet size $B$.
Comparatively, sampling from the SDE requires $O(BT/\Delta t)$ compute.
A higher $\Delta t$ will decrease compute but lead to lower-fidelity samples, especially at low $t$ where even small fluctuations in $x_t$ can lead to instability.
We also parallelize the computations in Alg.~\ref{alg: jenkins general} to benefit from GPU acceleration. 
The result is that, except when we must switch to high precision, our sampling procedure is much faster than that using an SDE (Fig.~\ref{fig:fast samples}), and much more stable at low $\tau$ (Fig.~\ref{fig:stable samples}).

\paragraph{Loss computation}
The complexity for computing the loss involves (1) $O(B k_{\max})$ computations for the series $F_\psi$ and $G_\psi$,
and (2) $O(B)$ computations for computing the loss given the vectors $\vec s$.
Crucially, the complex calculations involving an infinite series occur in (1) and can be parallelized across $B$ allowing massive acceleration on GPU.
$k_{\max}$ should become very large as $t$ becomes small, leading \citep{Avdeyev2023-mv} to choose $k_{\max}=1000$.
Instead we only use the series computation for $\tau\geq 0.05$, allowing us to use $k_{\max}=80$ (Fig.~\ref{fig:stable loss}), and compute a $O(B)$ ELBO for $\tau<0.05$ in App.~\ref{app: wf details low t}.

\begin{figure}
    \centering
    \begin{subfigure}[b]{0.25\textwidth}
        \centering
        {\includegraphics[width=\linewidth]{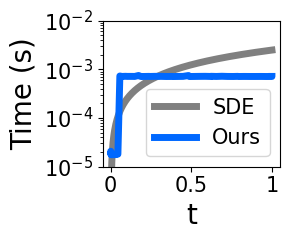}}
        \caption{Fast sampling}
        \label{fig:fast samples}
    \end{subfigure}
    \begin{subfigure}[b]{0.3\textwidth}
        \centering
        \includegraphics[width=\linewidth]{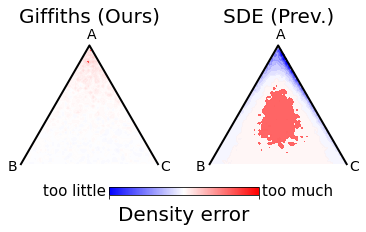}
        \caption{Stable sampling}
        \label{fig:stable samples}
    \end{subfigure}
    \begin{subfigure}[b]{0.41\textwidth}
        \centering
        \includegraphics[width=\linewidth]{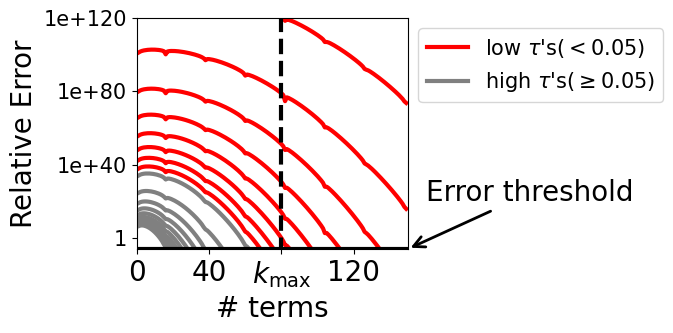}
        \caption{Fast and stable loss calculation}
        \label{fig:stable loss}
    \end{subfigure}
    \caption{\textbf{Leveraging mathematical genetics literature, we build fast and stable simplicial diffusion.} 
    \textbf{(a)} We plot the time it takes to sample a sequence of $D=500$ using an SDE, versus our exact sampling for various values of $t$ on an A100 80GB GPU.
    We threshold switching to the Griffiths approximation at $\tau_t=0.1$.
    \textbf{(b)} For $\tau=0.1$ and $B=3$ we sample $3\times 10^7$ points from the exact sampling method, Griffith's approximation, and using an SDE with 25 steps as used in ~\citet{Avdeyev2023-mv}.
    We then perform density estimates of these data and plot the error to the exact samples.
    We plot a $\times 6$ zoom into the vertex $A$.
    We see the SDE struggles to sample near the corner.
    We use $\psi=3, \vec\pi=[0.25, 0.4, 0.35]$
    \textbf{(c)} We plot the accuracy of our approximation of the infinite series $G_\psi(\tau, x_0, \vec x_t)$ including different numbers of terms for various values of $\tau\in[0, 0.2]$.
    We choose $\psi=B=4$ and $\vec \pi$ uniform, and plot the relative error for two values of $\vec x_{t, x_0}.$
    We only use the series approximation for $\tau\geq0.05$ (grey), which allows us to only use 80 terms.
    Meanwhile~\citep{Avdeyev2023-mv} used 1000 terms to accommodate smaller $\tau$ (red).
    Our error threshold is $10^{-6}$.
    }
    \label{fig:wf improve}
\end{figure}

\subsection{Low time regimen}\label{app: wf details low t}

When $t$ is small, sampling from $A(\psi, \tau_t)$ or calculating $G_\psi, F_\psi$ become unstable and can require unbounded compute.
\citet{Griffiths1984-bo} suggested a Gaussian approximation for $A(\psi, \tau_t)$ which we will also use for deriving stable approximations of $\vec s(\vec v\mid x_0, t)$ which require bounded compute.

\paragraph{Sample noisy $x_t$}
We copy the following from~\citet{Jenkins2017-ry}.
Note compute does not scale with $\tau_t$.
\begin{algorithm}[H]
\caption{Sampling from ancestral process $A(\psi, \tau_t)$ - Low $t$ approximation}\label{alg: jenkins low t}
\begin{algorithmic}[1]
\State Set $\beta \leftarrow \frac{1}{2}(\psi - 1)\tau_t$
\If{$\beta \neq 0$}
    \State Set $\eta \leftarrow \beta/ (e^{\beta}-1)$
    \State Set $\mu \leftarrow \frac{2\eta}{\tau_t}$
    \State Set $\sigma^2 \leftarrow \frac{2\eta}{\tau_t}\left(\eta + \beta\right)^2\left(1 + \frac{\eta}{\eta+\beta} - 2\eta\right)\beta^{-2}$
\Else
    \State Set $\mu \leftarrow \frac{2}{\tau_t}$
    \State Set $\sigma^2 \leftarrow \frac{2}{3\tau_t}$
\EndIf
\State Sample $Z \sim \mathcal{N}(\mu, \sigma^2)$
\State \Return $m = \max(0, \lfloor Z + 0.5 \rfloor)$ \Comment{Round to nearest non-negative integer}
\end{algorithmic}
\end{algorithm}

\paragraph{Compute loss}
The loss in this regimen, even with the Griffiths approximation, becomes intractable; instead we use the Griffiths approximation to simply bound the loss.

When $t$ is small, $x_0$ is almost always $b^*=\mathrm{argmax}_{b}\vec x_{t, b}$.
We therefore set $\tilde x_0=\delta_{b^*}$.
In practice, in our protein setting, we only see $b^*\neq\mathrm{argmax}_{b}\vec x_{t, b}$ with $\tau_t<0.05$ at a rate of less than 1 in $7\times 10^7$.
Since $x_0\neq b^*$ is so rare we only aim to find a loose bound.
Calling $\vec v=\vec x_{t}$ we bound the loss by 
\begin{equation*}
    \begin{aligned}
    L\leq &\frac{\dot\tau_t}{2}(\|\vec s(\vec v\mid x_0, t)-\vec c(\vec v)\|_{\text{Diag}(\vec{v})-\vec{v}\vec{v}^T}+\|\vec s(\vec v\mid b^*, t)-\vec c(\vec v)\|_{\text{Diag}(\vec{v})-\vec{v}\vec{v}^T})^2\\
    =&\frac{\dot\tau_t}{2}(w(x_0, \vec v)\sqrt{\vec v_{x_0}}+w(b^*, \vec v)\sqrt{\vec v_{b^*}})^2.
    \end{aligned}
\end{equation*}
In the next proposition we give an alternate formula for $w(x_0, \vec v)$ which will allow us to Griffith's approximation and a saddle point approximation to estimate $w(b^*, \vec v)$.
It will also allow us to bound $w(x_0, \vec v).$
To our knowledge, this strategy is original.

\begin{proposition}
    $$w(x_0, \vec v)=\vec v_{x_0}^{-1}\tilde {\mathbb {E}}_{\vec v_{x_0}, \vec \pi_{x_0}}m_t$$
    where $\tilde {\mathbb {E}}_{\vec v_{x_0}, \vec \pi_{x_0}}$ is over the weighted, normalized distribution
    $p(A(\psi, \tau_t)=m_t)\frac{(\psi)_{(m_t)}}{(\psi\pi_{x_0})_{(m_t)}}\vec v_{x_0}^{m_t}$.
\end{proposition}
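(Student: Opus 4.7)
The plan is to bridge the two equivalent representations of $p(\vec v\mid x_0, t)$ already established in the paper: the series form $\mathrm{Dirichlet}(\psi\vec\pi)(\vec v)\,G_\psi(\tau_t,x_0,\vec v)$ from the preceding proposition on Wright--Fisher scores, and the ancestral-mixture form implied by Alg.~\ref{alg: simplicial diffusion} lines 3--4,
\begin{equation*}
    p(\vec v\mid x_0, t) \;=\; \mathbb{E}_{m \sim A(\psi,\tau_t)}\!\bigl[\mathrm{Dirichlet}(\psi\vec\pi + m\vec x_0)(\vec v)\bigr].
\end{equation*}
First I would expand the inner Dirichlet density and factor out $\mathrm{Dirichlet}(\psi\vec\pi)(\vec v)$. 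Since $m\vec x_0$ adds mass only in coordinate $x_0$, all Gamma factors in the normalization cancel except on that coordinate, giving the clean identity
\begin{equation*}
    \mathrm{Dirichlet}(\psi\vec\pi + m\vec x_0)(\vec v)
    \;=\; \frac{(\psi)_{(m)}}{(\psi\pi_{x_0})_{(m)}}\,v_{x_0}^{m}\,\mathrm{Dirichlet}(\psi\vec\pi)(\vec v).
\end{equation*}
Matching the two forms for $p(\vec v\mid x_0,t)$ immediately yields
\begin{equation*}
    G_\psi(\tau_t, x_0, \vec v) \;=\; \mathbb{E}_{m \sim A(\psi,\tau_t)}\!\left[\frac{(\psi)_{(m)}}{(\psi\pi_{x_0})_{(m)}}\,v_{x_0}^{m}\right].
\end{equation*}

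Next I would invoke Prop.~\ref{prop: wf scores}, which asserts $\vec s(\vec v\mid x_0,t) - \vec c(\vec v) = \vec x_0\, w(x_0, \vec v)$; since $\vec s - \vec c = \nabla \log G_\psi$ and $G_\psi$ depends on $\vec v$ only through $v_{x_0}$, taking the $x_0$-th component gives $w(x_0, \vec v) = \partial_{v_{x_0}} \log G_\psi$. Differentiating the series term-by-term (justified by the rapid decay of the $A(\psi,\tau_t)$ probability mass function and the fact that $v_{x_0}\in[0,1]$) yields
\begin{equation*}
    w(x_0, \vec v) \;=\; \frac{\mathbb{E}_{m\sim A}\!\bigl[\tfrac{(\psi)_{(m)}}{(\psi\pi_{x_0})_{(m)}}\, m\, v_{x_0}^{m-1}\bigr]}{\mathbb{E}_{m\sim A}\!\bigl[\tfrac{(\psi)_{(m)}}{(\psi\pi_{x_0})_{(m)}}\, v_{x_0}^{m}\bigr]}.
\end{equation*}
Pulling out the factor $v_{x_0}^{-1}$ from the numerator is exactly the normalized expectation $\tilde{\mathbb{E}}_{v_{x_0},\pi_{x_0}}[m_t]$ defined in the statement, completing the proof.

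The main obstacle, I expect, is the algebraic identification of $G_\psi$ with the ancestral-mixture moment-generating function: the $G_\psi$ in the paper is defined as an alternating hypergeometric series, whereas what drops out of the Dirichlet-mixture decomposition is a positive power series in $v_{x_0}$ weighted by the $A(\psi,\tau_t)$ mass function. Equating them relies on the classical Tavar\'e (1984) transition-density identity that expresses the WF transition kernel both ways; this is the substantive genetics input. Minor secondary concerns are (i) justifying that the Proposition's derivative convention is the ambient $\nabla_{\vec v}$ rather than a tangent-space projection, which is fine because the metric $\mathrm{diag}(\vec v) - \vec v\vec v^T$ in the loss absorbs that, and (ii) termwise differentiation of the series, which is routine given uniform bounds on $A(\psi,\tau_t)$ for $\tau_t$ bounded away from $0$.
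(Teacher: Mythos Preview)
Your proposal is correct and follows exactly the route the paper intends: its one-line proof (``Inspection of first expression of the proof of Prop.~\ref{prop: wf scores}'') points precisely to the identity $G_\psi(\tau_t,x_0,\vec v)=\mathbb E_{m_t\sim A(\psi,\tau_t)}\frac{(\psi)_{(m_t)}}{(\psi\pi_{x_0})_{(m_t)}}v_{x_0}^{m_t}$ derived there, from which your steps~3--5 (take $\partial_{v_{x_0}}\log G_\psi$, differentiate termwise, pull out $v_{x_0}^{-1}$) give the result. The ``main obstacle'' you flag---matching the alternating hypergeometric series for $G_\psi$ with the positive ancestral-mixture series---is not actually an obstacle, because the proof of Prop.~\ref{prop: wf scores} already establishes both representations and their equality via the Tavar\'e formulas, so you can simply cite that identity rather than re-derive it.
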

\begin{proof}
    Inspection of first expression of the proof of Prop.~\ref{prop: wf scores}.
\end{proof}

We can now bound $w(x_0, \vec v)\vee w(b^*, \vec v)\leq \vec v_{x_0}^{-1}\tilde {\mathbb {E}}_{1, p}m_t$ where $p=\vec\pi_{x_0}\wedge\vec\pi_{b^*}.$
Therefore, we get
\begin{equation}\label{eqn: L bound}
    L\leq 2{\dot\tau_t}\vec v_{x_0}^{-1}(\tilde {\mathbb {E}}_{1, p}m_t)^2.
\end{equation}
Now we only need to calculate $\tilde {\mathbb {E}}_{1, p}m_t$;
to do so we apply a saddle point approximation to Griffith's approximation to get Eqn.~\ref{eqn: small t approx} below.
Note compute does not scale with $\tau_t.$

\paragraph{Saddle point approximation}
Let's take the Griffiths approx as $t$ becomes small, so $w_t\sim N(\mu, \sigma)$ where $\mu, \sigma$ are form Alg.~\ref{alg: jenkins low t}.
We can use a Stirling approximation to get

\begin{equation*}
    \begin{aligned}
        \frac{(\psi)_{(m_t)}}{(\psi p)_{(m_t)}}\propto&(1+O(1/m))\left(1+\frac{(1-p)\psi}{\psi p+m_t-1}\right)^{(\psi p+m_t-1)+1/2}(\psi+m_t-1)^{(1-p)\psi}\\
        =&(1+O(1/m))(\psi+m_t-1)^{(1-p)\psi}.
    \end{aligned}
\end{equation*}

We take a saddle point approximation of $\tilde{ \mathbb{E}}_{1, p}m_t$, i.e. take its value as the maximizer of the approximate log likelihood 
\begin{equation*}
    \begin{aligned}
        &C-\frac 1 {2\sigma^2}(m_t-\mu)^2+(1-p)\psi\log(\psi+m_t-1)+O(1/m_t).
    \end{aligned}
\end{equation*}
We therefore get the approximation
\begin{equation}\label{eqn: small t approx}
\tilde{ \mathbb{E}}_{1, p}m_t \approx \left((\mu-(\psi-1))+\sqrt{(\mu+(\psi-1))^2+4(1-p)\psi\sigma^2}\right)/2.
\end{equation}
Noting $m_t\sim \tau_t^{-1}$, this approximation has relative error roughly $O(\tau_t^2)$.
And as $\tau_t\to 0$, $\mu\sim\tau_t^{-1}$ and $\sigma\sim\tau^{-1/2}$ so
\begin{equation*}
\tilde{ \mathbb{E}}_{1, p}m_t\approx \mu\approx 2/\tau_t.
\end{equation*}

\subsection{Time reversal SDE}\label{app: sampling}

Reversing the SDE Eqn.~\ref{eqn: wf sde} using the result of \citet{Anderson1982-fo}, we get
\begin{equation*}
    \begin{aligned}
     d\vec z_\tau=&\left(\frac{\psi}{2}(\vec\pi-\vec z_\tau)-B(\mathbbm 1/B-\vec z_\tau)-\left({\mathrm{diag}(\vec z_\tau})-{\vec z_\tau}{\vec z_\tau}^T\right)\nabla\log p(\vec x_t\mid t)\right)d\tau\\
     &+\mathrm{diag}\left(\sqrt{\vec z_\tau}\right)\left(I-\sqrt{\vec z_\tau}\sqrt{\vec z_\tau}^T\right)d\vec W_\tau
    \end{aligned}
\end{equation*}
where $\vec z_{\tau_t}=\vec x_t$.
$\vec s(\vec x_t\mid \tilde x_0, t)$ approximates
$\mathbb E_{x_0\mid \vec x_t}\log p(\vec x_t\mid x_0, t)=\nabla\log p(\vec x_t\mid t)$, meaning we can substitute it into the place of $\nabla\log p(\vec x_t\mid t)$.
We sample by discretizing this SDE and sampling backwards.

To perform classifier guidance conditioning on a variable $y$, we can add $\nabla \log p(y\mid x_t)$ to $\vec s(\vec x_t\mid \tilde x_0, t).$
In practice, we perform the classic one-step approximation
$\nabla\log p(y\mid x_t)\approx \nabla\log \mathbb E_{x_0\sim q_\theta(x_0\mid x_t, t)}p(y\mid x_0).$
If we have a classifier $f(x_0)=p(y\mid x_0)$ then we approximate $E_{x_0\sim q_\theta(x_0\mid x_t, t)}p(y\mid x_0)$ using the ``one-shot'' prediction $f(\tilde x_0)$.

\section{Time-invariant discrete diffusion models}\label{app: time invariance}

\citet{Zheng2024-ky}, \citet{Ou2024-ms}, and \citet{Sahoo2024-uj} noted that for masking diffusion, it is not necessary to pass $t$ to the neural network -- it has ``time-invariant" parametrization.
\citet{Zheng2024-ky} suggests this makes masking models a fundamentally different object than other diffusion models: ``we reveal that both training and sampling of [masked models] are theoretically free from the time variable, arguably the key signature of diffusion models, and are
instead equivalent to masked models.''
Our sufficient-statistic parameterization shows on the contrary that every diffusion model can be made time-invariant by a choice of parameterization, with masking as a special case.

Does this suggest that every diffusion may perform as well as masking diffusion after this choice of parameterization?
\citet{Amin2025-ag} suggests that masking performs well not because of its choice of parameterization, but because of ``schedule conditioning''.

\textbf{Time-invariance is a function of parameterization:}
Masking is time-invariant due to a choice of parametrization.
To see this, imagine applying a time-dependent rotation to each $x_t^d$;
we are essentially performing the same diffusion but now must also pass $t$ to $q_\theta$ so it can ``undo'' the transformation.
The $\vec\phi$ can be thought of as automatically transforming $x_t$ so $F^d$ is independent of time in any diffusion model.

\textbf{Masking uses SSP: }
Indeed the SSP of masking diffusion,
$\vec \phi( x_t^d, t)=\delta_{x_t}$ if $x_t\neq\mathrm{mask}$ and $\vec \phi( x_t^d, t)=[\frac 1 B, \dots, \frac 1 B]$ otherwise,
is exactly the canonical parametrization.
Thus the time-invariance of masking isn't special -- rather masking's most convenient parametrization happens to be the SSP.

\section{Theoretical results}

\subsection{Mutation population discrete diffusion loss}\label{app: mid zeta proof}
In this appendix we derive Alg.~\ref{alg: zeta arb} by showing it is equivalent to Alg.~\ref{alg: discrete diffusion}.
Namely, we assume $D=1$ and $x_t$ is a sequence of length $\zeta$ 
and show
\begin{itemize}
    \item \textbf{Predict de-noised $x_0$:} the target of $q_\theta(x_0\mid x_t, t)$, $p(x_0\mid x_t, t)$, only depends on the vectorized $\vec x_t$.
    \item \textbf{Compute loss: }$L = \sum_{x'\neq x_t}\mathcal L_{x'\to x_t} \dot\tau_t\mathbb{D}\left(\frac{p(x'\mid x_0, t)}{p(x_t\mid x_0, t)}\bigg|\bigg|\frac{p(x'\mid \tilde x_0, t)}{p(x_t\mid \tilde x_0, t)}\right)$ is equivalent to the form in Alg~\ref{alg: zeta arb}.
\end{itemize}
Given prediction and loss computation only depend on $\vec x_t$, we can also replace sampling $x_t$ with just sampling $\vec x_t\sim\mathrm{Mult}(\zeta, \vec x_0^Te^{\tau_t\mathcal L})/\zeta$, giving Alg.~\ref{alg: zeta arb}.

\paragraph{Predict de-noised $x_0$}
Simply note
\begin{equation*}
    \begin{aligned}
        p(x_0\mid x_t, t)\propto& p(x_0)p(x_t\mid x_0, t)\\
        =&p(x_0)\prod_{z=0}^\zeta(\vec x_0^Te^{\tau_t\mathcal L})_{x_t^{(z)}}\\
        =&p(x_0)\prod_{b=1}^B(\vec x_0^Te^{\tau_t\mathcal L})_{b}^{\zeta\vec x_{t, b}}.
    \end{aligned}
\end{equation*}

\textbf{Compute loss}
For sequences $x\neq x$ of length $\zeta$ which differ in exactly one position, say $x^{(z)}=b\neq b'=x^{\prime(z)}$, then
$\mathcal L_{x\to x'}=\mathcal L_{b\to b'}$
and for every $x_0$
$$\frac{p(x'\mid x_0, t)}{p(x\mid x_0, t)}=\frac{\vec x_0e^{\tau_t\mathcal L}\vec b'}{\vec x_0e^{\tau_t\mathcal L}\vec b}.$$
If $x, x'$ differ in more than one position, then $\mathcal L_{x\to x'}=0$.
Call $x_t^{[z,b]}$ a sequence which has all the same letters as $x_t$ except has $b$ in position $z$.
Then calling $\vec p=\vec x_0^Te^{\tau_t\mathcal L}$ and $\vec q=\tilde x_0^Te^{\tau_t\mathcal L}$,
\begin{equation*}
    \begin{aligned}
        L =& \sum_{x'\neq x_t}\mathcal L_{x'\to x_t} \dot\tau_t\mathbb{D}\left(\frac{p(x'\mid x_0, t)}{p(x_t\mid x_0, t)}\bigg|\bigg|\frac{p(x'\mid \tilde x_0, t)}{p(x_t\mid \tilde x_0, t)}\right)\\
        =&\sum_{z=0}^\zeta\sum_{b'\neq x_t^{(z)}}\mathcal L_{b'\to x_t^{(z)}} \dot\tau_t\mathbb{D}\left(\frac{\vec p_{b'}}{\vec p_{x_t^{(z)}}}\bigg|\bigg|\frac{\vec q_{b'}}{\vec q_{x_t^{(z)}}}\right)\\
        =&\sum_{b}\#\{z\mid x_t^{(z)}=b\}\sum_{b'\neq b}\mathcal L_{b'\to b} \dot\tau_t\mathbb{D}\left(\frac{\vec p_{b'}}{\vec p_{b}}\bigg|\bigg|\frac{\vec q_{b'}}{\vec q_{b}}\right)\\
        =&\sum_{b'\neq b}\mathcal L_{b'\to b} \dot\tau_t\zeta\vec x_{t, b}\mathbb{D}\left(\frac{\vec p_{b'}}{\vec p_{b}}\bigg|\bigg|\frac{\vec q_{b'}}{\vec q_{b}}\right).
    \end{aligned}
\end{equation*}

\subsection{Proof of Gaussian convergence}\label{app: gaussian proof}
Our formal statement of the theorem adds some mild positivity assumptions for $\tau$, $\pi$ and $P_1$ which are satisfied by any reasonable choice of $\tau$ and almost every choice of $\mathcal L$.
It is also more specific about the limiting behaviour of $\vec x^\zeta_t$ in non-dominant eigenspaces: we also limit to Gaussian diffusion, but with meaningless embeddings sampled from random Gaussian vectors independent of $x_0$.

Let us interpret the embedding $Q_1$.
In the case that $\mathcal L$ is doubly stochastic, or reversible, $\pi = [\frac 1 B, \dots, \frac 1 B]$ and $\mathcal L$ is symmetric; in this case $Q_1=\mathfrak{j}_1P_1$ is just the orthogonal projection onto the dominant eigenspace.
In the more general case that $\mathcal L$ satisfies detailed balance, $(\diag(\pi)^{1/2}\mathcal L\diag(\pi)^{-1/2})_{ij}=\sqrt{\frac{\pi_i}{\pi_j}}\mathcal L_{ij}$ is symmetric so $\tilde Q_1$ is the orthogonal projection onto the dominant eigenspace of the ``symmetrized'' generator.

In more general cases, we don't get a symmetrized operator or an orthogonal projection $\tilde Q_1$, so we must ``correct'' for this with the adjustment $(\tilde Q_1\tilde Q_1^T)^{-1/2}\tilde Q_1$ which makes $Q_1^TQ_1$ an orthogonal projection.

\begin{theorem}\label{thm: gaussian proof}
    (Formal statement and proof of Thm.~\ref{thm: gaussian})
    Call $-\lambda_1>-\lambda_2>\dots$ the negative eigenvalues of $\mathcal L$ and $P_1, P_2, \dots$ the projections onto the corresponding left eigen-space.
    Without loss of generality, assume $\lambda_1=1$.
    Assume $\dot\tau_t$ is bounded on every compact interval of $(0, 1)$, $\pi_b>0$ and $P_1\vec b\neq 0$ for all $b$ and $P_1\vec b\neq P_1\vec b'$ for any $b\neq b'$.
    For each $\zeta$ pick time dilation $\tau^\zeta_t=\frac 1 {2}\log\left(\zeta e^{2\tau_t}-\zeta+1\right)$ and rescale $\vec x_t^\zeta = \sqrt{\zeta-(\zeta-1)e^{-2\tau_t}}(\vec{x_t}-\pi)/\sqrt{\pi}$.
    Define the embedding into $\mathbb R^{\mathrm{rank}(P_i)}$, $Q_i=\mathfrak j_i(\tilde Q_i\tilde Q_i^T)^{-1/2}\tilde Q_i$ where $\tilde Q_i=\diag(\pi)^{-1/2}P_i\diag(\pi)^{1/2}$ and $\mathfrak j_i$ is any isometry from $\mathrm{Im}(\tilde Q_i)\to\mathbb R^{\mathrm{rank}(P_i)}$.

    Fix an $x_0.$
    \begin{itemize}
        \item (Path convergence) Call $(\vec z_t)_{t=0}^1$ the paths with
        $\vec z_0=Q_1(\vec x_0/\sqrt\pi)$ evolving under the Ornstein-Uhlenbeck process
        $$d\tilde z_\tau=-\tilde z_\tau d\tau+\sqrt{2}dW_\tau$$
        for a Brownian motion $(W_\tau)_{\tau=0}^\infty$ and call $\vec z_t=\tilde z_{\tau_t}.$
        Then $(Q_1\vec x_t^\zeta)_{t\in(0, 1)}$ converges in distribution to  $(\vec z_t)_{t\in(0, 1)}$ in the sense of Lem.~\ref{lem: convergence}.
        \item (Convergence of non-dominant directions) 
        The component of $\vec x^\zeta_t$ in $\mathrm{Ker}Q_1$ is $\sum_{i>1}\tilde Q_i\vec x^\zeta_t$.
        Each component $(Q_i\vec x^\zeta_t)_t$ also converges to a Gaussian diffusion independent of $\vec x_0$ with modified time-dilation and scaling:
        call $(\vec z_t)_{t=0}^1$ the paths with
        $\vec z_0\sim\mathcal N(0, I)$ independent of $x_0$ evolving, forward and backward on $(-\infty, \infty)$, under the stationary Ornstein-Uhlenbeck process
        $$d\tilde z_\tau=-\tilde z_\tau d\tau+\sqrt{2}dW_\tau$$
        for a Brownian motion $(W_\tau)_{\tau=0}^\infty$ and call $\vec z_t=\tilde z_{\tau_t^{(i)}}$ where $\tau_t^{(i)}=\frac {\lambda_i} 2\log(e^{2\tau_t}-1)$.
        Then 
        $$((1-e^{-2\tau_t})^{-1/2}Q_i\vec x_t^\zeta)_{t\in(0, 1)}\leadsto(\vec z_t)_{t\in(0, 1)}.$$
        \item Call the ELBO in Alg.~\ref{alg: zeta arb} 
        $$L(\vec x^\zeta_t, t, \vec x_0, \tilde x_0)
        = \sum_{b_1\neq b_2}\mathcal L_{b_2\to b_1} \dot\tau_t^\zeta \zeta \vec x_{t, b_1}(\vec x^\zeta_t) \mathbb{D}\left(\hat w(x_0)_{b_2, b_1}||\hat w(\tilde x_0)_{b_2, b_1}\right)$$
        where $\vec x_{t, b_1}(\vec v)$ is the inverse of the transform from $\vec x_{t, b_1}$ to $\vec x_{t, b_1}^\zeta$.
        Then, for all $\vec v, t, \vec x_0, \tilde x_0$
        $$L(\vec v, t, \vec x_0, \tilde x_0)\to \frac{\dot\tau_t e^{-2\tau_t}}{(1-e^{-2\tau_t})^2}\left\|\mathrm{emb}(x_0)-\mathrm{emb}(\tilde x_0)\right\|^2,$$
        the ELBO in Alg.~\ref{alg: Gaussian diffusion}, which, in particular, is independent of the value of $\vec v$.
    \end{itemize}
\end{theorem}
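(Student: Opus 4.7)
The plan is to follow the paper's informal sketch rigorously, decomposing
\[
\vec x_t - \vec\pi = \bigl(\vec x_0^T e^{\tau_t^\zeta\mathcal{L}} - \vec\pi\bigr) + \bigl(\vec x_t - \vec x_0^T e^{\tau_t^\zeta\mathcal{L}}\bigr)
\]
into a ``signal'' and a ``noise'' term, and analyzing each via the spectral decomposition of $\mathcal{L}$ and the multinomial CLT, respectively. Writing $e^{\tau\mathcal{L}} = \vec 1\vec\pi^T + \sum_{i\geq 1}e^{-\lambda_i\tau}P_i$, the signal equals $\sum_{i\geq 1}e^{-\lambda_i\tau_t^\zeta}\vec x_0^T P_i$; since $\tau_t^\zeta = \tfrac{1}{2}\log(\zeta e^{2\tau_t}-\zeta+1)$ gives $e^{-\lambda_i\tau_t^\zeta}\sim\zeta^{-\lambda_i/2}(e^{2\tau_t}-1)^{-\lambda_i/2}$ and the rescaling satisfies $\sqrt{\zeta-(\zeta-1)e^{-2\tau_t}}\sim\sqrt{\zeta}\sqrt{1-e^{-2\tau_t}}$, the rescaled contribution of the $i$-th eigenspace is of order $\zeta^{(1-\lambda_i)/2}$. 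Only $i=1$ survives, with limit $e^{-\tau_t}\tilde Q_1(\vec x_0/\sqrt{\vec\pi})$. For the noise, the multivariate CLT for $\vec x_t = \mathrm{Multinomial}(\zeta,\vec p_t^\zeta)/\zeta$, combined with $\vec p_t^\zeta\to\vec\pi$, yields $\sqrt{\zeta}(\vec x_t-\vec p_t^\zeta)/\sqrt{\vec\pi}\leadsto\mathcal{N}(0, I-\sqrt{\vec\pi}\sqrt{\vec\pi}^T)$. The choice $Q_1=\mathfrak{j}_1(\tilde Q_1\tilde Q_1^T)^{-1/2}\tilde Q_1$ is engineered precisely so that $Q_1(I-\sqrt{\vec\pi}\sqrt{\vec\pi}^T)Q_1^T = I_{\mathrm{rank}(P_1)}$; combining rescaled signal and noise then reproduces the Gaussian-diffusion marginal $e^{-\tau_t}\mathrm{emb}(x_0)+\sqrt{1-e^{-2\tau_t}}\mathcal{N}(0,I)$.

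To upgrade the one-time marginal convergence to path convergence on each compact $[t_0,t_1]\subset(0,1)$, I would either verify Aldous's tightness criterion in the Skorokhod topology (using that the jumps of the underlying Markov chain are of size $O(1/\zeta)$ and its local quadratic variation converges to the OU diffusion coefficient) or, equivalently, apply the Ethier--Kurtz generator-convergence theorem (Theorem 4.8.10), showing that the rescaled generator of $Q_1\vec x_t^\zeta$ acting on smooth test functions converges to the Ornstein--Uhlenbeck generator $\mathcal{A}f(y) = -y\cdot\nabla f(y) + \Delta f(y)$. For the non-dominant directions $i\geq 2$, the signal already vanishes so only CLT noise remains; the extra normalization $(1-e^{-2\tau_t})^{-1/2}$ makes the residual stationary, and the two-time correlation reduces to $e^{-\lambda_i(\tau_t-\tau_s)}$, matching the stationary OU with time dilation $\tau_t^{(i)} = \tfrac{\lambda_i}{2}\log(e^{2\tau_t}-1)$ asserted in the theorem.

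For ELBO convergence, I would Taylor-expand both $\hat w(x_0)_{b,b'}=(\vec x_0^T e^{\tau_t^\zeta\mathcal{L}})_b/(\vec x_0^T e^{\tau_t^\zeta\mathcal{L}})_{b'}$ and the Poisson KL divergence. The spectral expansion gives $\hat w(x_0)_{b,b'} = (\pi_b/\pi_{b'})\bigl(1 + e^{-\tau_t^\zeta}g(b,b',x_0) + O(\zeta^{-\lambda_2/2})\bigr)$ with $g(b,b',x_0) = (\vec x_0^T P_1)_b/\pi_b - (\vec x_0^T P_1)_{b'}/\pi_{b'}$. Since $\mathbb{D}(\lambda_1\|\lambda_2) = \tfrac{1}{2\lambda_2}(\lambda_1-\lambda_2)^2 + O((\lambda_1-\lambda_2)^3)$ near $\lambda_1\approx\lambda_2$, the divergence is $\tfrac{1}{2}(\pi_b/\pi_{b'})e^{-2\tau_t^\zeta}(u_b-u_{b'})^2$ at leading order, where $u_b = ((\vec x_0-\tilde x_0)^T P_1)_b/\pi_b$. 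Weighting by $\zeta\vec x_{t,b'}\to\zeta\pi_{b'}$, substituting $\dot\tau_t^\zeta\zeta e^{-2\tau_t^\zeta}\to\dot\tau_t e^{-2\tau_t}/(1-e^{-2\tau_t})^2$, and applying the identity $\sum_{b\neq b'}\pi_b\mathcal{L}_{b\to b'}(u_b-u_{b'})^2 = 2\langle u,-\mathcal{L}u\rangle_\pi$ (which follows simply from $\mathcal{L}\vec 1 = 0 = \vec\pi^T\mathcal{L}$, with no reversibility needed), the sum collapses to $\tfrac{\dot\tau_t e^{-2\tau_t}}{(1-e^{-2\tau_t})^2}\langle u,-\mathcal{L}u\rangle_\pi$. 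Finally, using $\mathcal{L}P_1 = -P_1$ and the explicit construction of $Q_1$, $\langle u,-\mathcal{L}u\rangle_\pi = \|Q_1(\vec x_0-\tilde x_0)/\sqrt{\vec\pi}\|^2$, giving exactly the Alg.~\ref{alg: Gaussian diffusion} ELBO.

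The main obstacles will be (i) controlling the higher-order Taylor remainders in both the $\hat w$ expansion and the Poisson-KL expansion uniformly over the $O(\zeta^{-1/2})$ multinomial fluctuations of $\vec x_{t,b'}$, so that the $\zeta$-weighting does not amplify them; (ii) verifying the key algebraic identity $\langle u,-\mathcal{L}u\rangle_\pi = \|Q_1(\vec x_0-\tilde x_0)/\sqrt{\vec\pi}\|^2$ for general non-reversible $\mathcal{L}$, where $\tilde Q_1$ is an oblique projection and the similarity transform $\diag(\vec\pi)^{\pm 1/2}$ together with the symmetric normalization $(\tilde Q_1\tilde Q_1^T)^{-1/2}$ must be shown to turn the degenerate covariance $I-\sqrt{\vec\pi}\sqrt{\vec\pi}^T$ into the identity on $\mathbb{R}^{\mathrm{rank}(P_1)}$; and (iii) tightness as $t\to 0^+$, where both the signal scale diverges and the CLT covariance degenerates, which is precisely why the theorem restricts to the open interval $(0,1)$ and appeals to the bespoke notion of convergence in Lem.~\ref{lem: convergence}.
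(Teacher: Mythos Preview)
Your proposal is correct and, for the marginal convergence (Part 1) and the ELBO convergence (Part 4), essentially identical to the paper's proof: the paper also decomposes $\vec x_t-\vec\pi$ into signal plus multinomial noise, reads off the signal from the spectral expansion $e^{\tau\mathcal L}=\vec 1\vec\pi^T+\sum_i e^{-\lambda_i\tau}P_i$, applies the CLT (via a Berry--Esseen lemma to handle the $\zeta$-dependent parameter $\vec p_t^\zeta\to\vec\pi$), and for the ELBO Taylor-expands both $\hat w$ and $\mathbb D$ to second order. Your Dirichlet-form identity $\sum_{b\ne b'}\pi_b\mathcal L_{b\to b'}(u_b-u_{b'})^2=2\langle u,-\mathcal L u\rangle_\pi$ is a clean repackaging of what the paper does by brute-force assembling the matrix $\Sigma=-\tfrac12(\diag(\sqrt{\vec\pi})\mathcal L\diag(1/\sqrt{\vec\pi})+\text{transpose})$ and then checking $\tilde Q_1^T\Sigma\tilde Q_1=\tilde Q_1^T\tilde Q_1$; the two computations are equivalent.

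The one genuine difference is how you upgrade marginal to path convergence. The paper does \emph{not} use Aldous's criterion or Ethier--Kurtz generator convergence; instead it verifies the three hypotheses of its bespoke Lem.~\ref{lem: convergence} directly: (1) marginal convergence, (2) local-uniform convergence of the \emph{conditionals} $p(\vec x_t^\zeta\mid \vec x_s^\zeta=\vec w)$ for each fixed $\vec w$, and (3) a second-moment increment bound $\mathbb E\|\vec x_s^\zeta-\vec x_t^\zeta\|^2\le C|t-s|$ on compact subintervals of $(0,1)$. Step (2) is the bulk of the work and is tractable here precisely because, conditional on $\vec x_s$, the law of $\vec x_t$ is again an explicit sum of independent multinomials, so the same signal/noise CLT argument from Part~1 applies verbatim. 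Your generator-convergence route would also work but needs extra care for the time-inhomogeneity induced by $\tau_t^\zeta$; the paper's route sidesteps this by working only with two-time distributions.

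One small correction to your obstacle (i): the ELBO statement is pointwise convergence in \emph{deterministic} $(\vec v,t,x_0,\tilde x_0)$, and $\vec x_{t,b'}(\vec v)=\sqrt{\pi_{b'}}\vec v_{b'}/\sqrt{\zeta-(\zeta-1)e^{-2\tau_t}}+\pi_{b'}\to\pi_{b'}$ is deterministic, so there are no multinomial fluctuations to control in Part~4 at all---the randomness has already been frozen at $\vec v$.
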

\begin{proof}
    We prove the convergence of paths using Lem.~\ref{lem: convergence} which makes use of standard techniques.
    We break the proof up into four sections: the first three verify the conditions of Lem.~\ref{lem: convergence} and the last shows the convergence of the ELBO.

    \textbf{Part 1. Convergence of Marginals:}
    Note
    $$\vec z_t\sim e^{-\tau_t}\vec z_0+\sqrt{1-e^{-2\tau_t}}\mathcal N(0, I).$$
    We want to prove convergence to this quantity.
    Note, writing $\mathrm{Mult}$ for a multinomial distribution,
    \begin{equation*}
    \begin{aligned}
    \vec x_t^\zeta\sim&\frac{\sqrt{\zeta-(\zeta-1)e^{-2\tau_t}}}{\zeta}\left(\mathrm{Mult}(\zeta, \vec x_0^Te^{\tau_t^\zeta\mathcal L})-\zeta\vec\pi\right)/\sqrt{\vec \pi}\\
    =&(1+o(1))\sqrt{1-e^{-2\tau_t}}\left(\zeta^{-1/2}(\mathrm{Mult}(\zeta, \vec x_0^Te^{\tau_t^\zeta\mathcal L})-\vec x_0^Te^{\tau_t^\zeta\mathcal L})+\zeta^{1/2}(\vec x_0^Te^{\tau_t^\zeta\mathcal L}-\vec\pi)\right)/\sqrt{\vec \pi}.
    \end{aligned}
    \end{equation*}
    The second term is
    \begin{equation*}
    \begin{aligned}
    \sqrt{1-e^{-2\tau_t}}\zeta^{1/2}(\vec x_0^Te^{\tau_t^\zeta\mathcal L}-\vec\pi)=&\sqrt{1-e^{-2\tau_t}}\sum_i\zeta^{1/2}e^{-\lambda_i\tau_t^\zeta}P_i\vec x_0\\
    =&\sum_i\left(\frac{\zeta(1-e^{-2\tau_t})}{(\zeta(e^{2\tau_t}-1)+1)^{\lambda_i}}\right)^{1/2}P_i\vec x_0\\
    \to&e^{-\tau_t}P_1\vec x_0.
    \end{aligned}
    \end{equation*}

    For the first term, we need a ``uniform'' central limit theorem as the underlying distribution changes with $\zeta$ because of $\vec x_0^Te^{\tau_t^\zeta\mathcal L}$.
    Lem.~\ref{lem:berry-esseen} shows that $\zeta^{-1/2}(\mathrm{Mult}(\zeta, \vec x_0^Te^{\tau_t^\zeta\mathcal L})-\vec x_0^Te^{\tau_t^\zeta\mathcal L})$ approaches $\mathcal N(0, \mathrm{diag}(\vec p_t)-\vec p_t\vec p_t^T)$ for $\vec p_t=\vec x_0^Te^{\tau_t^\zeta\mathcal L}$, which itself approaches $\vec\pi$ as $\tau_t^\zeta\to\infty$.
    Therefore the first term, divided by $\sqrt{\vec\pi}$ approaches
    $$\sqrt{1-e^{-2\tau_t}}\mathcal N\left(0, I-\sqrt{\vec \pi}\sqrt{\vec \pi}^T\right).$$
    Note $\tilde Q_i\sqrt{\vec\pi}=\sqrt{\vec\pi}^{-1}P_i\vec\pi=0$ for each $i$ and, for $i>1$, $\tilde Q_i(P_1\vec x_0/\sqrt{\vec{\pi}})=\sqrt{\vec\pi}^{-1}P_iP_1\vec x_0=0.$
    Therefore, as desired,
    $$Q_1 x_t^\zeta\leadsto \sqrt{1-e^{-2\tau_t}}\mathcal N\left(0, I\right)+e^{-\tau_t}\mathrm{emb}(x_0),$$
    and for $i>1$,
    $$(1-e^{-2\tau_t})^{-1/2}Q_i x_t^\zeta\leadsto \mathcal N\left(0, I\right).$$    

    \textbf{Part 2. Local uniform convergence of conditionals:}
    Note
    $$\vec z_t|\vec z_s\sim e^{-(\tau_t-\tau_s)}\vec z_s+\sqrt{1-e^{-2(\tau_t-\tau_s)}}\mathcal N(0, I).$$
    We want to prove convergence to this quantity.
    Note 
    $$\vec x_t|\vec x_s\sim\sum_b \mathrm{Mult}(\zeta \vec x_{s, b}, \vec b^Te^{(\tau_t^\zeta-\tau_s^\zeta)\mathcal L})/\zeta$$
    where $\vec x_t=\sqrt{\pi}\circ\vec  x_t^\zeta/\sqrt{\zeta-(\zeta-1)e^{-2\tau_t}}+\pi$ are the ``unscaled'' versions of the vector and $\vec x_s$ is similar.
    It will be convenient below to extend this definition to $\vec x_{s}^\zeta$ for which $\zeta \vec x_{s, b}$ are not integers, but which still satisfy $\sum_b\sqrt{\pi_b}\vec x_{t, b}^\zeta=0$.
    To do so, we just round $\zeta \vec x_{s, b}$ down to $\lfloor{\zeta \vec x_{s, b}}\rfloor$.

    Fix $\vec v$. We now show $\vec x_t^\zeta|\vec x_s^\zeta=\vec v\leadsto \vec z_t|\vec z_s=\vec v$; a very similar argument also shows $\vec x_t^\zeta\leadsto \vec z_t$.
    Call $\vec x^{\zeta}$ a variable distributed as $\vec x_t^\zeta|\vec x_s^\zeta=\vec v$, so, calling
    \begin{gather*}
    w_t^\zeta = \frac{\sqrt{\zeta-(\zeta-1)e^{-2\tau_t}}}{\zeta},\\
    N_{s, b}^\zeta = {\sqrt{\pi_b}\vec  v_b}/w_s^\zeta+\zeta\pi_b,\\
    C_{t, b}^\zeta \sim \mathrm{Mult}\left(\left\lfloor N_{s, b}^\zeta\right\rfloor, \vec b^Te^{(\tau_t^\zeta-\tau_s^\zeta)\mathcal L}\right)\text{ independent across $b$},
    \end{gather*}
    then
    \begin{equation*}
    \begin{aligned}\vec x^\zeta_t\sim& w_{t}^\zeta\left(\sum_b C_{t, b}^\zeta-\zeta\pi\right)/\sqrt{\pi}\\
    =&w_{t}^\zeta\left(\sum_b \left[(C_{t, b}^\zeta-N_{s, b}^\zeta\vec b^Te^{(\tau_t^\zeta-\tau_s^\zeta)\mathcal L}
)+N_{s, b}^\zeta(\vec b^Te^{(\tau_t^\zeta-\tau_s^\zeta)\mathcal L}-\pi)\right]\right)/\sqrt{\pi}
\end{aligned}
    \end{equation*}
    noting $\sum_bp_{t, b}^\zeta=\zeta$.
    This is exactly the ``noise, signal'' breakdown we had in the proof sketch.
    
    For the signal (second term), first note
    $$\sum_b\pi_b(\vec b^Te^{(\tau_t^\zeta-\tau_s^\zeta)\mathcal L}-\vec \pi)=\vec\pi^Te^{(\tau_t^\zeta-\tau_s^\zeta)\mathcal L}-\vec \pi=0,$$
    so, ignoring the $\pi$ term in $N_{s, b}^\zeta$ the second term is 
    \begin{equation*}
    \begin{aligned}\frac {w_t} {w_s}\left(\sum_b\sqrt{\pi_b}\vec v_b(\vec b^Te^{(\tau_t^\zeta-\tau_s^\zeta)\mathcal L}-\vec \pi)\right)/\sqrt\pi=&\frac {w_t} {w_s}\left((\sqrt\pi\circ\vec v)^Te^{(\tau_t^\zeta-\tau_s^\zeta)\mathcal L}\right)/\sqrt\pi\\
    =&(1+o(1))\sum_i \left(\frac{1-e^{-2\tau_s}}{1-e^{-2\tau_t}}\right)^{(\lambda_i-1)/2}e^{-\lambda_i(\tau_t-\tau_s)}\tilde Q_i\vec v.
    \end{aligned}
    \end{equation*}

    For the first term, we again apply Lem.~\ref{lem:berry-esseen}, noting $N_{s, b}^\zeta=(1+o(1))\zeta\pi_b$ to get
    \begin{equation*}
    \begin{aligned}
    \sum_bw_t(&C_{t, b}^\zeta-N_{s, b}^\zeta\vec b^Te^{(\tau_t^\zeta-\tau_s^\zeta)\mathcal L}
)/\sqrt{\vec\pi}\\
    \leadsto &\sqrt{1-e^{-2\tau_t}} \sum_b\sqrt{\pi_b}\mathcal N\left(0, \diag(\vec b^Te^{(\tau_t^\zeta-\tau_s^\zeta)\mathcal L})- e^{(\tau_t^\zeta-\tau_s^\zeta)\mathcal L^T}\vec b\vec b^Te^{(\tau_t^\zeta-\tau_s^\zeta)\mathcal L}\right)/\sqrt{\vec\pi}\\
    =&\sqrt{1-e^{-2\tau_t}}\mathcal N\left(0, \diag(\vec \pi^Te^{(\tau_t^\zeta-\tau_s^\zeta)\mathcal L})- e^{(\tau_t^\zeta-\tau_s^\zeta)\mathcal L^T}\mathrm{diag}(\vec\pi)e^{(\tau_t^\zeta-\tau_s^\zeta)\mathcal L}\right)/\sqrt{\vec\pi}\\
    =&\sqrt{1-e^{-2\tau_t}}\mathcal N\left(0, \diag(\vec \pi)- (\sum_i e^{-\lambda_i(\tau_t^\zeta-\tau_s^\zeta)}P_i)\mathrm{diag}(\vec\pi)(\sum_i e^{-\lambda_i(\tau_t^\zeta-\tau_s^\zeta)}P_i^T)\right)/\sqrt{\vec\pi}\\
    =&\sqrt{1-e^{-2\tau_t}}\mathcal N\left(0, I- (\sum_i e^{-\lambda_i(\tau_t^\zeta-\tau_s^\zeta)}\tilde Q_i)(\sum_i e^{-\lambda_i(\tau_t^\zeta-\tau_s^\zeta)}\tilde Q_i^T)\right).
    \end{aligned}
    \end{equation*}
    Therefore, as desired,
    \begin{equation*}
    \begin{aligned}
    Q_1\vec x_t^\zeta\mid\vec x_s^\zeta=&\vec v\sim e^{-(\tau_t-\tau_s)}Q_1\vec v+\sqrt{(1-e^{-2\tau_t})\left(1-\frac{1-e^{-2\tau_s}}{1-e^{-2\tau_t}}e^{-2(\tau_t-\tau_s)}\right)}\mathcal N(0, I)\\
    =&\vec v\sim e^{-(\tau_t-\tau_s)}Q_1\vec v+\sqrt{1-e^{-2(\tau_t-\tau_s)}}\mathcal N(0, I)
    \end{aligned}
    \end{equation*}
    and similarly
    \begin{equation*}
    \begin{aligned}(1-e^{-2\tau_t})^{-1/2}Q_i\vec x_t^\zeta\mid\vec x_s^\zeta=\vec v\sim& \left(\frac{1-e^{-2\tau_s}}{1-e^{-2\tau_t}}\right)^{\lambda_i/2}e^{-\lambda_i(\tau_t-\tau_s)}((1-e^{-2\tau_2})^{-1/2}Q_i\vec v)\\
    &+\sqrt{1-\left(\frac{1-e^{-2\tau_s}}{1-e^{-2\tau_t}}\right)^{\lambda_i}e^{-2\lambda_i(\tau_t-\tau_s)}}\mathcal N(0, I)\\
    =& e^{-(\tau_t^{(i)}-\tau_s^{(i)})}((1-e^{-2\tau_2})^{-1/2}Q_i\vec v)\\
    &+\sqrt{1-e^{-2(\tau_t^{(i)}-\tau_s^{(i)})}}\mathcal N(0, I)\\
    \end{aligned}
    \end{equation*}

    Finally, convergence is clearly uniform for nearby $\vec v$ using the uniformity of Lem.~\ref{lem:berry-esseen}.

    \textbf{Part 3. Tightness:}
    Pick $s< t\in(0, 1).$
    \begin{equation*}
    \mathbb{E}\|\vec{x}_t^\zeta - \vec{x}_s^\zeta\|^2 = \mathbb{E}\|\mathbb{E}[\vec{x}_t^\zeta|\vec{x}_s^\zeta] - \vec{x}_s^\zeta\|^2 + \mathbb{E}\|\vec{x}_t^\zeta - \mathbb{E}[\vec{x}_t^\zeta|\vec{x}_s^\zeta]\|^2
    \end{equation*}
    The first term has, for each $x_0$,
    \begin{equation*}
    \begin{aligned}
    \mathbb{E}\|\mathbb{E}[\vec{x}_t^\zeta|\vec{x}_s^\zeta] - \vec{x}_s^\zeta\|^2
    =&\mathbb E\|w_t(\vec x_se^{(\tau_t^\zeta-\tau_s^\zeta)\mathcal L}-\vec\pi)/\sqrt{\vec\pi}-\vec{x}_s^\zeta\|^2\\
    =&\frac{1}{\min_b\pi_b}\mathbb E\|w_t(\vec x_se^{(\tau_t^\zeta-\tau_s^\zeta)\mathcal L}-\vec x_s)-(w_s-w_t)(\vec x_s-\vec\pi)\|^2\\
    \leq &\frac{1}{\min_b\pi_b}\mathbb E\left(|w_t|\|\vec x_se^{(\tau_t^\zeta-\tau_s^\zeta)\mathcal L}-\vec x_s\|+|w_s-w_t|\|\vec x_s-\vec\pi\|\right)^2\\
    = &\frac{1}{\min_b\pi_b}\mathbb E\left(|w_t|\|(\vec x_s-\vec\pi)^T(I-e^{(\tau_t^\zeta-\tau_s^\zeta)\mathcal L})\|+|w_s-w_t|\|\vec x_s-\vec\pi\|\right)^2\\
    \leq &\frac{1}{\min_b\pi_b}\mathbb E\left(|w_t|(1-e^{-(\tau_t^\zeta-\tau_s^\zeta)\lambda_B})\|\vec x_s-\vec\pi\|+|w_s-w_t|\|\vec x_s-\vec\pi\|\right)^2\\
    =&\frac{1}{\min_b\pi_b}\left(|w_t|(1-e^{-(\tau_t^\zeta-\tau_s^\zeta)\lambda_B})+|w_s-w_t|\right)^2\mathbb E\|\vec x_s-\vec\pi\|^2\\
    \leq &\frac{\zeta}{\min_b\pi_b}\left((1-e^{-(\tau_t^\zeta-\tau_s^\zeta)\lambda_B})+|1-\frac{w_s}{w_t}|\right)^2\\
    &\times\left(\mathbb E\mathrm{Tr}\mathrm{Cov}(\mathrm{Mult}(\zeta, \vec x_0^Te^{\tau_s^\zeta)\mathcal L}/\zeta)+\|\vec x_0^Te^{\tau_s^\zeta\mathcal L}-\vec\pi\|^2\right)\\
    \leq &\frac{1}{\min_b\pi_b}\left((1-e^{-(\tau_t^\zeta-\tau_s^\zeta)\lambda_B})+|1-\frac{w_s}{w_t}|\right)^2(1+\zeta e^{-2\tau_s^\zeta})\\
    \leq &\frac{1}{\min_b\pi_b}\left((1-e^{-(\tau_t^\zeta-\tau_s^\zeta)\lambda_B})+|1-\frac{w_s}{w_t}|\right)^2\left(1+\frac{1}{e^{2\tau_s}-1}\right)\\
    \end{aligned}
    \end{equation*}
    Now,
    \begin{equation*}
    \begin{aligned}
    1-e^{-(\tau_t^\zeta-\tau_s^\zeta)\lambda_B}=&1-e^{-2\lambda_B(\tau_t-\tau_s)}\left(\frac{1-e^{-2\tau_s}(1-\zeta^{-1})}{1-e^{-2\tau_t}(1-\zeta^{-1})}\right)^{\lambda_B/2}\\
    \leq& 1- e^{-2\lambda_B(\tau_t-\tau_s)}\\
    &+1-\left(\frac{1-e^{-2\tau_s}(1-\zeta^{-1})}{1-e^{-2\tau_t}(1-\zeta^{-1})}\right)^{\lambda_B/2}.
    \end{aligned}
    \end{equation*}
    When $|\tau_s-\tau_t|<1/4\lambda_B$
    $$1- e^{-2\lambda_B(\tau_t-\tau_s)}\leq 4\lambda_B(\tau_t-\tau_s)\leq 4\lambda_B|t-s|\sup_{u\in[s, t]}\dot\tau_u.$$
    Next note that if $\alpha\geq 1$, $x\mapsto1-x^\alpha$ has decreasing derivative, from $0$ to $-\alpha$ on the interval $x\in[0, 1]$, so, it is dominated on this interval by $\alpha(1-x).$
    If $\zeta>1$,
    \begin{equation*}
    \begin{aligned}
    1-\left(\frac{1-e^{-2\tau_s}(1-\zeta^{-1})}{1-e^{-2\tau_t}(1-\zeta^{-1})}\right)^{\lambda_B/2}\leq& 1-\left(\frac{1-e^{-2\tau_s}(1-\zeta^{-1})}{1-e^{-2\tau_t}(1-\zeta^{-1})}\right)^{1\vee(\lambda_B/2)}\\
    \leq &(1\vee(\lambda_B/2))\left( 1-\left(\frac{1-e^{-2\tau_s}(1-\zeta^{-1})}{1-e^{-2\tau_t}(1-\zeta^{-1})}\right)\right)\\
    \leq &(1\vee(\lambda_B/2))\left(\frac{(e^{-2\tau_s}-e^{-2\tau_t})(1-\zeta^{-1})}{1-e^{-2\tau_t}}\right)\\
    \leq &\frac{1\vee(\lambda_B/2)e^{-2\tau_s}}{1-e^{-2\tau_t}}\left({1-e^{-2(\tau_t-\tau_s)}}\right)\\
    \leq&\frac{4\vee(2\lambda_B)e^{-2\tau_s}}{1-e^{-2\tau_t}}|t-s|\sup_{u\in[s, t]}\dot\tau_u
    \end{aligned}
    \end{equation*}
    
    Finally
    \begin{equation*}
    \begin{aligned}
    1-\frac{w_s}{w_t}=1-\left(\frac{1-e^{-2\tau_s}(1-\zeta^{-1})}{1-e^{-2\tau_t}(1-\zeta^{-1})}\right)^{1/2}.
    \end{aligned}
    \end{equation*}
    which is similar to above.
    
    The second term has 
    \begin{equation*}
    \begin{aligned}
    \mathbb{E}\|\vec{x}_t^\zeta - \mathbb{E}[\vec{x}_t^\zeta|\vec{x}_s^\zeta]\|^2\leq &\frac{2\zeta}{\min_b\pi_b}\sum_b\mathbb E\mathrm{Tr}\mathrm{Cov}(\mathrm{Mult}(\zeta \vec x_{s, b}, \vec b^Te^{(\tau_t^\zeta-\tau_s^\zeta)\mathcal L})/\zeta\mid\vec x_t^\zeta) \\
    = &\frac{2}{\min_b\pi_b}\sum_b\mathbb E\vec x_{s, b}^\zeta\sum_{b'}(\vec b^Te^{(\tau_t^\zeta-\tau_s^\zeta)\mathcal L}\vec b')(1-\vec b^Te^{(\tau_t^\zeta-\tau_s^\zeta)\mathcal L}\vec b') \\
    \leq& \frac{2}{\min_b\pi_b}\left(\sum_{b\neq b'}\vec b^Te^{(\tau_t^\zeta-\tau_s^\zeta)\mathcal L}\vec b' + \sum_{b}(1-\vec b^Te^{(\tau_t^\zeta-\tau_s^\zeta)\mathcal L}\vec b)\right) \\
    =&\frac{4}{\min_b\pi_b}\sum_{b}(1-\vec b^Te^{(\tau_t^\zeta-\tau_s^\zeta)\mathcal L}\vec b) \\
    \leq &\frac{4B}{\min_b\pi_b}(1-e^{-(\tau_t^\zeta-\tau_s^\zeta)\lambda_B})
    \end{aligned}
    \end{equation*}
    which is bounded similar to the first term.

    \textbf{Part 4. Convergence of the ELBO:}
    Define
    $p = \vec x_0^Te^{\tau_t^\zeta\mathcal L}$.
    We've shown above that
    $$p=\vec \pi+\sqrt{\frac{1}{\zeta(e^{2\tau_t}-1)}}P_1\vec x_0+o(\zeta^{-1/2})$$
    so
    $$\frac{p_{b_2}}{p_{b_1}}=\frac{\pi_{b_2}}{\pi_{b_1}}+\frac{1}{\pi_{b_1}}\sqrt{\frac{1}{\zeta(e^{2\tau_t}-1)}}\left(\vec b_2-\frac{\pi_{b_2}}{\pi_{b_1}}\vec b_1\right)^TP_1\vec x_0+o(\zeta^{-1/2})$$
    and similar for $q$.
    Using a second-order Taylor expansion on $\mathbb{D}$, we get
    $$\mathbb{D}\left(\hat w(x_0)_{b_2, b_1}||\hat w(\tilde x_0)_{b_2, b_1}\right)=\frac 1 2\frac{\pi_{b_1}}{\pi_{b_2}}\frac{1}{\pi_{b_1}^2\zeta(e^{2\tau_t}-1)}\left(\left(\vec b_2-\frac{\pi_{b_2}}{\pi_{b_1}}\vec b_1\right)^TP_1(\vec x_0-\tilde x_0)\right)^2+o(\zeta^{-1}).$$
    Next note $\dot\tau_t^\zeta=\dot\tau_t\frac{e^{2\tau_t}}{e^{2\tau_t}-1}+o(1)$.
    Finally note 
    $$\vec x_t(\vec v)=\sqrt{\pi}\circ\vec v/\sqrt{\zeta-(\zeta-1)e^{-2\tau_t}}+\pi=\pi+o(1).$$
    Putting this together, we get
    \begin{equation*}
    \begin{aligned}
        L(\vec v,& t, \vec x_0, \tilde x_0)\\
        = &\sum_{b_1\neq b_2}\mathcal L_{b_2\to b_1} \dot\tau_t^\zeta \zeta \vec x_{t, b_1}(\vec v) \mathbb{D}\left(\frac{p_{b_2}}{p_{b_1}}\bigg|\bigg|\frac{q_{b_2}}{q_{b_1}}\right)\\
        =&\dot\tau_t\sum_{b_1\neq b_2}\mathcal L_{b_2\to b_1}\frac{e^{2\tau_t}}{e^{2\tau_t}-1}\pi_{b_1}\frac{1}{2\pi_{b_2}\pi_{b_1}}\frac{1}{(e^{2\tau_t}-1)}\left(\left(\vec b_2-\frac{\pi_{b_2}}{\pi_{b_1}}\vec b_1\right)^TP_1(\vec x_0-\tilde x_0)\right)^2+o(1)\\
        =&\frac{\dot\tau_t e^{2\tau_t}}{2(e^{2\tau_t}-1)^2}\sum_{b_1\neq b_2}\mathcal L_{b_2\to b_1}\left(\left(\vec b_2-\sqrt{\frac{\pi_{b_2}}{\pi_{b_1}}}\vec b_1\right)^T\tilde Q_1\left((\vec x_0-\tilde x_0)/\sqrt{\vec\pi}\right)\right)^2+o(1)\\
        =&\frac{\dot\tau_t e^{-2\tau_t}}{(1-e^{-2\tau_t})^2}\left\|\tilde Q_1\left((\vec x_0-\tilde x_0)/\sqrt{\vec\pi}\right)\right\|_\Sigma^2+o(1)
    \end{aligned}
    \end{equation*}
    where 
    $$\Sigma = \frac 1 2\sum_{b_1\neq b_2}\mathcal L_{b_2\to b_1}\left(\vec b_2-\sqrt{\frac{\pi_{b_2}}{\pi_{b_1}}}\vec b_1\right)\left(\vec b_2-\sqrt{\frac{\pi_{b_2}}{\pi_{b_1}}}\vec b_1\right)^T.$$
    To solve $\Sigma$, we note
    \begin{gather*}
        \sum_{b_1\neq b_2}\mathcal L_{b_2\to b_1}\vec b_2\vec b_2^T=\sum_{ b_2} \vec b_2\vec b_2^T \sum_{b_1\neq b_2}\mathcal L_{b_2\to b_1}=-\sum_{ b_2} \vec b_2\vec b_2^T\mathcal L_{b_2, b_2}\\
        \sum_{b_1\neq b_2}\frac{\pi_{b_2}}{\pi_{b_1}}\mathcal L_{b_2\to b_1}\vec b_2\vec b_2^T=\sum_{ b_1} \vec b_1\vec b_1^T \sum_{b_2\neq b_1}\frac{\pi_{b_2}}{\pi_{b_1}}\mathcal L_{b_2\to b_1}=-\sum_{ b_1} \vec b_1\vec b_1^T\mathcal L_{b_1, b_1}\\
        \sum_{b_1\neq b_2}\sqrt{\frac{\pi_{b_2}}{\pi_{b_1}}}\mathcal L_{b_2\to b_1}\vec b_2\vec b_1^T=\mathrm{diag}(\sqrt{\vec\pi})\left(\mathcal L-\diag\mathcal L\right)\mathrm{diag}(1/\sqrt{\vec\pi})\\
        \sum_{b_1\neq b_2}\sqrt{\frac{\pi_{b_2}}{\pi_{b_1}}}\mathcal L_{b_2\to b_1}\vec b_1\vec b_2^T=(\mathrm{diag}(\sqrt{\vec\pi})\left(\mathcal L-\diag\mathcal L\right)\mathrm{diag}(1/\sqrt{\vec\pi}))^T.\\
    \end{gather*}
    So,
    $$\Sigma = -\frac 1 2\mathrm{diag}(\sqrt{\vec\pi})\mathcal L\mathrm{diag}(1/\sqrt{\vec\pi})-\frac 1 2(\mathrm{diag}(\sqrt{\vec\pi})\mathcal L\mathrm{diag}(1/\sqrt{\vec\pi}))^T.$$
    In particular, since $\tilde Q_1^T\mathrm{diag}(\sqrt{\vec\pi})\mathcal L\mathrm{diag}(1/\sqrt{\vec\pi})=-\tilde Q_1^T$,
    $$\tilde Q_1^T\Sigma\tilde Q_1=\tilde Q_1^T\tilde Q_1=Q_1^TQ_1.$$
    This gives us
    \begin{equation*}
    \begin{aligned}
        L(\vec v,& t, \vec x_0, \tilde x_0)\to\frac{\dot\tau_t e^{-2\tau_t}}{(1-e^{-2\tau_t})^2}\left\|\mathrm{emb}(x_0)-\mathrm{emb}(\tilde x_0)\right\|^2.
    \end{aligned}
    \end{equation*}
\end{proof}

\subsection{Hollow parameterization solves Gaussian ELBO singularity}\label{app: hollow proof}
Here we show that the hollow parametrization introduced in Sec.~\ref{sec: solve theory} resolves the singularity of the Gaussian ELBO in Alg.~\ref{alg: Gaussian diffusion} at $t\to 0^+$.
Before going into the proof, let us give some intuition.
Assume, $x_0^d$ were distributed uniformly and independently.
Then
$$p(x_0^d\mid x_t, t)\propto p(x_t^d\mid x_0^d, t)p(x_0^d\mid x_t^{-d}, t),$$
where $x_t^{-d}$ includes all positions but $d$.
However
$$p(x_0^d\mid x_t^{-d}, t)=\int p(x_0^d\mid x_0^{-d})dp(x_0^{-d}\mid x_t^{-d}, t)=\mathrm{Uniform}.$$
Therefore, we get $p(x_0^d\mid x_t, t)\propto p(x_t^d\mid x_0^d, t).$
At initialization, we can say our neural network $q_\theta(x_0^d\mid x_t^{-d}, t)\approx \mathrm{Uniform},$
so, 
$$q_\theta(x_0^d\mid x_t, t)\approx p(x_0^d\mid x_t, t).$$
Therefore, \textbf{the hollow parametrization initializes the diffusion model near a uniform, site-wise independent model.}
The proof below involves a lot of algebra, but the basic intuition for why we should not see singularities is that by initializing at a \textit{valid} diffusion model, we get comparable ELBOs.

Again we assume $D=1$ for simplicity as results are straightforward to generalize to higher $D$.
\begin{proposition}
    Assume $\mathrm{emb}$ is injective and $\tau_t$ is increasing and differentiable.
    Define $$L = \frac{\dot\tau_t e^{-2\tau_t}}{(1-e^{-2\tau_t})^2}\Vert \mathrm{emb}( x_0)-\mathrm{emb}( \tilde x_0)\Vert^2\vphantom{x^Te^{\tau_t\mathcal L}},$$
    and the normalized vectors $\vec\phi(x_t, t)\propto p(x_t\mid x_0, t)$.
    For $\tilde x_0$ build using the hollow predictor $\tilde x_0=\vec\phi(x_t, t)\circ\vec q/\vec\phi(x_t, t)^T\circ\vec q$ for a vector $t$ bounded away from $0$ and $\infty$,
    $$0<c=\min_b\vec q_b\leq \max_b\vec q_b<C<\infty,$$
    we have
    $$\mathbb E_{t, x_0, x_t}L<\infty.$$
\end{proposition}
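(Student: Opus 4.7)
The weight $w_t := \dot\tau_t e^{-2\tau_t}/(1-e^{-2\tau_t})^2$ is bounded on any subinterval where $\tau_t$ stays away from $0$, so the only potential source of divergence in $\mathbb{E}L$ is the blow-up $w_t = O(\dot\tau_t/\tau_t^2)$ as $\tau_t \to 0^+$. My plan is to show that the hollow parameterization forces the conditional error $\mathbb{E}_{x_t\mid x_0, t}\|\mathrm{emb}(x_0) - \mathrm{emb}(\tilde x_0)\|^2$ to decay faster than any polynomial in $\tau_t$ in this regime, easily beating the singularity of $w_t$. Throughout I fix $x_0$ and average over its (finite) distribution at the end.

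First I reduce to the mass $\tilde x_0$ assigns to wrong tokens. Since $\mathrm{emb}(\tilde x_0) - \mathrm{emb}(x_0) = \sum_{b \neq x_0} \tilde x_{0,b}\,(\mathrm{emb}(b) - \mathrm{emb}(x_0))$ with $\sum_{b\neq x_0}\tilde x_{0,b} \leq 1$, convexity of $\|\cdot\|^2$ yields $\|\mathrm{emb}(x_0) - \mathrm{emb}(\tilde x_0)\|^2 \leq M^2 \sum_{b\neq x_0}\tilde x_{0,b}$ with $M := \max_b\|\mathrm{emb}(b) - \mathrm{emb}(x_0)\|$. The bounds $c \leq q_b \leq C$ give $\tilde x_{0,b} \leq \min(1, K r_b)$ with $K := C/c$ and $r_b := \phi_b/\phi_{x_0} = p(x_t\mid b, t)/p(x_t\mid x_0, t)$. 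Substituting $x_t = e^{-\tau_t}\mathrm{emb}(x_0) + \sqrt{1-e^{-2\tau_t}}\,Z$ for $Z \sim \mathcal N(0, I)$ into the Gaussian likelihood and simplifying yields $r_b = \exp(-A_b + \sqrt{2A_b}\,Z_b)$, where $A_b := \tfrac{e^{-2\tau_t}\|\mathrm{emb}(x_0) - \mathrm{emb}(b)\|^2}{2(1-e^{-2\tau_t})}$ and $Z_b$ is standard normal (correlated across $b$, but only its marginal is needed). Injectivity of $\mathrm{emb}$ on the finite alphabet gives $A_b \geq c_0/\tau_t$ for some $c_0 > 0$ uniformly in $b\neq x_0$ as $\tau_t \to 0$.

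The key estimate is $\mathbb{E}[\min(1, K r_b)] \lesssim e^{-A_b/4}$, which I would obtain by splitting at the threshold $Z_b = z_\star := \sqrt{A_b/2} - (\log K)/\sqrt{2A_b}$ where $K r_b = 1$. The right tail gives $\mathbb{P}(Z_b > z_\star) \leq e^{-z_\star^2/2} \lesssim e^{-A_b/4}$ by the standard Gaussian tail bound. For the left piece, the Cameron--Martin-style identity $\int_{-\infty}^z e^{-A + \sqrt{2A}\,u}\varphi(u)\,du = \Phi(z - \sqrt{2A})$ (complete the square) yields $K\,\mathbb{E}[r_b\,\mathbbm{1}\{Z_b \leq z_\star\}] = K\,\Phi(z_\star - \sqrt{2A_b}) \lesssim e^{-A_b/4}$, since $z_\star - \sqrt{2A_b} \sim -\sqrt{A_b/2}$ as $A_b \to \infty$. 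Summing over $b \neq x_0$ and combining with the reduction above gives $\mathbb{E}_{x_t\mid x_0, t}\|\mathrm{emb}(x_0) - \mathrm{emb}(\tilde x_0)\|^2 \lesssim e^{-c_0/(4\tau_t)}$.

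To finish, on any subinterval $\{t : \tau_t \geq t_0 > 0\}$ the weight $w_t$ is bounded (using differentiability of $\tau_t$) and the crude estimate $\|\cdot\|^2 \leq 4M^2$ gives integrability; near $\tau_t = 0$ the super-polynomial decay $e^{-c_0/(4\tau_t)}$ dominates the $O(\dot\tau_t/\tau_t^2)$ blow-up of $w_t$, so $\int_0^1 w_t \cdot e^{-c_0/(4\tau_t)}\,dt < \infty$, and averaging over the finite $x_0$-distribution preserves finiteness. The main obstacle is the tail estimate in the previous step: naive $L^p$ bounds on $r_b$ are useless because $\mathbb{E} r_b = 1$ while $\mathbb{E} r_b^2 = e^{A_b} \to \infty$, so the super-polynomial decay can only be extracted by exploiting the truncation $\min(1, K r_b)$ through the Gaussian MGF identity rather than through moment bounds.
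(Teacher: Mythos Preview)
Your proof is correct and takes a genuinely different route from the paper's. The paper conditions on $x_t$, bounds the error by $(1-\max_b\phi_b)^2$, and then splits geometrically into the event that $x_t$ lies within distance $\epsilon$ of some embedding (yielding an $e^{-4\epsilon^2/(1-e^{-2\tau_t})}$ bound) versus the complementary event (handled by a $\chi^2$ tail and the identity $\int_0^\infty \mathbb P(\chi^2_r/\epsilon^2>u)\,du=\mathbb E\chi^2_r/\epsilon^2$). You instead condition on $x_0$, express each likelihood ratio $r_b$ explicitly as $\exp(-A_b+\sqrt{2A_b}\,Z_b)$ in the Gaussian noise, and extract the bound $\mathbb E[\min(1,Kr_b)]\lesssim e^{-A_b/4}$ by threshold-splitting plus completing the square. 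This avoids the near/far case analysis entirely and gives a single clean exponential rate; the paper's decomposition is arguably more geometric and might transfer more readily to non-Gaussian forward processes, but for this statement your route is tighter and more self-contained.

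One small wording issue: you claim ``the weight $w_t$ is bounded (using differentiability of $\tau_t$)'' on $\{\tau_t\ge t_0\}$, but differentiability alone does not bound $\dot\tau_t$ (typically $\tau_1=\infty$ forces $\dot\tau_t\to\infty$ near $t=1$). What you actually need, and what holds, is integrability: the change of variables $\tau=\tau_t$ absorbs $\dot\tau_t$ as the Jacobian, and $\int_{t_0}^\infty e^{-2\tau}/(1-e^{-2\tau})^2\,d\tau<\infty$. The same substitution handles the near-$\tau_t=0$ piece, turning $\int \dot\tau_t\,\tau_t^{-2}e^{-c_0/(4\tau_t)}\,dt$ into $\int_0^{t_0}\tau^{-2}e^{-c_0/(4\tau)}\,d\tau<\infty$. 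With that adjustment your argument is complete.
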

\begin{proof}
    Note first
    $$\Vert \mathrm{emb}( x_0)-\mathrm{emb}( \tilde x_0)\Vert^2\leq \|\mathrm{emb}\|^2\|\vec x_0-\tilde x_0\|$$
    and, simplifying $\vec \phi=\vec\phi(x_t, t)$,
    $$\mathbb E_{x_0\mid x_t}\|\vec x_0-\tilde x_0\|=\|\vec \phi\circ\vec p/\vec\phi^T\vec p-\vec \phi\circ\vec q/\vec\phi^T\vec q\|$$
    for $\vec p_b=p(x_0).$

    Call $b=\mathrm{argmax}_{b'}\vec\phi_{b'},$ so 
    \begin{equation*}
        \begin{aligned}
            \|\vec \phi\circ\vec p/\vec\phi^T\vec p-\vec \phi\circ\vec q/\vec\phi^T\vec q\|\leq &\left(\frac{\vec\phi_b\vec p_b}{\vec\phi^T\vec p}-\frac{\vec\phi_b\vec q_b}{\vec\phi^T\vec q}\right)^2+(1-\vec\phi_b)^2\sum_{b'\neq b}\left(\frac{\vec p_{b'}}{\vec\phi^T\vec p}-\frac{\vec q_{b'}}{\vec\phi^T\vec q}\right)^2\\
            =&\left(\frac{1}{1+\sum_{b'\neq b}\frac{\vec\phi_{b'}\vec p_{b'}}{\vec\phi_{b}\vec p_{b}}}-\frac{1}{1+\sum_{b'\neq b}\frac{\vec\phi_{b'}\vec q_{b'}}{\vec\phi_{b}\vec q_{b}}}\right)^2\\
            &+\left(\frac{C}{c}\right)^2B(1-\phi_b)^2\\
            \leq& \left(1-\frac{1}{1+\frac{CB(1-\phi_b)}{c}}\right)^2\\
            \leq& \left(\frac{CB}{c}\right)^2(1-\phi_b)^2+\left(\frac{C}{c}\right)^2B(1-\phi_b)^2.
        \end{aligned}
    \end{equation*}
    
    We've therefore bounded $\mathbb E_{t, x_0, x_t}L$ above by some constant times $\mathbb E_{t, x_t}\frac{\dot\tau_t e^{-2\tau_t}}{(1-e^{-2\tau_t})^2}(1-\max_b\vec\phi_b)^2$.
    Note without the hollow parameterization, we wouldn't have the $(1-\max_b\vec\phi_b)^2$ term; we now show this becomes small very fast as $t\to 0$ (because $x_0$ becomes ``obvious'' from $x_t$), cancelling out the singularity.
    
    Next note, calling $b=\mathrm{argmin}_{b'}\|\mathrm{emb}(b')-\vec x_t\|,$
    \begin{equation*}
        \begin{aligned}
            (1-\max_b\vec\phi_b)^2=&\left(1-\frac{1}{1+\sum_{b'\neq b}\exp(-\frac{1}{2(1-e^{-2\tau_t})^2}(\|\mathrm{emb}(b')-\vec x_t\|^2-\|\mathrm{emb}(b)-\vec x_t\|^2))}\right)^2\\
            \leq &\sum_{b'\neq b}\exp\left(-\frac{1}{2(1-e^{-2\tau_t})}(\|\mathrm{emb}(b')-\vec x_t\|^2-\|\mathrm{emb}(b)-\vec x_t\|^2)\right),
        \end{aligned}
    \end{equation*}
    which is only large if $\vec x_t$ is roughly equidistant to two potential $x_0$.
    Call $\epsilon=\min_{b\neq b'}\|\mathrm{emb}(b)-\mathrm{emb}(b')\|/4,$ so, if $\mathrm{min}_{b'}\|\mathrm{emb}(b')-\vec x_t\|<\epsilon$ then, by the triangle inequality
    \begin{equation*}
        \begin{aligned}
            \|\mathrm{emb}(b')-\vec x_t\|^2-\|\mathrm{emb}(b)-\vec x_t\|^2\geq &(\|\mathrm{emb}(b)-\mathrm{emb}(b')\|-\|\mathrm{emb}(b)-\vec x_t\|)^2\\
            &-\|\mathrm{emb}(b)-\vec x_t\|^2\\
            =&\|\mathrm{emb}(b)-\mathrm{emb}(b')\|\\
            &-2\|\mathrm{emb}(b)-\mathrm{emb}(b')\|\|\mathrm{emb}(b)-\vec x_t\|\\
            \geq & 16\epsilon^2-8\epsilon^2=8\epsilon^2.
        \end{aligned}
    \end{equation*}
    Therefore, $\mathbb E_{t, x_t}\frac{\dot\tau_t e^{-2\tau_t}}{(1-e^{-2\tau_t})^2}(1-\max_b\vec\phi_b)^2$ is bounded by
    $$B\mathbb E_{t}\frac{\dot\tau_t e^{-2\tau_t}}{(1-e^{-2\tau_t})^2}\left(\exp\left(-\frac{4\epsilon^2}{(1-e^{-2\tau_t})}\right)+p(\mathrm{min}_{b'}\|\mathrm{emb}(b')-\vec x_t\|\geq\epsilon)\right).$$
    To deal with the first term, perform a change of variables $u=(1-e^{-2\tau_t})^{-1}$, giving
    $$\mathbb E_{t}\frac{\dot\tau_t e^{-2\tau_t}}{(1-e^{-2\tau_t})^2}\exp\left(-\frac{4\epsilon^2}{(1-e^{-2\tau_t})}\right)=\frac 1 2\int_0^\infty du\exp(-4\epsilon^2u)<\infty.$$
    For the second term, note
    \begin{equation*}
        \begin{aligned}
        p(\mathrm{min}_{b'}\|\mathrm{emb}(b')-\vec x_t\|\geq\epsilon)\leq &\sum_{b}p(x_0=b)p(\|\mathcal N(0, (1-e^{-2\tau_t})I_{r\times r})\|>\epsilon)\\
        =&p(\chi^2_r/\epsilon^2>1/(1-e^{-2\tau_t}))
        \end{aligned}
    \end{equation*}
    where $\chi^2_r$ is a chi-squared distribution with $r$ degrees of freedom.
    Finally, by the same change of variables $u$ as above, we get
    $$\mathbb E_{t}\frac{\dot\tau_t e^{-2\tau_t}}{(1-e^{-2\tau_t})^2}p(\mathrm{min}_{b'}\|\mathrm{emb}(b')-\vec x_t\|\geq\epsilon)=\frac 1 2\int_0^\infty dup(\chi^2_r/\epsilon^2>u)=\frac 1 2\mathbb E \chi^2_r/\epsilon^2<\infty.$$

\end{proof}

\subsection{Every embedding can be induced from some infinitesimal generator}\label{app: rep to inf gen}

Define an injective embedding $\mathrm{emb}:\{1, \dots, B\}\to \mathbb R^{r}$ for some $r$.
For an infinitesimal generator $\mathcal L$ with a unique stationary distribution $\vec \pi$, define $Q_1=\mathfrak j_1(\tilde Q_1\tilde Q_1^T)^{-1/2}\tilde Q_1$, $\mathfrak j_1$ is some isometry, $\tilde Q_1=\diag(\vec\pi)^{-1/2}P_1\diag(\vec\pi)^{1/2}$ where $P_1$ is the projection onto the first left eigenspace.
Is there a choice of $\mathcal L$ such that $Q_1(\vec x_0/\sqrt{\vec\pi_{x_0}})=\mathrm{emb}(x_0)$ for every $x_0$?

If we restrict to $\mathcal L\in\mathbb R^{B\times B}$ then the answer is no.
To see this, call $W\in\mathbb R^{B\times r}$ the matrix with $W\vec b=\mathrm{emb}(b)$.
Then, defining $D=\diag(\vec\pi)^{-1/2}$, we need $W^TW=DQ_1Q_1^TD=DPD$ for some orthogonal projection $P$ or rank $r$.
The space $\{W^TW\mid W\in \mathbb R^{B\times r}\}$ generates all rank-$r$ positive-semi-definite matrices, an algebraic variety of dimension $B\times r$.
Meanwhile, $P$ has $r(B-r)$ degrees of freedom and $D$ has $B-1$, so $DPD$ generates an algebraic variety of dimension at most $B\times r - r^2+B-1$, which is less than $B\times r$ when $r$ is large.

If however we allow $r+1$ ``dummy'' tokens, to let $\mathcal L\in\mathbb R^{(B+r)\times (B+r)}$, then the next proposition shows that the answer is yes.
This demonstrates an important distinction between the design space of Gaussian and discrete diffusions: dummy variables which never appear in the data have no effect on the training of Gaussian diffusion, but can serve as transient states in discrete diffusion.

\begin{proposition}
    There is some infinitesimal generator $\mathcal L\in\mathbb R^{(B+r+1)\times (B+r+1)}$ such that $Q_1(\vec x_0/\sqrt{\vec\pi})=\mathrm{emb}(x_0)$ for every $x_0\in\{1, \dots, B\}$.
    There are infinitely many such generators.
\end{proposition}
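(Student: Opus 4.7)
I will construct $\mathcal L$ as a reversible (detailed-balance) rate matrix on $N = B+r+1$ states whose dominant non-zero eigenspace is engineered to realize the prescribed embedding. Under detailed balance with $D=\diag(\sqrt{\vec\pi})$, the remark in the main text identifies $\tilde Q_1$ with the ordinary orthogonal projection $\Pi_1$ onto the $-1$-eigenspace of the symmetrized generator $S = D\mathcal L D^{-1}$. Thus the identity $Q_1(\vec x_0/\sqrt{\pi_{x_0}}) = \mathrm{emb}(x_0)$ becomes the geometric problem of producing an $r$-dimensional subspace $\tilde V \subset \mathbb R^N$, orthogonal to $\sqrt{\vec\pi}$, whose orthogonal projection sends $\vec b/\sqrt{\pi_b}$ (in an orthonormal basis of $\tilde V$ identified with $\mathbb R^r$ via $\mathfrak j_1$) to $\mathrm{emb}(b)$. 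The $r+1$ extra states supply precisely the hidden coordinates that the $B\times B$ case lacks.

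\textbf{Construction.} I first pick $\vec\pi$ so that $\pi_b$ is small for $b\le B$ to accommodate $\|\mathrm{emb}(b)\|$; for example, $\pi_b\propto \min(1,\|\mathrm{emb}(b)\|^{-2})$ rescaled so that $\sum_{b\le B}\pi_b<1$, with the remaining mass distributed over the extras. Next I specify an orthonormal basis $\tilde f_1,\dots,\tilde f_r$ of $\tilde V$ with prescribed first-$B$ coordinates $\tilde f_{k,b} = E_{bk}\sqrt{\pi_b}$, where $E$ is the $B\times r$ matrix of embeddings; writing the remaining $r+1$ coordinates as a matrix $\tilde G\in\mathbb R^{(r+1)\times r}$, orthonormality and orthogonality to $\sqrt{\vec\pi}$ translate to
\begin{equation*}
\tilde G^T\tilde G \;=\; I - E^T\diag(\pi_1,\dots,\pi_B)E, \qquad \vec q^T\tilde G \;=\; -E^T(\pi_1,\dots,\pi_B)^T,
\end{equation*}
where $\vec q = (\sqrt{\pi_{B+1}},\dots,\sqrt{\pi_N})^T$. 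The quadratic condition is solvable because the right-hand side is PSD by the choice of $\pi_b$; absorbing the linear constraint via the splitting $\tilde G = \vec q\, a^T + \tilde G'$ with $\tilde G'\perp \vec q$ fixes $a$ and reduces to one more matrix-square-root problem, which is again handled by shrinking $\pi_b$ if necessary. Having fixed $\tilde V$ and hence $\Pi_1$, I set
\begin{equation*}
S \;=\; -\Pi_1 - c\bigl(I - \Pi_1 - \sqrt{\vec\pi}\sqrt{\vec\pi}^T\bigr), \qquad \mathcal L \;=\; D^{-1}S D,
\end{equation*}
for some $c>1$. By construction $\mathcal L$ has eigenvalues $0,\,-1,\,-c$ on the three required subspaces and satisfies $\mathcal L\mathbbm 1 = 0$, and detailed balance with stationary $\vec\pi$ holds automatically.

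\textbf{Main obstacle and multiplicity.} The delicate point is the positivity of the off-diagonal entries of $\mathcal L$. Direct computation gives $\mathcal L_{ij} = \sqrt{\pi_j/\pi_i}\bigl[(c-1)(\Pi_1)_{ij} + c\sqrt{\pi_i\pi_j}\bigr]$ for $i\ne j$, so I need $(c-1)(\Pi_1)_{ij} + c\sqrt{\pi_i\pi_j}\ge 0$. Since $|(\Pi_1)_{ij}|\le 1$ while $\sqrt{\pi_i\pi_j}$ is uniformly positive, choosing $c-1$ smaller than $\min_{i\ne j}\sqrt{\pi_i\pi_j}/\max_{i\ne j}|(\Pi_1)_{ij}|$ simultaneously enforces all constraints. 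Infinitely many generators then follow: $c$ can vary over a nontrivial open interval, and moreover any orthogonal transformation of $\tilde G$ that preserves both $\tilde G^T\tilde G$ and the linear constraint yields a distinct $\tilde V$ that realizes the same embedding, giving a continuous family of valid generators.
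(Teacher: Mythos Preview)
Your proposal is correct and follows essentially the same strategy as the paper. Both arguments (i) restrict to reversible generators so that $\tilde Q_1$ becomes an honest orthogonal projection $\Pi_1$, (ii) realise $\Pi_1$ by extending the $B\times r$ embedding block with $r+1$ auxiliary coordinates to obtain an orthonormal system orthogonal to $\sqrt{\vec\pi}$, and (iii) produce the generator as a small perturbation of the ``uniform'' generator $-(I-\mathbbm 1\vec\pi^T)$; indeed your $\mathcal L=D^{-1}SD$ rescales to the paper's $\mathcal L_\mu=-(I-\mathbbm 1\vec\pi^T)+\mu\,D^{-1}\Pi_1 D$ under $\mu=(c-1)/c$. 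The main difference is in step (ii): you solve for the auxiliary block $\tilde G$ directly via two matrix square-root problems after shrinking $\pi_b$ on the original tokens, whereas the paper fixes $\pi_b=1/\eta$ uniformly, diagonalises $W^TW$, and invokes the intermediate value theorem (plus an extra token to avoid a degenerate case) to locate $\eta$. Your route is a bit more transparent and avoids the eigen/IVT detour; the paper's route makes the role of the scaling parameter $\eta$ explicit. Both yield an open interval of valid perturbation parameters, hence infinitely many generators.
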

\begin{proof}
    Call $W\in\mathbb R^{B\times r}$ the matrix with $W\vec b=\mathrm{emb}(b)$.
    Call $\Lambda=W^TW$ and without loss of generality, assume its first $r$ rows are linearly independent.
    We split the proof into two parts:
    first we show that $Q_1^TQ_1$ can equal $DPD$ for any orthogonal projection matrix $P$ with $P\sqrt{\vec\pi}=0$ and $D=\diag(\vec\pi)^{-1/2}$ for any distribution $\pi$;
    then we show that $\Lambda$ can be written as the top $B\times B$ submatrix of $DPD$ for some choice opf $D$ and $P$.
    This will show that there is a $Q_1$ such that $Q_1(\cdot/\sqrt{\vec\pi_\cdot})$ is equivalent to $\mathrm{emb}$ up to isometry.
    
    \paragraph{Part 1}
    Pick an orthogonal projection $P$ and a distribution $ \pi$ such that $P\sqrt{\vec\pi} =0$
    Call $\tilde P = \diag(\vec\pi)^{-1/2} P\diag(\vec\pi)^{1/2}$ and
    $$\mathcal L_\mu = -\left(I-\mathbbm 1\vec\pi^T\right)+\mu\tilde P.$$
    Clearly, for every $\mu$, $\mathcal L_\mu\mathbbm 1=0$ and $\vec\pi^T\mathcal L_\mu=0$.
    Also, $\mathcal L_1=-I+\mathbbm 1\vec\pi^T$, so for $\mu$ in a neighbourhood of $1$, $\mathcal L_\mu$ has positive entries off the diagonal -- therefore it's an infinitesimal generator -- and $\mathcal L_\mu$ has a kernel of dimension $1$ -- so $\pi$ is the unique stationary distribution of $\mathcal L_\mu$.

    When $\mu$ is slightly greater than $0$, the first eigenspace of $\mathcal L_\mu$ is that of $\tilde P$; in particular, when $\tilde P$ is a projection,
    $P_1=\tilde P^T$ so $\tilde Q_1=P.$
    Note $Q_1^TQ_1=\tilde Q_1^T(\tilde Q_1\tilde Q_1^T)^{-1}\tilde Q_1$ which is the projection onto the orthogonal complement of $\mathrm{Ker} \tilde Q_1=\mathrm{Ker}P$;
    therefore it is equal to $P.$

    $P$ and $\vec \pi$ are the same for any small value of $\mu$, justifying the ``infinitely many'' proposal in the statement. 

    \paragraph{Part 2}
    First we need to ensure the rare case that $\mathbbm 1$ is orthogonal to the top eigenspace of $\Lambda$ does not occur.
    To ensure this, simple add another embedding $\mathrm{emb}(B+1)=\sum_b\vec w_b\mathrm{emb}(b)$ for some $\vec w$ to get a new matrix $\Lambda$ adding this extra token:
    \begin{equation*}
        \tilde \Lambda := \begin{bmatrix} \Lambda&\Lambda\vec w\\
        (\Lambda\vec w)^T&\vec w^T\Lambda\vec w
        \end{bmatrix}
    \end{equation*}
    Pick a $\vec v$ so $\vec v^T\Lambda\vec v\neq 0$ and $\vec w = \eta\vec v$.
    As $\eta\to\infty$, $\tilde\Lambda/\eta^2\to\vec v^T\Lambda\vec v (\vec e\vec e^T)$
    where $e$ is the indicator vector for position $B+1$.
    Therefore for some $\eta$, the top eigenvector approaches $\vec e$ and is not orthogonal to $\mathbbm 1$.
    Below we simply assume that $\mathbbm 1$ is not orthogonal to the top eigenspace of $\Lambda$.
    
    Decompose $\Lambda=\eta V\mathrm{diag}(\vec\lambda/\eta)V^T$ for a matrix $V\in\mathbb R^{B\times r}$ with orthonormal columns, a vector $\lambda$ of eigenvalues, and a scalar $\eta>\max_i\lambda_i$ to be chosen later.
    For an orthonormal matrix $U\in\mathbb{R}^{r\times r}$ to be chosen later, define
    \begin{equation*}
        \tilde V = \begin{bmatrix} V\mathrm{diag}(\vec{\lambda}/\eta)^{1/2}\\
        U(I-\mathrm{diag}(\vec{\lambda}/\eta))^{1/2}
        \end{bmatrix}
    \end{equation*}
    so $\tilde V$ has orthonormal columns.
    Define the orthogonal projection $P=\tilde V\tilde V^T$, so in particular, the upper $B\times B$ submatrix of $P$ is $\Lambda/\eta.$
    
    Finally we'll pick $\eta$ and $U$ to get a positive normalized vector $\vec\pi$ such that $\vec\pi_b=1/\eta$ for all $b\in\{1, \dots, B\}$ and $\tilde V\sqrt{\vec\pi}=0$, completing the proof.
    Breaking $\vec\pi$ into its first $B$ components and other $r$ components, $[\mathbbm 1/\eta, \vec \pi_2]$, we can write the equation $\tilde V^T\sqrt{\vec\pi}=0$ as
    $$\vec\pi_2=-\eta^{-3/2}U^{-1}\mathrm{diag}\left(\frac{\vec{\lambda}}{I-\vec\lambda/\eta}\right)^{1/2}V^T\mathbbm 1.$$
    We can always choose $U$ to rotate to get $\vec\pi_2=\mathbbm 1 \eta'$ where 
    $$\eta'=\eta^{-3/2}\left\|\mathrm{diag}\left(\frac{\vec{\lambda}}{I-\vec\lambda/\eta}\right)^{1/2}V^T\mathbbm 1\right\|/\sqrt{r}.$$
    Finally we need to solve for $\eta$ in
    $$1=B/\eta + \eta^{-3}\left\|\mathrm{diag}\left(\frac{\vec{\lambda}}{I-\vec\lambda/\eta}\right)^{1/2}V^T\mathbbm 1\right\|^2.$$
    This is possible by the intermediate value theorem as the right hand side goes to $0$ as $\eta\to\infty$ and goes to $\infty$ as $\eta\to \max_i\lambda_i$ from above (as we've assumed $V_{:, i}^T\mathbbm 1\neq 0$ for $i$ where $\lambda_i$ is the maximum eigenvalue).
\end{proof}

\subsection{Proof of Wright-Fisher convergence}\label{app: wf proof}

Define $\Delta^B\subset \mathbb R^B$ be the simplex, i.e. the set of non-negative vectors with components summing to $1$.
Let $(\vec x_t^\zeta)_{t=0}^1$ be a stochastic process
on $(\frac 1 \zeta\mathbb Z^B)\cap\Delta^B$ with $\vec x^\zeta_0=\vec x_0$ evolving with respect to $\mathcal L^\mathrm{mut}+\zeta\mathcal L^\mathrm{wf}$ where
$$\mathcal L^\mathrm{wf}_{\vec x^\zeta\to \vec x^{\prime\zeta}}=\frac{\zeta!}{\prod_b\zeta \vec x^{\prime\zeta}_{b}!}\prod_b (\vec x^\zeta_b)^{\zeta \vec x^{\prime\zeta}_{b}}=\mathrm{Mult}(\zeta, \vec x^{\zeta})(\zeta\vec x^{\prime\zeta}),$$
and, if $\vec x^\zeta, \vec x^{\prime\zeta}$ differ by one count $b\to b'$,
$$\mathcal L^\mathrm{mut}_{\vec x^\zeta\to \vec x^{\prime\zeta}}=(\psi(\mathbbm 1\vec\pi^T-I))_{b, b'}=\psi\vec\pi_{b'}$$
otherwise it's $0$.
Let $(\vec z_t)_t$ be a continuous Wright-Fisher process, that is, $\vec z_t=\vec x_0$ and 
\begin{equation}\label{eqn: wf sde}
    d\vec z_t=\frac{\psi}{2}(\vec\pi-\vec z_t)dt+\mathrm{diag}\left(\sqrt{\vec z_t}\right)\left(I-\sqrt{\vec z_t}\sqrt{\vec z_t}^T\right)d\vec W_t
\end{equation}
where $(W_t)_t$ is a Brownian motion.

\subsubsection{Convergence of the forward process}\label{app: wf proof fwd}
We have convergence of the forward processes from previous literature.
\begin{theorem}
    (Thm 1.1 \citet[Chapter~10]{Ethier1986-gm})
    Assume $\mathcal L=\psi\times(\mathbbm{1}\vec\pi^T-I)$.
    In the topology of convergence of compact sets,
    $(\vec x^\zeta_t)_{t\in[0, 1)}\leadsto (\vec z_{\tau_t})_{t\in[0, 1)}.$
\end{theorem}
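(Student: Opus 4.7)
The plan is to reduce the stated limit to the classical Ethier--Kurtz convergence theorem for parent-independent Wright--Fisher chains on the natural (undilated) time axis, and then transport the conclusion across the time change $\tau_t$. Let $(\tilde{\vec x}^\zeta_\tau)_{\tau\ge 0}$ denote the continuous-time Markov chain on $(\zeta^{-1}\mathbb Z^B)\cap\Delta^B$ with generator $\mathcal L^{\text{mut}}+\zeta\mathcal L^{\text{wf}}$ started at $\vec x_0$, so that by construction $\vec x^\zeta_t=\tilde{\vec x}^\zeta_{\tau_t}$. Writing $\vec z_{\tau_t}$ as the composition of the WF diffusion $(\vec z_\tau)_{\tau\ge 0}$ from Eqn.~\ref{eqn: wf sde} with the deterministic, continuous, strictly increasing bijection $\tau:[0,1)\to[0,\infty)$, it suffices to (i) prove $\tilde{\vec x}^\zeta\leadsto\vec z$ in $D([0,\infty),\Delta^B)$, and (ii) push this conclusion through the time change.

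For step (i) I would invoke Theorem~1.1 of \citet[Chapter~10]{Ethier1986-gm}, which is tailored to exactly this multi-allelic Wright--Fisher model with parent-independent mutation; the ingredients needed are generator convergence on a core together with tightness on the compact state space $\Delta^B$. A Taylor expansion of $f\in C^3(\Delta^B)$ combined with the multinomial moment identities $\mathbb E[\Delta\mid\vec x]=0$ and $\mathrm{Cov}(\Delta\mid\vec x)=\zeta^{-1}(\mathrm{diag}(\vec x)-\vec x\vec x^T)$ for $\Delta=\mathrm{Mult}(\zeta,\vec x)/\zeta-\vec x$ yields
\begin{equation*}
(\zeta\mathcal L^{\text{wf}}+\mathcal L^{\text{mut}})f(\vec x)\longrightarrow \tfrac{\psi}{2}(\vec\pi-\vec x)^T\nabla f(\vec x)+\tfrac{1}{2}\,\mathrm{tr}\!\left((\mathrm{diag}(\vec x)-\vec x\vec x^T)\nabla^2 f(\vec x)\right),
\end{equation*}
uniformly over $\vec x\in\Delta^B$; polynomials form a core for the limit generator, and well-posedness of the associated martingale problem on $\Delta^B$ (for $\psi>0$) is classical. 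Compactness of $\Delta^B$ together with the uniform generator bound delivers tightness via Aldous--Rebolledo.

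For step (ii), the WF diffusion $\vec z$ has continuous paths almost surely, so Skorokhod convergence on $D([0,\infty))$ is equivalent to uniform convergence on every compact $[0,K]\subset[0,\infty)$. Applying the continuous mapping theorem to the composition map $(\omega_\tau)_{\tau\ge 0}\mapsto(\omega_{\tau_t})_{t\in[0,1)}$, which is continuous at continuous trajectories when both sides carry the topology of uniform convergence on compacts, transports this into uniform convergence of $\tilde{\vec x}^\zeta_{\tau_\cdot}$ to $\vec z_{\tau_\cdot}$ on every compact $[0,t^*]\subset[0,1)$, which is exactly the topology asserted in the theorem. The hard part is really contained in step (i)---generator convergence, identification of a core, tightness, and uniqueness for the martingale problem all must line up with the hypotheses of the Ethier--Kurtz theorem, and the per-individual versus per-population normalization of $\mathcal L^{\text{mut}}$ has to be carefully reconciled with the $\psi/2$ coefficient appearing in Eqn.~\ref{eqn: wf sde}. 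By contrast, the time change in step (ii) is an elementary continuous-mapping step that works cleanly because $\tau_t$ was designed so that $\tau_1=\infty$, so we never need to control the behaviour of either process ``at $t=1$.''
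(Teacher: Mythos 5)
Your proposal takes essentially the same route as the paper: the paper offers no argument of its own here, simply importing the convergence statement from Theorem~1.1 of Chapter~10 of Ethier and Kurtz, which is exactly the result you invoke for the undilated chain. Your additional details -- the Taylor-expansion/moment sketch of generator convergence, tightness, well-posedness of the martingale problem, and the continuous-mapping argument transporting the limit through the deterministic time change $\tau_t$ (valid because the limiting Wright--Fisher paths are continuous) -- are a correct elaboration of what the cited theorem and the paper's time-dilation convention leave implicit, including your rightly flagged need to reconcile the mutation-rate normalization with the $\psi/2$ drift in Eqn.~\ref{eqn: wf sde}.
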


Note when $B=2$, $(\vec z_t)_t$ is distributed as the Jacobi process described in~\citet{Avdeyev2023-mv}.

When $B>2$ \citet{Avdeyev2023-mv} considers $B-1$ parallel Wright-Fisher processes with $B=2$;
they then use a stick-breaking procedure to get an SDE on the simplex.
This SDE is distinct to ours in Eqn.~\ref{eqn: wf sde} and is not symmetric to the order of letters -- it requires us to specify a first letter, second letter, and so on, which behave differently in paths $(x_t)_t$ -- except for at stationary.
We instead directly consider the multi-allelic Wright-Fisher from \citet[Chapter~10]{Ethier1986-gm} which is invariant to permutations of letters in the alphabet and simplifies our derivations.

\subsubsection{Convergence of the ELBO}\label{app: wf proof elbo}

Call $\vec s(\vec v\mid x_0)=\nabla\log p(z_t|x_0, t)|_{z_t=\vec v},$ and $\vec s(\vec v\mid\tilde x_0, t)=\sum_b\tilde x_{0, b}\vec s(\vec v\mid x_0=b, t)$.

\begin{theorem}
    (Proof of Thm~\ref{thm: simplex loss})
    Call the ELBO in Alg.~\ref{alg: discrete diffusion}
    $$L^\zeta(\vec x^{\zeta}, t, x_0, \tilde x_0)=\sum_{\vec x_t^{\prime\zeta}\neq \vec x_t^{\zeta}}(\zeta\mathcal L_{\vec x_t^{\prime\zeta}\to \vec x_t^{\zeta}}^{\mathrm{wf}}+\mathcal L_{\vec x_t^{\prime\zeta}\to \vec x_t^{\zeta}}^{\mathrm{mut}})\dot\tau_t\mathbb{D}\left(\frac{p(\vec x_t^{\prime\zeta}\mid x_0, t)}{p(\vec x_t^{\zeta}\mid x_0, t)}\bigg|\bigg|\sum_b\tilde x_{0b}\frac{p(\vec x_t^{\prime\zeta}\mid x_0=b, t)}{p(\vec x_t^{\zeta}\mid x_0=b, t)}\right).$$
    For $\vec v$ for which $\zeta \vec v$ are not integers, define $L(\vec v, t, x_0, \tilde x_0)=L^\zeta(\vec x^{\zeta}, t, x_0, \tilde x_0)$ for a $\vec x^{\zeta}$ nearest to $\vec v$.
    Then, for all $\vec v$ in the interior of $\Delta^B$, $t\in(0, 1)$, $\tilde x_0\in\Delta^B$, and $x_0$,
    $$L^\zeta(\vec v, t, x_0, \tilde x_0)\to\frac{\dot\tau_t}{2}\|\vec s(\vec v\mid x_0, t)-\vec s(\vec v\mid \tilde x_0, t)\|^2_{\mathrm{diag}{\vec v}-\vec v\vec v^T}$$
\end{theorem}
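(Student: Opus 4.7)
The plan is to Taylor expand every ingredient of $L^\zeta$ around the fixed interior point $\vec v$ (taking $\vec x^\zeta$ as the nearest lattice point), exploiting that any transition with positive rate is small: single-letter mutations shift $\vec x^\zeta$ by $O(1/\zeta)$, and by a local central limit theorem the effective support of the reproductive kernel $\zeta\mathcal{L}^{\mathrm{wf}}_{\vec x^{\prime\zeta}\to\vec x^\zeta}=\zeta\,\mathrm{Mult}(\zeta,\vec x^{\prime\zeta})(\zeta\vec x^\zeta)$ lies in an $O(1/\sqrt{\zeta})$ neighbourhood of $\vec x^\zeta$. So both ratios $r(x_0):=p(\vec x^{\prime\zeta}\mid x_0,t)/p(\vec x^{\zeta}\mid x_0,t)$ and $r(\tilde x_0)$ approach $1$, the Poisson KL collapses to its quadratic form $\tfrac{1}{2}(\log r(x_0)-\log r(\tilde x_0))^2$, and a first-order Taylor expansion of $\log p$ gives $\log r(x_0)=(\vec x^{\prime\zeta}-\vec x^\zeta)^T \vec s^\zeta(\vec x^\zeta\mid x_0,t)+R$, where $\vec s^\zeta$ is the score of the discrete-$\zeta$ marginals and $R$ is a quadratic Taylor remainder. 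Substituting reduces the claim to a second-moment calculation:
\[
L^\zeta \;\approx\; \tfrac{\dot\tau_t}{2}\sum_{\vec x^{\prime\zeta}}(\zeta\mathcal L^{\mathrm{wf}}+\mathcal L^{\mathrm{mut}})_{\vec x^{\prime\zeta}\to\vec x^\zeta}\bigl((\vec x^{\prime\zeta}-\vec x^\zeta)^T \Delta\vec s^\zeta\bigr)^2,
\]
with $\Delta\vec s^\zeta=\vec s^\zeta(\cdot\mid x_0,t)-\vec s^\zeta(\cdot\mid\tilde x_0,t)$.

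Next, I would treat the two generator contributions separately. The mutation term has deterministic jump length $O(1/\zeta)$ at total rate $O(\zeta)$, so its aggregate contribution is $O(1/\zeta)\to 0$. The reproduction term is the substantive one: applying Stirling's formula to $\mathrm{Mult}(\zeta,\vec x^{\prime\zeta})(\zeta\vec x^\zeta)$ with $\vec x^{\prime\zeta}=\vec x^\zeta+\vec\epsilon/\sqrt{\zeta}$ gives a local Gaussian $\propto \exp\bigl(-\tfrac12 \vec\epsilon^T\mathrm{diag}(1/\vec v)\vec\epsilon\bigr)$ on $\sum_b \epsilon_b=0$ (the linear $\sqrt\zeta$ term cancels by the simplex constraint), and so the second moment of the rescaled jump under the reproduction rate is $\mathrm{diag}(\vec v)-\vec v\vec v^T$. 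Hence $\sum_{\vec x^{\prime\zeta}}\zeta\mathcal L^{\mathrm{wf}}(\vec x^{\prime\zeta}-\vec x^\zeta)(\vec x^{\prime\zeta}-\vec x^\zeta)^T\to\mathrm{diag}(\vec v)-\vec v\vec v^T$, and contracting on both sides with $\Delta\vec s$ yields the claimed metric. To convert $\vec s^\zeta$ to $\vec s$ I would use a local limit theorem sharpening of the forward convergence of App.~\ref{app: wf proof fwd}, so that the discrete marginals have smooth Gaussian local behaviour whose log-derivative converges to the continuous score from Prop.~\ref{prop: wf scores}.

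The main obstacle will be controlling the remainder terms uniformly in the sum. The quadratic Taylor remainder $R$ is $O(\|\vec x^{\prime\zeta}-\vec x^\zeta\|^2)$ and the cubic correction to the Poisson KL is $O(\|\vec x^{\prime\zeta}-\vec x^\zeta\|^3)$; bounding the fourth moment of the multinomial jump by $O(\zeta^{-2})$ shows these errors contribute $O(\zeta^{-1/2})$ after multiplication by the $\zeta$ reproduction rate. All the implicit constants depend on $\min_b \vec v_b$, both because the score derivatives of Eqn.~\ref{eqn:WF score} carry $1/\vec v_b$ factors and because Stirling's error blows up when some $\zeta\vec x^\zeta_b$ is small; since the theorem claims only pointwise convergence at interior $\vec v$, I fix $\delta>0$ with $\min_b\vec v_b\geq\delta$ and carry $\delta$-dependent constants throughout. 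A minor technicality is that $\zeta \vec v$ need not be integer, so the rounded surrogate $\vec x^\zeta$ differs from $\vec v$ by $O(1/\zeta)$; this rounding error is absorbed by local Lipschitzness of $\vec s$. Assembling these controls gives the claimed pointwise limit.
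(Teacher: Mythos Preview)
Your overall strategy matches the paper's: Taylor-expand $\mathbb{D}$ to its quadratic form in the near-$1$ ratios, expand the log-ratio as $(\vec x^{\prime\zeta}-\vec v)^T\vec s$, show the mutation contribution is $o(1)$, and identify the reproduction second-moment matrix with $\mathrm{diag}(\vec v)-\vec v\vec v^T$ via Stirling on the multinomial. Two steps in your sketch elide work that the paper handles explicitly and that is not routine.

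The first is tail control. Your fourth-moment bound presumes the Taylor remainder in $\mathbb{D}$ is globally $O(\|\vec x^{\prime\zeta}-\vec x^\zeta\|^3)$, but the sum runs over all $\vec x^{\prime\zeta}$ in the simplex, and for $\vec x^{\prime\zeta}$ near the boundary the likelihood ratios inside $\mathbb{D}$ are not close to $1$, so the quadratic approximation is useless there; a separate argument is required. The paper's Part~1 first eliminates an $\epsilon$-boundary by crudely bounding $\mathbb{D}$ via a universal lower bound $p(\vec x^\zeta\mid x_0)\gtrsim(2B)^{-\zeta}$ and the exponential decay $\mathcal{L}^{\mathrm{wf}}_{\vec x^{\prime\zeta}\to\vec x^\zeta}\lesssim(4B)^{-\zeta}$ when $\min_b\vec x^{\prime\zeta}_b<\epsilon$; only on $\Delta^B_\epsilon$ are the Taylor and moment estimates uniform. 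The second is the discrete-to-continuous score passage: a generic ``local limit theorem sharpening'' of process convergence does not by itself yield the uniform $O(\zeta^{-1})$ density ratio you need to replace the discrete log-ratio by $(\vec x^{\prime\zeta}-\vec v)^T\vec s(\vec v\mid x_0)$ with controllable error. The paper's Part~2 instead exploits the exact finite-$\zeta$ marginal structure (Lem.~\ref{lem: finite wf}): $p(\vec x^\zeta_t\mid x_0)=\mathbb{E}_{m\sim A^{(\zeta)}}\mathbb{E}_{\mathrm{Dir}(\psi\vec\pi+m\vec x_0)}\mathrm{Mult}(\zeta,\cdot)(\zeta\vec x^\zeta_t)$ with $|A^{(\zeta)}(m)-A(m)|\lesssim\zeta^{-1}e^{-cm^2}$, followed by a posterior-concentration/Laplace argument on the Dirichlet-multinomial to obtain the uniform rate.
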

\begin{proof}

\textbf{Overview of proof:}
For notational convenience, define
$$\mathbb D(\vec x^{\prime\zeta}_t)=\mathbb{D}\left(\frac{p(\vec x_t^{\prime\zeta}\mid x_0, t)}{p(\vec x_t^{\zeta}\mid x_0, t)}\bigg|\bigg|\sum_b\tilde x_{0b}\frac{p(\vec x_t^{\prime\zeta}\mid x_0=b, t)}{p(\vec x_t^{\zeta}\mid x_0=b, t)}\right).$$
Much of the proof consists of checking uniform convergence and regularity conditions.
The main idea however is that when $\zeta$ is very large, the transition rates $\zeta\mathcal L_{\vec x_t^{\prime\zeta}\to \vec x_t^{\zeta}}^{\mathrm{wf}}+\mathcal L_{\vec x_t^{\prime\zeta}\to \vec x_t^{\zeta}}^{\mathrm{mut}}$ are only large for $\vec x^{\prime\zeta}_t$ very close to $\vec v$. 
For those terms, we can perform a second order Taylor expansion 
\begin{equation*}
\begin{aligned}
    \mathbb{D}(\vec x_t^{\prime\zeta})\approx&\frac 1 2\Vert \vec s(\vec v\mid x_0, t)-\vec s(\vec v\mid \tilde x_0, t)\Vert^2_{(\vec x_t^{\prime\zeta}-\vec v)(\vec x_t^{\prime\zeta}-\vec v)^T}
    \end{aligned}
\end{equation*}
so
$$L^\zeta(\vec v, t, x_0, \tilde x_0)\approx\frac{\dot\tau_t}{2}\Vert \vec s(\vec v\mid x_0, t)-\vec s(\vec v\mid \tilde x_0, t)\Vert^2_{\Sigma}$$
where $\Sigma = (\sum_{\vec x_t^{\prime\zeta}\neq \vec x_t^{\zeta}}\zeta\mathcal L_{\vec x_t^{\prime\zeta}\to \vec x_t^{\zeta}}^{\mathrm{wf}}+\mathcal L_{\vec x_t^{\prime\zeta}\to \vec x_t^{\zeta}}^{\mathrm{mut}})(\vec x_t^{\prime\zeta}-\vec v)(\vec x_t^{\prime\zeta}-\vec v)^T.$
Finally, 
we show 
$\sum_{\vec x_t^{\prime\zeta}\neq \vec x_t^{\zeta}}\mathcal L_{\vec x_t^{\prime\zeta}\to \vec x_t^{\zeta}}^{\mathrm{mut}}(\vec x_t^{\prime\zeta}-\vec v)(\vec x_t^{\prime\zeta}-\vec v)^T\to 0,$
and through a central limit theorem,
$\sum_{\vec x_t^{\prime\zeta}\neq \vec x_t^{\zeta}}\zeta\mathcal L_{\vec x_t^{\prime\zeta}\to \vec x_t^{\zeta}}^{\mathrm{wf}}(\vec x_t^{\prime\zeta}-\vec v)(\vec x_t^{\prime\zeta}-\vec v)^T\to\diag\vec v-\vec v\vec v^T.$

Crucial to our proof is Lem.~\ref{lem: finite wf} which states that for each $\vec x^\zeta_t$ in the interior of the simplex,
$$p(\vec x^\zeta_t\mid x_0, t)=\mathbb E_{m\sim A^{(\zeta)}(\psi, \tau_t)}\mathbb E_{\vec p\sim \mathrm{Dir}(\psi\vec\pi+m\vec x_0)}\mathrm{Mult}(\zeta, \vec p)(\vec x_t^\zeta)$$
for a distribution $A^{(\zeta)}(\psi, \tau_t)$ such that $A^{(\zeta)}(\psi, \tau_t)(m)\to A(\psi, \tau_t)(m)$ quickly for each $m\leq \zeta$ as $\zeta\to\infty.$

\paragraph{Part 1: Eliminating the boundary}
For a small $\epsilon>0$, call $\Delta^B_\epsilon$ the points in $\Delta^B$ that have an entry less than $\epsilon$;
in particular, define $\epsilon<(4B)^{-2/\min_b\vec v_b}.$
We first show that the contribution form the epsilon-boundary vanishes, i.e.
$$E(\zeta)=\sum_{\vec x_t^{\prime\zeta}\not\in\Delta_\epsilon^B}(\zeta\mathcal L_{\vec x_t^{\prime\zeta}\to \vec x_t^{\zeta}}^{\mathrm{wf}}+\mathcal L_{\vec x_t^{\prime\zeta}\to \vec x_t^{\zeta}}^{\mathrm{mut}})\dot\tau_t\mathbb D(\vec x^{\prime\zeta}_t)\to 0.$$
First note for large enough $\zeta$, $\mathcal L_{\vec x_t^{\prime\zeta}\to \vec x_t^{\zeta}}^{\mathrm{mut}}=0$ for all $\vec x_t^{\prime\zeta}\not\in\Delta_\epsilon^B$ and 
$$\mathcal L_{\vec x_t^{\prime\zeta}\to \vec x_t^{\zeta}}^{\mathrm{wf}}\leq \binom{\zeta}{\zeta\vec x^\zeta_t}(\min_b\vec x^{\prime\zeta}_{t, b})^{\min_b\zeta\vec x^\zeta_{t, b}(\vec v)}\leq C(\epsilon^{2\min_b\vec v_b})^{\zeta}<C(4B)^{-\zeta}$$
for some $C>0$.
Also note for any $\vec x_t^{\zeta}$,
$$1\geq p(\vec x_t^{\zeta}\mid x_0, t)\geq p(A(\psi, \tau_t)=0)\mathbb E_{\vec p\sim \mathrm{Dir}(\psi\vec\pi)}\mathrm{Mult}(\zeta, \vec p)(\vec x_t^\zeta)$$
Taking the leading term of the divergence $\mathbb D(\vec x_t^{\prime \zeta})$,
$$E(\zeta)\lesssim \sum_{\vec x_t^{\prime\zeta}\not\in\Delta_\epsilon^B}(4B)^{-\zeta}\frac{-\log \mathbb E_{\vec p\sim \mathrm{Dir}(\psi\vec\pi)}\mathrm{Mult}(\zeta, \vec p)(\vec x_t^{\prime\zeta})}{\mathbb E_{\vec p\sim \mathrm{Dir}(\psi\vec\pi)}\mathrm{Mult}(\zeta, \vec p)(\vec x_t^{\prime\zeta})}.$$
Now $\mathrm{Mult}(\zeta, \vec p)(\vec x_t^{\prime\zeta})\geq(\min_b \vec p_b)^\zeta$ so the denominator is $\geq P_{\vec p\sim \mathrm{Dir}(\psi\vec\pi)}(\min_b\vec p\geq 1/2B)(2B)^{-\zeta}.$
Therefore
\begin{equation*}
    \begin{aligned}
        E(\zeta)\lesssim& (4B)^{-\zeta}\sum_{\vec x_t^{\prime\zeta}\not\in\Delta_\epsilon^B}\frac{\zeta\log(2B)}{(2B)^{-\zeta}}\\
        \lesssim&2^{-\zeta}\zeta\times \zeta^{B-1}\\
        \to& 0
    \end{aligned}
\end{equation*}
since there are $O(\zeta^{B-1})$ elements with $\vec x_t^{\prime\zeta}\not\in\Delta_\epsilon^B$.

\paragraph{Part 2: Uniform convergence of the likelihood}
Next we show $\frac{p(\vec x_t^{\zeta}(\vec v)\mid x_0, t)}{p(\vec z_t=\vec v\mid x_0, t)}= 1+O(\zeta^{-1})$
uniformly in $\Delta^B_\epsilon$. 
While something like this is implied by the convergence of the process from previous work, the fast uniform convergence will be important for our results below.

We will do so by showing the same property for each of the quotients
$$\frac{\mathbb E_{m\sim A^{(\zeta)}(\psi, \tau_t)}\mathbb E_{\vec p\sim \mathrm{Dir}(\psi\vec\pi+m\vec x_0)}\mathrm{Mult}(\zeta, \vec p)(\vec x_t^\zeta(\vec v))}{\mathbb E_{m\sim A^{(\zeta)}(\psi, \tau_t)} \mathrm{Dir}(\psi\vec\pi+m\vec x_0)(\vec x_t^\zeta(\vec v))}, \frac{\mathbb E_{m\sim A^{(\zeta)}(\psi, \tau_t)} \mathrm{Dir}(\psi\vec\pi+m\vec x_0)(\vec x_t^\zeta(\vec v))}{\mathbb E_{m\sim A(\psi, \tau_t)} \mathrm{Dir}(\psi\vec\pi+m\vec x_0)(\vec v)}.$$

The first quotient converges by the concentration of a Bayesian posterior~\citep{Miller2019-sh}.
In particular, by the uniform Stirling approximation~\citep{Robbins1955-td} uniformly for $\vec x^{\zeta}_t\in\Delta^B_{\epsilon/2}$,
\begin{equation*}
    \begin{aligned}
        \mathrm{Mult}(\zeta, \vec p)(\vec x_t^\zeta)=&(1+O(\zeta^{-1}))\left(\prod_b \vec x^{\zeta}_b\right)^{-1/2}(2\pi\zeta)^{-(B-1)/2}e^{-\zeta\mathrm{KL}(\vec x^{\zeta}_t||\vec p)}.
        \end{aligned}
\end{equation*}
We'd like to write this as approximately a normal density with mean $\vec x^{\zeta}$ and variance restricted to vectors summing to $0$ $\{\vec w\in\mathbb R^B\mid \vec w^T\mathbbm 1=0\}$.
We can do so with a Taylor expansion; for $\vec p$ near $\vec x^{\zeta}$,
$$\mathrm{KL}(\vec x^{\zeta}_t||\vec p)=\frac 1 2 \|\vec x^{\zeta}_t-\vec p\|^2_{\mathrm{diag}(\vec x_t^\zeta)^{-1}} -O(\|\vec x^{\zeta}_t-\vec p\|^3).$$
We can also write $\|\vec x^{\zeta}-\vec p\|^2_{\mathrm{diag}(\vec x_t^\zeta)^{-1}}=\|\vec x^{\zeta}-\vec p\|^2_{\Lambda^{\dagger}}$ where $\Lambda=\diag\vec x^{\zeta}-\vec x^{\zeta}\vec x^{\zeta T}$ has kernel orthogonal to vectors summing to $0$ and $$\Lambda^\dagger=\diag(\vec x^{\zeta}_t)^{-1}-\frac 1B\vec x^{\zeta-1}_t\mathbbm 1^T-\frac 1B\mathbbm 1\vec x^{\zeta-1, T}_t+\frac{\sum_b\vec x^{\zeta-1}_{t, b}}{B^2}\mathbbm 1\mathbbm 1^T.$$
Note also that the pseudo-determinant of $\Lambda$ is $\prod_b \vec x^{\zeta}_b$
so we can write
\begin{equation*}
    \begin{aligned}
        \mathrm{Mult}(\zeta, \vec p)(\vec x_t^\zeta)=&(1+O(\zeta^{-1}))(1-O(\zeta\|\vec x^{\zeta}-\vec p\|^3))\mathcal N(\vec x^\zeta_t, \zeta^{-1}\Lambda).
        \end{aligned}
\end{equation*}
This allows us to write
$$\frac{\mathbb E_{P(\vec p)}\mathrm{Mult}(\zeta, \vec p)(\vec x_t^\zeta)}{P(\vec x_t^\zeta)}=(1+O(\zeta^{-1}))\frac{\mathbb E_{\vec w\sim \mathcal N(0, \Lambda)}P(\vec x_t^\zeta+\zeta^{-1/2}\vec w)(1-O(\|\vec w\|^3/\zeta^{1/2}))}{P(\vec x_t^\zeta)}.$$
For a small $\delta < \epsilon/4$ call $\phi$ a $C^\infty$ function with support in the $\delta$-ball, and which is $1$ in the $\delta/2$-ball.
We break the numerator up into
\begin{equation*}
    \begin{aligned}
    \mathbb E_{\vec w\sim \mathcal N(0, \Lambda)}&\phi(\zeta^{-1/2}\vec w)P(\vec x_t^\zeta+\zeta^{-1/2}\vec w)(1-O(\|\vec w\|^3/\zeta^{1/2}))\\
    &+\mathbb E_{\vec w\sim \mathcal N(0, \Lambda)}(1-\phi(\zeta^{-1/2}\vec w))P(\vec x_t^\zeta+\zeta^{-1/2}\vec w)(1-O(\|\vec w\|^3/\zeta^{1/2})).
    \end{aligned}
\end{equation*}
The second term is less than
\begin{equation*}
    \begin{aligned}
    \mathbb E_{P(\vec p)}(1-\phi(\vec p-\vec x_t^\zeta))\mathcal N(0, \Lambda)(\sqrt\zeta(\vec x_t^\zeta-\vec p))\leq &\mathbb E_{P(\vec p)}\mathbbm 1(\|\vec x_t^\zeta-\vec p\|>\delta/2)\mathcal N(0, \Lambda)(\sqrt\zeta(\vec x_t^\zeta-\vec p))\\
    \lesssim &e^{-\zeta C\delta^2}
    \end{aligned}
\end{equation*}
for some $C$.
For the first term, we can define $\tilde P(\vec p)=P(\vec p)\phi(\vec p-\vec x_t^\zeta)$ which is a compactly supported $C^\infty$ function.
Therefore
\begin{equation*}
    \begin{aligned}
    \mathbb E_{\vec w\sim \mathcal N(0, \Lambda)}&\tilde P(\vec x_t^\zeta+\zeta^{-1/2}\vec w)(1-O(\|\vec w\|^3/\zeta^{1/2}))\\
    =&\tilde P(\vec x_t^\zeta)+\nabla\tilde P(\vec x_t^\zeta)^T\mathbb E_{\vec w\sim \mathcal N(0, \Lambda)}(\zeta^{-1/2}\vec w+O(\zeta\|w\|^2))(1-O(\|\vec w\|^3/\zeta^{1/2}))\\
    =&P(\vec x^\zeta_t)+O(\zeta^{-1}).
    \end{aligned}
\end{equation*}

For the second quotient, note the denominator is bounded below for $\vec v\in\Delta^B_\epsilon$.
By Lem.~\ref{lem: finite wf}
\begin{equation*}
\begin{aligned}
\sup_{\vec v\in\Delta^B_\epsilon}|\mathbb E_{m\sim A^{(\zeta)}(\psi, \tau_t)} \mathrm{Dir}&(\psi\vec\pi+m\vec x_0)(\vec x^\zeta_t(\vec v))-\mathbb E_{m\sim A(\psi, \tau_t)} \mathrm{Dir}(\psi\vec\pi+m\vec x_0)(\vec x^\zeta_t(\vec v))|\\
\lesssim & \zeta^{-1}\sum_m e^{-cm^2}\sup_{\vec v\in\Delta^B_{\epsilon/2}}\mathrm{Dir}(\psi\vec\pi+m\vec x_0)(\vec v).
\end{aligned}
\end{equation*}
Since $\sup_{\vec v\in\Delta^B_{\epsilon/2}}\mathrm{Dir}(\psi\vec\pi+m\vec x_0)(\vec v)\leq (m+\psi)^\psi(1-\epsilon/2)^{m-1}$ is eventually decreasing in $m$, the whole quotient is $O(\zeta^{-1}).$
Next note the derivative of $\mathbb E_{m\sim A(\psi, \tau_t)} \mathrm{Dir}(\psi\vec\pi+m\vec x_0)(\cdot)$ is bounded on the compact set $\Delta_\epsilon^B$ so 
\begin{equation*}
\begin{aligned}
\sup_{\vec v\in\Delta^B_\epsilon}|\mathbb E_{m\sim A(\psi, \tau_t)} \mathrm{Dir}(\psi\vec\pi+m\vec x_0)(\vec x^\zeta_t(\vec v))-&\mathbb E_{m\sim A(\psi, \tau_t)} \mathrm{Dir}(\psi\vec\pi+m\vec x_0)(\vec v)|\\
=&O(\|\vec x^\zeta_t(\vec v)-\vec v\|)=O(\zeta^{-1}).
\end{aligned}
\end{equation*}

\paragraph{Part 3: Taylor expansion of the divergence}
Given the calculation above, for $\zeta$ large enough and any $\vec x^{\prime\zeta}_t= \vec x^{\zeta}_t(\vec v)+O(\zeta^{-1/2})$, we can approximate
\begin{equation*}
\begin{aligned}
    \frac{p(\vec x_t^{\prime\zeta}\mid x_0, t)}{p(\vec x_t^{\zeta}(\vec v)\mid x_0, t)}=&\exp\left(\log p\left(\vec z_t=\vec x_t^{\prime\zeta}\mid x_0, t\right)-\log p\left(\vec z_t=\vec x_t^{\zeta}(\vec v)\mid x_0, t\right)\right)+O(\zeta^{-1})\\
    =&1+ \vec s(\vec v\mid x_0, t)^T(\vec x_t^{\prime\zeta}-\vec v)+O(\zeta^{-1}).
\end{aligned}
\end{equation*}
A second order Taylor expansion then gives
\begin{equation*}
\begin{aligned}
    \mathbb{D}(\vec x_t^{\prime\zeta})= &\frac 1 2\left((\vec s(\vec v\mid x_0, t)-\vec s(\vec v\mid \tilde x_0, t))^T(\vec x_t^{\prime\zeta}-\vec v)\right)^2+o(\zeta^{-1})\\
    =&\frac 1 2\Vert \vec s(\vec v\mid x_0, t)-\vec s(\vec v\mid \tilde x_0, t)\Vert^2_{(\vec x_t^{\prime\zeta}-\vec v)(\vec x_t^{\prime\zeta}-\vec v)^T}+o(\zeta^{-1}).
    \end{aligned}
\end{equation*}

Given the calculation above, we note that since $\mathcal L_{\vec x_t^{\prime\zeta}\to \vec x_t^{\zeta}(\vec v)}^{\mathrm{mut}}$ is only non-zero for $\zeta$ values of $\vec x_t^{\prime\zeta}$ each with $\vec x_t^{\prime\zeta} = \vec x_t^{\zeta}(\vec v)+O(\zeta^{-1}),$
$$\sum_{\vec x_t^{\prime\zeta}}\mathcal L_{\vec x_t^{\prime\zeta}\to \vec x_t^{\zeta}}^{\mathrm{mut}}\mathbb D(\vec x^{\prime\zeta}_t)=O(\zeta\times \zeta^{-2})=o(1).$$
This gives
$$L^\zeta(\vec v, t, x_0, \tilde x_0)=\frac{\dot\tau_t}{2}\Vert \vec s(\vec v\mid x_0, t)-\vec s(\vec v\mid \tilde x_0, t)\Vert^2_{\Sigma}+o(1)$$
where
$$\Sigma = \sum_{\vec x_t^{\prime\zeta}\in\Delta_\epsilon^B}\zeta\mathcal L_{\vec x_t^{\prime\zeta}\to \vec x_t^{\zeta}}^{\mathrm{wf}}(\vec x_t^{\prime\zeta}-\vec v)(\vec x_t^{\prime\zeta}-\vec v)^T.$$
The proof is therefore finished if we show $\Sigma\to\mathrm{diag}{\vec v}-\vec v\vec v^T.$

\paragraph{Part 4: Convergence of $\Sigma$}
Note, by the uniform Stirling approximation~\citep{Robbins1955-td} uniformly for $\vec x^{\prime\zeta}\in\Delta^B_\epsilon\setminus\{\vec x^\zeta_t\}$, the infinitesimal generator approximates a Normal distribution near $\vec v$,
\begin{equation*}
    \begin{aligned}
        \mathcal L^\mathrm{wf}_{\vec x^{\prime\zeta}\to \vec x^\zeta(\vec v)}=&(1+o(1))\left(\prod_b \vec v_b\right)^{-1/2}(2\pi\zeta)^{-(B-1)/2}e^{-\zeta\mathrm{KL}(\vec v||\vec x^{\prime\zeta})}\\
        =&(1+o(1)+O(\zeta\|\vec v-\vec x^{\prime\zeta} \|^3))\mathcal N\left(\vec v, \zeta^{-1}(\diag(\vec v)-\vec v\vec v^T)\right)(\vec x^{\prime\zeta})
    \end{aligned}
\end{equation*}
Noting that, by Pinsker's inequality, $\mathrm{KL}(\vec v||\vec x^{\prime\zeta})\geq 2\|\vec v-\vec x^{\prime\zeta}\|_{1}^2\geq \frac 2 { B}\|\vec v-\vec x^{\prime\zeta}\|^2$, for some very small $\delta>0$
\begin{equation*}
    \begin{aligned}
    \|\Sigma-(\mathrm{diag}{\vec v}-\vec v\vec v^T)\|\lesssim&\sum_{\vec x_t^{\prime\zeta}\in\Delta_\epsilon^B, \|\vec x^{\prime\zeta}_t-\vec v\|>\zeta^{-1/3-\delta}}\zeta^{-(B-1)/2+1}e^{-\zeta\mathrm{KL}(\vec v||\vec x^{\prime\zeta})}\\
    \lesssim&\zeta^{-(B-1)/2+B}e^{-\frac 2 { B}\zeta\zeta^{-2/3-2\delta}}\\
    =&o(1)
    \end{aligned}
\end{equation*}
\end{proof}

\subsection{Wright-Fisher loss calculations}\label{app: wf scores proof}
See the discussion above Prop.~\ref{prop: wf scores} for definitions.
\begin{proposition}
    (Proof of Prop.~\ref{prop: wf scores})
    $$p(\vec x_t\mid x_0, t)=\mathrm{Dirichlet}(\pi\psi)(\vec x_t)G_\psi(\tau_t, x_0, \vec x_t).$$
    For $\vec c(\vec x_t)=\nabla\log \mathrm{Dirichlet}(\pi\psi)(\vec x_t)$ which does not depend on $x_0$,
    \begin{equation*}
        \vec s = \vec s(\vec x_t\mid x_0, t)=\vec c(\vec x_t)+\vec x_0w(x_0)
    \end{equation*}
    where
    $$w(x_0) = \frac{e^{-\psi\tau_t/2}(\psi+1)}{\pi(x_0)}\frac{F_\psi(\tau_t, x_0, \vec x_t)}{G_\psi(\tau_t, x_0, \vec x_t)}.$$
\end{proposition}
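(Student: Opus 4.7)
The plan is to obtain the first identity from the known mixture representation of Wright-Fisher transition densities already exploited elsewhere in the paper, and then derive the score by differentiating the log-likelihood termwise and matching the resulting series against $F_\psi$.

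For the first identity, I would start from the ancestral-process mixture form
\begin{equation*}
p(\vec x_t \mid x_0, t) \;=\; \sum_{m=0}^\infty p\!\left(A(\psi,\tau_t)=m\right)\, \mathrm{Dir}(\psi\vec\pi + m\vec x_0)(\vec x_t),
\end{equation*}
which is the same representation used in Alg.~\ref{alg: jenkins general} and in App.~\ref{app: wf proof elbo}. Using the standard Gamma-function identity for the ratio of Dirichlet densities, I would factor out $\mathrm{Dir}(\psi\vec\pi)(\vec x_t)$ to obtain
\begin{equation*}
p(\vec x_t \mid x_0, t) \;=\; \mathrm{Dir}(\psi\vec\pi)(\vec x_t)\sum_{m=0}^\infty p\!\left(A(\psi,\tau_t)=m\right)\,\frac{(\psi)_{(m)}}{(\psi\pi_{x_0})_{(m)}}\,\vec x_{t,x_0}^{\,m}.
\end{equation*}
The content of the claim is then to identify the remaining series with $G_\psi(\tau_t,x_0,\vec x_t)$. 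This is precisely the conversion, due to \citet{Tavare1984-uz}, between the ancestral-process representation and the Jacobi/hypergeometric eigenfunction expansion of the transition density, and I would invoke it directly.

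For the second identity, write $\log p = \log \mathrm{Dir}(\psi\vec\pi) + \log G_\psi$: the first summand supplies $\vec c(\vec x_t)$ by definition. Since $G_\psi(\tau, x_0, \vec x_t)$ depends on $\vec x_t$ only through $v := \vec x_{t,x_0}$, the chain rule gives $\nabla_{\vec x_t}\log G_\psi = \vec x_0\, (\partial_v G_\psi)/G_\psi$, so everything reduces to showing $\partial_v G_\psi = \frac{(\psi+1)e^{-\psi\tau_t/2}}{\pi_{x_0}}\,F_\psi$. Differentiating $a_k^\psi$ termwise and applying the standard identity $\frac{d}{dv}{}_2F_1(a,b;c;v) = \frac{ab}{c}{}_2F_1(a+1,b+1;c+1;v)$ with $(a,b,c) = (-k,\psi+k-1,\psi\pi_{x_0})$, I would reindex by $k'=k-1$ and match the resulting coefficients to those of $b_{k'}^\psi$. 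A short algebraic check shows the exponential prefactors differ by exactly $e^{-\psi\tau/2}$ (since $-(k'+1)(k'+\psi)/2 + k'(k'+\psi+1)/2 = -\psi/2$), while the remaining rational factors collapse to $(\psi+1)/\pi_{x_0}$, yielding the claimed formula for $w(x_0)$.

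The main obstacle is justifying termwise differentiation and the rearrangement, because the series defining $G_\psi$ is an alternating series with large cancellations (the very phenomenon emphasized in App.~\ref{app: wf computational complexity}). However, on any compact subset of the interior of the simplex and for $\tau_t$ bounded away from $0$, the factor $e^{-k(k+\psi-1)\tau/2}$ decays super-geometrically in $k$ and dominates the polynomial-in-$k$ growth of $(2k+\psi-1)(\psi)_{(k-1)}/k!$ together with the uniformly bounded values of $_2F_1(-k,\psi+k-1;\psi\pi_{x_0};v)$ on compact sets. This gives absolute convergence of both the original and the differentiated series uniformly on compacta, legitimizing the swap of sum and derivative and pinning down $G_\psi > 0$ so that the division is well-defined.
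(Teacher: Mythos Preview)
Your proposal is correct and follows essentially the same route as the paper: start from the ancestral-process mixture, factor out $\mathrm{Dir}(\psi\vec\pi)$, identify the remaining power series in $\vec x_{t,x_0}$ with $G_\psi$ via Tavar\'e's eigenfunction expansion, and then obtain $F_\psi$ by applying the Gauss contiguous relation $\tfrac{d}{dv}{}_2F_1(a,b;c;v)=\tfrac{ab}{c}{}_2F_1(a{+}1,b{+}1;c{+}1;v)$ followed by the shift $k\mapsto k-1$. The only differences are of emphasis: the paper writes out the Tavar\'e double-sum rearrangement explicitly (swapping the $j$- and $k$-sums and using Pochhammer identities) rather than citing it, while you add a uniform-convergence justification for termwise differentiation that the paper leaves implicit.
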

\begin{proof}
    For $m_t\sim A(\psi, \tau_t)$,
\begin{equation*}
    \begin{aligned}
        p(\vec x_t\mid x_0, t)=&\mathbb E_{m_t}\mathrm{Dirichlet}(\psi\pi+m_tx_0)(\vec x_t)\\
        =&\prod_{b\neq x_0}\vec x_{t, b}^{\psi\pi_b-1}\mathbb E_{m_t}\frac{\Gamma(\psi+m_t)}{\Gamma(\psi\pi_{x_0}+m_t)\prod_{b\neq x_0}\Gamma(\psi\pi_b)}\vec x_{t, x_0}^{\psi\pi_{x_0}+m_t-1}\\
        =&\frac{\Gamma(\psi)}{\prod_{b\in\mathcal B}\Gamma(\psi\pi_b)}\prod_{b\in\mathcal B}\vec x_{t, b}^{\psi\pi_b-1}\mathbb E_{m_t}\frac{\Gamma(\psi\pi(x_0))\Gamma(\psi+m_t)}{\Gamma(\psi)\Gamma(\psi\pi_{x_0}+m_t)}\vec x_{t, x_0}^{m_t}\\
        =&\mathrm{Dirichlet}(\psi\pi)(\vec x_t)
        \mathbb E_{m_t}\frac{(\psi)_{(m_t)}}{(\psi\pi(x_0))_{(m_t)}}\vec x_{t, x_0}^{m_t}.\\
    \end{aligned}
\end{equation*}

From Eqn. 5.2 of \citet{Tavare1984-uz}, we have
\begin{equation*}
p(m_t=j)=\sum_{k=j}^\infty e^{-k(k+\psi-1)\tau_t/2}(-1)^k(-1)^j\frac{(2k+\psi-1)(j+\psi)_{(k-1)}}{j!(k-j)!}.
\end{equation*}
He wrote, in Eqn. A5,
\begin{equation*}
\begin{aligned}
    \sum_{j=1}^\infty &x^jp(m_t=j)\\
    =&\sum_{k=1}^\infty e^{-k(k+\psi-1)\tau_t/2}(-1)^k(2k+\psi-1)\sum_{j=1}^k\frac{x^j}{j!}\frac{(j+\psi)_{(k-1)}}{(k-j)!(-1)^j}\\
    =&\sum_{k=1}^\infty e^{-k(k+\psi-1)\tau_t/2}(-1)^k(2k+\psi-1)\sum_{j=1}^k\frac{x^j}{j!}\frac{(\psi)_{(j+k-1)}(-k)_{(j)}}{k!\psi_{(j)}}\\
    =&\sum_{k=1}^\infty e^{-k(k+\psi-1)\tau_t/2}\frac{(-1)^k(2k+\psi-1)(\psi)_{(k-1)}}{k!}\sum_{j=1}^k\frac{x^j}{j!}\frac{(\psi+k-1)_{(j)}(-k)_{(j)}}{\psi_{(j)}}.
\end{aligned}
\end{equation*}
The last sum is then written as $_2F_1(-k,\psi+k-1;\psi;x)-1$ for the hyper-geometric function $_2F_1$.
A very simple extension gives us
\begin{equation*}
\begin{aligned}
\sum_{j=1}^\infty \frac{(\psi)_{(j)}}{(\psi\pi_{x_0})_{(j)}}x^jp(m_t=j)=\sum_{k=1}^\infty &e^{-k(k+\psi-1)t/2}\frac{(-1)^k(2k+\psi-1)(\psi)_{(k-1)}}{k!}\\
&\times\left(_2F_1(-k,\psi+k-1;\psi\pi_{x_0};x)-1\right).
\end{aligned}
\end{equation*}
Including the $j=0$ term, by Eqn 5.3 of \citet{Tavare1984-uz}, cancels out the $-1$ in the brackets above, so our expectation
\begin{equation*}
\begin{aligned}
E_{m_t}\frac{(\psi)_{(m_t)}}{(\psi\pi_{x_0})_{(m_t)}}\vec x_{t, x_0}^{m_t}=&1+\sum_{k=1}^\infty e^{-k(k+\psi-1)\tau_t/2}\frac{(-1)^k(2k+\psi-1)(\psi)_{(k-1)}}{k!}\\
&\ \ \ \ \ \ \ \ \ \ \ \ \times{_2F_1(-k,\psi+k-1;\psi\pi_{x_0};\vec x_{t, x_0})}\\
=&G_\psi(t, x_0, \vec x_t).
\end{aligned}
\end{equation*}
Finally, using identities of the hypergeometric function,
\begin{equation*}
\begin{aligned}\nabla_{\vec x_{t, x_0}} G_\psi(t, x_0, \vec x_t)=&\sum_{k=1}^\infty e^{-k(k+\psi-1)\tau_t/2}\frac{(-1)^k(2k+\psi-1)(\psi)_{(k-1)}}{k!}\frac{-k(\psi+k-1)}{\psi\pi_{x_0}}\\
&\ \ \ \ \ \ \times{_2F_1(-k+1,\psi+k;\psi\pi_{x_0}+1;\vec x_{t, x_0})}\\
=&\frac{1}{\psi\pi_{x_0}}\sum_{k=1}^\infty e^{-k(k+\psi-1)\tau_t/2}\frac{(-1)^{k-1}(2k+\psi-1)(\psi+k-1)(\psi)_{(k-1)}}{(k-1)!}\\
&\ \ \ \ \ \ \ \ \ \ \ \ \ \ \ \ \ \ \ \ \ \times{_2F_1(-k+1,\psi+k;\psi\pi_{x_0}+1;\vec x_{t, x_0})}\\
=&\frac{1}{\psi\pi_{x_0}}\sum_{k=0}^\infty e^{-(k+1)(k+\psi)\tau_t/2}\frac{(-1)^{k}(2k+\psi+1)(\psi+k)(\psi)_{(k)}}{k!}\\
&\ \ \ \ \ \ \ \ \ \ \ \ \ \ \ \ \ \ \ \ \ \times{_2F_1(-k,\psi+k+1;\psi\pi_{x_0}+1;\vec x_{t, x_0})}\\
=&\frac{e^{-\psi t/2}(\psi+1)}{\pi_{x_0}}\sum_{k=0}^\infty e^{-k(k+\psi+1)\tau_t/2}\frac{(-1)^{k}(\psi)_{(k)}}{k!}\frac{(2k+\psi+1)(\psi+k)}{(\psi+1)\psi}\\
&\ \ \ \ \ \ \ \ \ \ \ \ \ \ \ \ \ \ \ \ \ \ \ \ \ \ \ \ \ \ \ \times{_2F_1(-k,\psi+k+1;\psi\pi_{x_0}+1;\vec x_{t, x_0})}\\
=:&\frac{e^{-\psi t/2}(\psi+1)}{\pi_{x_0}}F_\psi(t, x_0, \vec x_t).
\end{aligned}
\end{equation*}
\end{proof}

\subsection{Proof of sufficient statistics}\label{app: ssp proof}

\begin{proposition}
    (Proof of Prop.~\ref{prop: ssp})
    There is a function $F^d$, \textbf{depending on $p(x_0)$ and not on the diffusion process or $t$,}
    such that
    $$p(x_0^d\mid x_t^{-d}, t)=F^d(\vec\phi(\vec x_t^1, t), \dots, \vec\phi(\vec x_t^D, t)).$$
\end{proposition}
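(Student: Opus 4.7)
The plan is to apply Bayes' rule to $p(x_0^d\mid x_t^{-d}, t)$ and exploit the conditional independence of the forward process across positions given $x_0$, which is a defining property of the diffusion models considered in the paper (each position is corrupted independently).

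First, by Bayes' rule,
\begin{equation*}
p(x_0^d\mid x_t^{-d}, t) \propto p(x_0^d)\, p(x_t^{-d}\mid x_0^d, t)
= p(x_0^d)\sum_{x_0^{-d}} p(x_0^{-d}\mid x_0^d)\, p(x_t^{-d}\mid x_0^{-d}, t),
\end{equation*}
where the proportionality is in $x_0^d$. Since the forward process acts independently on each position given $x_0$,
\begin{equation*}
p(x_t^{-d}\mid x_0^{-d}, t) = \prod_{d'\neq d} p(x_t^{d'}\mid x_0^{d'}, t).
\end{equation*}

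Next I would use the definition $\vec\phi(x_t^{d'}, t)_b\propto p(x_t^{d'}\mid t, x_0^{d'}=b)$, so there is a scalar normalizer $C(x_t^{d'}, t)$ (depending on $x_t^{d'}$ and $t$ but not on the value of $x_0^{d'}$) with $p(x_t^{d'}\mid x_0^{d'}=b, t) = C(x_t^{d'}, t)\,\vec\phi(x_t^{d'}, t)_b$. Substituting,
\begin{equation*}
p(x_0^d\mid x_t^{-d}, t) \propto p(x_0^d)\,\Bigl[\prod_{d'\neq d} C(x_t^{d'}, t)\Bigr]\sum_{x_0^{-d}} p(x_0^{-d}\mid x_0^d) \prod_{d'\neq d}\vec\phi(x_t^{d'}, t)_{x_0^{d'}}.
\end{equation*}
Crucially, the bracketed product of normalizers does not depend on $x_0^d$, so it cancels when we renormalize in $x_0^d$. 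Therefore, defining
\begin{equation*}
F^d(\vec\phi^1,\dots,\vec\phi^D)_a \;\propto\; p(x_0^d=a)\sum_{x_0^{-d}} p(x_0^{-d}\mid x_0^d=a)\prod_{d'\neq d}\vec\phi^{d'}_{x_0^{d'}},
\end{equation*}
(with normalization chosen so the vector sums to $1$ in $a$), we obtain $p(x_0^d\mid x_t^{-d}, t) = F^d(\vec\phi(\vec x_t^1, t),\dots,\vec\phi(\vec x_t^D, t))$. The expression for $F^d$ involves only $p(x_0)$ and the sufficient statistics $\vec\phi^{d'}$; it contains no reference to $t$ or to the specific forward kernel.

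There is no substantial obstacle to this argument; the only subtle point is the cancellation of the $C(x_t^{d'}, t)$ normalizers, which is the mechanism that makes $F^d$ independent of both the diffusion process and $t$. I would note explicitly in the writeup that this relies on $\vec\phi$ being normalized in the token index, so that all time- and process-dependent prefactors are absorbed into the normalization constant that drops out of Bayes' rule, and on the product structure of $p(x_t^{-d}\mid x_0^{-d}, t)$ inherited from the site-wise independence of every forward process considered (discrete, Gaussian, and simplicial / Wright--Fisher).
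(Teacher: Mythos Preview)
Your proof is correct and follows essentially the same approach as the paper: Bayes' rule, site-wise factorization of the forward kernel, and cancellation of the $t$-dependent normalizers hidden in $\vec\phi$. The paper's version begins with $p(x_0^d\mid x_t^{-d})=\int p(x_0^d\mid x_0^{-d})\,dp(x_0^{-d}\mid x_t^{-d})$ and applies Bayes to the inner posterior, whereas you apply Bayes directly to $p(x_0^d\mid x_t^{-d})$ and then marginalize over $x_0^{-d}$; after one line of algebra both yield the same expression $F^d_a\propto \sum_{x_0^{-d}} p(x_0^d=a,x_0^{-d})\prod_{d'\neq d}\vec\phi^{d'}_{x_0^{d'}}$.
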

\begin{proof}
    \begin{equation*}
        \begin{aligned}
            p(x_0^d\mid x_t^{-d})=&\int p(x_0^d\mid x_0^{-d})dp(x_0^{-d}\mid x_t^{-d})\\
            =& \frac{1}{p(x_t^{-d})}\int p(x_0^d\mid x_0^{-d})p(x_t^{-d}\mid x_0^{-d})dp(x_0^{-d})\\
            = &\frac{1}{p(x_t^{-d})}\int p(x_0^d\mid x_0^{-d})\prod_{d'\neq d}p(x_t^{d'}\mid x_0^{d'})dp(x_0^{-d})\\
            = &\frac{\prod_{d'\neq d}\sum_b p(x_t^{d'}\mid x_0^{d'}=b)}{p(x_t^{-d})}\int p(x_0^d\mid x_0^{-d})\prod_{d'\neq d}\frac{p(x_t^{d'}\mid x_0^{d'})}{\sum_b p(x_t^{d'}\mid x_0^{d'}=b)}dp(x_0^{-d})\\
            =& E_{p(x_0^{-d})}\left(p(x_0^d|x_0^{-d})\prod_{d'\neq d}\vec \phi(x_t^{d'})_{x_0^{d'}}\right)/E_{p(x_0^{-d})}\left(\prod_{d'\neq d}\vec \phi(x_t^{d'})_{x_0^{d'}}\right),
        \end{aligned}
    \end{equation*}
\end{proof}

\subsection{Lemmas}

Our first lemma establishes conditions for convergence of paths using standard techniques inspired by arguments used throughout ~\citet{Ethier1986-gm} or \citet{Bass2011-ut} for example.
\begin{lemma}\label{lem: convergence}
    Say $(\vec x_t^\zeta)_{t\in(0, 1)}$ are Markov processes on $\mathbb R^r$ for $\zeta=1, 2, \dots$ and $(\vec z_t)_{t\in(0, 1)}$ is another Markov process on $\mathbb R^r$.
    Say the following conditions are satisfied
    \begin{enumerate}
        \item (Convergence of marginals) $\vec x_t^\zeta\leadsto \vec z_t$ for each $t$.\label{as marg conv}
        \item (Local uniform convergence of conditionals) Conditional distributions exist such that for each $\vec v\in\mathbb R^r$, $s<t$, and bounded compactly supported measurable function $f$, there is an $\epsilon>0$, such that $$\sup_{\|\vec w-\vec v\|<\epsilon}|\mathbb E_{\vec x_t^\zeta\mid \vec x_s^\zeta=\vec w}f-\mathbb E_{\vec z_t\mid\vec z_s=\vec w}f|\to 0.$$\label{as uniform conv}
        \item \sloppy(Tightness) For every $[a, b]\subset (0, 1)$, there are $\beta, \theta, M>0$ such that for all $s, t\in[a, b]$, $\sup_{\zeta>M}\mathbb E\|\vec x^\zeta_{s}-\vec x^\zeta_{t}\|^\beta<C(s-t)^\theta.$ \label{as: tight}
    \end{enumerate}
    Then, with the topology of convergence on compact sets\footnote{This is a standard topology for these results. See for example Thm 1.1 of \citet[Chapter~10]{Ethier1986-gm}.}, the paths converge in distribution 
    $$(\vec x_t^\zeta)_{t\in(0, 1)}\leadsto (\vec z_t)_{t\in(0, 1)}.$$
\end{lemma}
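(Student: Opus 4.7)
The plan is to follow the classical two-step recipe for weak convergence of Markov processes: establish convergence of all finite-dimensional distributions, then verify tightness in path space. Together these imply convergence in distribution on each compact subinterval $[a,b] \subset (0,1)$, which gives the stated convergence on compact sets by restriction to an exhausting sequence of such intervals.

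For the finite-dimensional step, fix $a \le t_1 < \cdots < t_k \le b$ and bounded continuous $f_1,\dots,f_k$. The Markov property writes
\[
\mathbb{E}\Bigl[\prod_i f_i(\vec x_{t_i}^\zeta)\Bigr]
= \mathbb{E}\!\left[f_1(\vec x_{t_1}^\zeta)\, g_\zeta(\vec x_{t_1}^\zeta)\right],
\]
where $g_\zeta(\vec w) := \mathbb{E}\bigl[\prod_{i\ge 2} f_i(\vec x_{t_i}^\zeta) \,\big|\, \vec x_{t_1}^\zeta = \vec w\bigr]$. I would induct on $k$. The base case $k=2$ is exactly the local uniform conditional hypothesis. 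For the induction step, conditioning $g_\zeta$ on $\vec x_{t_2}^\zeta$ reduces the problem to applying that same hypothesis to a bounded measurable functional which, by induction, converges locally uniformly to its limit. To push the limit through the outer expectation I would promote pointwise local uniform convergence to uniform convergence on any compact set by a finite cover; then, since the marginal convergence hypothesis makes $\{\vec x_{t_1}^\zeta\}$ tight, I can truncate the outer expectation to a large compact with negligible error and combine uniform convergence on that set with weak convergence of the marginals.

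For tightness of $\{(\vec x_t^\zeta)_{t\in[a,b]}\}_{\zeta \ge M}$ in the Skorokhod space $D([a,b],\mathbb{R}^r)$, I would invoke a Kolmogorov--Chentsov--type moment criterion using the uniform increment bound, together with marginal tightness. The moment control also rules out a non-trivial jump component in any subsequential limit, so the limit is supported on continuous paths; this lets one work equivalently in $C([a,b],\mathbb{R}^r)$ if preferred. Combining tightness with finite-dimensional convergence, every subsequential weak limit of $(\vec x_t^\zeta)_{t\in[a,b]}$ must have the same finite-dimensional distributions as $(\vec z_t)_{t\in[a,b]}$, hence equal it in law, so the full sequence converges weakly. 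Letting $[a,b]$ exhaust $(0,1)$ completes the proof.

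The main obstacle is the bookkeeping in the inductive finite-dimensional step: the local uniform hypothesis is stated for individual bounded compactly-supported $f$, but the inductive $g_\zeta$ are only bounded measurable, and carrying uniform control through iterated conditional expectations requires combining the local uniform hypothesis with marginal tightness at each intermediate time $t_i$. Once this bookkeeping is done carefully, the rest reduces to standard tools (e.g., the Kolmogorov-type tightness criteria in Chapter 3 of Ethier--Kurtz or Chapter 13 of Billingsley).
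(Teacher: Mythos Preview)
Your approach is essentially the same as the paper's: both invoke the Kolmogorov/Ethier--Kurtz moment criterion (Thm.~8.8 in Chapter~3 of Ethier--Kurtz) for tightness on each $[a,b]$, then identify any subsequential Prokhorov limit via finite-dimensional distributions proved by backward induction through the Markov conditionals. The paper sidesteps what you flag as ``the main obstacle'' by first reducing, via Stone--Weierstrass, to products $h_1\cdots h_m$ of \emph{compactly supported} continuous univariate functions, so that at each inductive step the new test function $\tilde h_{m-1}(\cdot)=h_{m-1}(\cdot)\,\mathbb E[h_m\mid \cdot]$ inherits compact support from the factor $h_{m-1}$; this keeps Assumption~2 directly applicable throughout and removes the need for your truncation-by-marginal-tightness bookkeeping.
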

\begin{proof}
    Pick a compact set $[a, b]\subset(0, 1)$.
     We show $(\vec x_t^\zeta)_{t\in[a, b]}\leadsto (\vec z_t)_{t\in[a, b]}.$
    Say $(\vec x_t^{\zeta_m})_{t\in[a, b]}$ is a subsequence which doesn't enter a neighbourhood of $(\vec z_t)_{t\in[a, b]}$; we'll now show a contradiction.
    By Prokhorov's theorem, since it's tight by Assumption~\ref{as: tight} and Thm. 8.8 of~\citet[Chapter~3]{Ethier1986-gm}, it has a subsequence which converges to a process $(\vec y_t)_{t\in[a, b]}$.
    As we'll show below, for every set $a\leq t_1<t_2< \dots<t_m\leq b$, $(\vec y_t)_{t\in\{t_i\}_{i=1}^m}=(\vec z_t)_{t\in\{t_i\}_{i=1}^m}$.
    This must mean $(\vec y_t)_t=(\vec z_t)_t$ by the Kolmogorov extension theorem, a contradiction.

    What remains is to show, for $a\leq t_1<t_2< \dots<t_m\leq b$, $(\vec x_t^\zeta)_{t\in\{t_i\}_{i=1}^m}\leadsto(\vec z_t)_{t\in\{t_i\}_{i=1}^m}.$
    It is sufficient to prove that for any $t_1< \dots<t_m $ and compactly supported continuous function on $\mathbb R^r$, $h$,
    \begin{equation}\label{eq: condition}
        Eh(\vec x^\zeta_1, \dots, \vec x^\zeta_m)\to Eh(\vec z_1, \dots, \vec z_m).
    \end{equation}
    By the Stone-Weierstrass theorem, each such $h$ can be arbitrarily well approximated by product of $m$ univariate functions, so it is sufficient to consider $h(\vec z_1, \dots, \vec z_m)=\prod_{i=1}^mh_i(\vec z_i).$
    Finally, by the Markov property,
    $$\mathbb Eh(\vec x^\zeta_1, \dots, \vec x^\zeta_m)=\mathbb E_{\vec x_1^\zeta|\vec x_0^\zeta}h_1(\vec x_1^\zeta)\mathbb E_{\vec x_2^\zeta|\vec x_1^\zeta}h_2(\vec x_2^\zeta)\cdots \mathbb E_{\vec x_m^\zeta|\vec x_{m-1}^\zeta}h_m(\vec x_m^\zeta).$$
    We can call $\tilde h_{m-1}^\zeta(\vec x_{m-1}^\zeta)= h_m(\vec x_{m-1}^\zeta)E_{\vec x_m^\zeta|\vec x_{m-1}^\zeta}h_m(\vec x_m^\zeta)$.
    By Assumption~\ref{as uniform conv} $\tilde h_{m-1}^\zeta(\vec x_{m-1}^\zeta)$ converges uniformly to $h_m(\vec x_{m-1}^\zeta)E_{\vec z_m|\vec z_{m-1}=\vec x_{m-1}^\zeta}h_m(\vec z_m)$, a bounded function with compact support.
    Therefore, to prove Eqn.~\ref{eq: condition} it is sufficient to show
    the result replacing $h$ with $h_1\times h_2\times\dots\times h_{m-2}\times\tilde h_{m-1}.$
    By induction, we reach $h=\tilde h_1$for which we get Eqn.~\ref{eq: condition} by Assumption~\ref{as marg conv}.
\end{proof}

Our next Lemma is a non-asymptotic bound on the convergence of multinomials to Normal distributions.
It states that as long as $\zeta\to\infty$ and the probabilities don't get too low, we can bound the expectation of a function by $O(\zeta^{-1/2}).$
\begin{lemma}\label{lem:berry-esseen}
Let $Y_\zeta \sim \text{Mult}(\zeta, \vec p)$ for probability vector $\vec p\in\mathbb R^B$ with $\min_i p_i \geq c> 0$.
Call $Z_\zeta = \zeta^{-1/2}(Y_\zeta - \zeta p)$.
For any bounded measurable function $f$,
$$\left|\mathbb Ef(Z_\zeta) - \mathbb Ef(Z)\right| = o_{c, B, f}(1)$$
where $Z \sim \mathcal{N}(0, \diag(\vec p) - \vec p\vec p^T)$ and the rate of decay $o_{c, B, f}(1)$ only depends on $c$, $B$, and $f$.
\end{lemma}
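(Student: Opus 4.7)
The plan is to reduce the claim to the classical multivariate CLT with a rate that is uniform in $\vec p$ over the class $\{\vec p : \min_b p_b \geq c\}$. First I would decompose the multinomial as $Y_\zeta = \sum_{i=1}^\zeta V_i$ where $V_1,\dots,V_\zeta$ are i.i.d.\ categorical with $\mathbb{P}(V_i = e_b) = p_b$, so $\mathbb{E}V_i = \vec p$ and $\mathrm{Cov}(V_i) = \diag(\vec p) - \vec p\vec p^T =: \Sigma$. Two uniform estimates are automatic: since $V_i \in \{e_1,\dots,e_B\}$, all centred moments of $V_i$ are bounded by a constant depending only on $B$; and although $\Sigma$ is singular (its coordinates always sum to $1$), its restriction to the hyperplane $\{\vec v : \mathbbm{1}^T\vec v = 0\}$ has minimum eigenvalue bounded below by some $\lambda(c,B) > 0$ whenever $\min_b p_b \geq c$.

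Next I would invoke a multivariate Berry--Esseen inequality (e.g.\ Bentkus 2005) to conclude that, uniformly over $\vec p$ with $\min_b p_b \geq c$ and over convex measurable $A \subset \mathbb{R}^B$,
$$|\mathbb{P}(Z_\zeta \in A) - \mathbb{P}(Z \in A)| \leq C(c,B)\,\zeta^{-1/2}.$$
This gives weak convergence $Z_\zeta \leadsto Z$ with a quantitative rate depending only on $c$ and $B$. For bounded continuous $f$, the Portmanteau theorem immediately yields $\mathbb{E}f(Z_\zeta) \to \mathbb{E}f(Z)$, with a rate that can be quantified in terms of the modulus of continuity of $f$ (by approximating $f$ by a $C^\infty$ mollification, applying Berry--Esseen to the level sets of the mollification, and controlling the error via the modulus of continuity).

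For a general bounded measurable $f$, I would approximate $f$ in $L^1(\mathbb{P}_Z)$ by a bounded continuous, compactly supported $g$ with $\|g\|_\infty \leq \|f\|_\infty$; such a $g$ exists by regularity of the Gaussian measure and Lusin's theorem. The triangle inequality gives
$$|\mathbb{E}f(Z_\zeta) - \mathbb{E}f(Z)| \leq |\mathbb{E}g(Z_\zeta) - \mathbb{E}g(Z)| + \mathbb{E}|f(Z) - g(Z)| + \mathbb{E}|f(Z_\zeta) - g(Z_\zeta)|.$$
The first term vanishes by the continuous case, and the second is small by construction. The last term is the obstacle: $Z_\zeta$ is lattice-supported, so it cannot be compared to $Z$ at the level of total variation. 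To bound it, I would invoke a local CLT obtained via the same uniform Stirling approximation used in Part~2 of the proof of Theorem~\ref{thm: gaussian proof}: for $z$ in the support of $Z_\zeta$ with $\|z\|$ not too large, $\zeta^{(B-1)/2}\mathbb{P}(Z_\zeta = z) = \phi(z)(1 + o(1))$ uniformly, where $\phi$ is the density of $Z$. Combined with a Chebyshev tail bound for $\|Z_\zeta\|$ (using the uniformly bounded moments of $V_i$), this lets one transfer the $L^1$ closeness of $f$ and $g$ from $\mathbb{P}_Z$ to $\mathbb{P}_{Z_\zeta}$ up to a vanishing error, completing the bound. The main technical obstacle is precisely this transfer step, and it is there that the assumption $\min_b p_b \geq c$ (ensuring the local CLT holds uniformly in the relevant ball of lattice points) is used.
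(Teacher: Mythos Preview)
Your approach is considerably more elaborate than the paper's, which is a three-line argument: given $\epsilon > 0$, choose a compactly supported $C^\infty$ function $g_\epsilon$ with $\|g_\epsilon - f\|_\infty < \epsilon/2$, invoke Theorem~1.3 of G\"otze (1991) directly to obtain $|\mathbb{E}g_\epsilon(Z_\zeta) - \mathbb{E}g_\epsilon(Z)| = o_{c,B,g_\epsilon}(1)$, and conclude by the triangle inequality. Your route through a Berry--Esseen bound on convex sets and the Portmanteau theorem is correct for bounded continuous $f$ and has the virtue of making the moving parts explicit.

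There is, however, a genuine gap in your extension to arbitrary bounded measurable $f$, and it cannot be repaired. Your plan is to control $\mathbb{E}|f(Z_\zeta) - g(Z_\zeta)|$ via a local CLT, approximating it by $\zeta^{-(B-1)/2}\sum_{z} |f(z) - g(z)|\,\phi(z)$ over the lattice supporting $Z_\zeta$. But this is a Riemann-type sum, and such sums of a merely measurable integrand need not converge to the corresponding Lebesgue integral; $L^1(\mathbb{P}_Z)$-closeness of $f$ and $g$ gives no control on $|f-g|$ along any particular sequence of lattices. In fact the lemma as stated is false: with $\vec p = (1/2,1/2)$ and $f$ the indicator of the countable set $\bigcup_{\zeta\geq 1}\{z : \mathbb{P}(Z_\zeta = z) > 0\}$, one has $\mathbb{E}f(Z_\zeta) = 1$ for every $\zeta$ while $\mathbb{E}f(Z) = 0$. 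The paper's proof slips at exactly the same point---not every bounded measurable function admits a uniform smooth approximation---so the result should really be read for bounded continuous $f$, which is all that the applications in Parts~1 and~2 of the proof of Theorem~\ref{thm: gaussian proof} actually require.
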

\begin{proof}
    For every $\epsilon$, pick a compactly supported $C^\infty$ function $g_\epsilon$ such that $\|g_\epsilon-f\|_\infty<\epsilon/2$, so
    $$\left|\mathbb Ef(Z_\zeta) - \mathbb Ef(Z)\right| = \epsilon+\left|\mathbb Eg_\epsilon(Z_\zeta) - \mathbb Eg_\epsilon(Z)\right|=\epsilon+o_{c, B, g_\epsilon}(1)$$
    by Thm 1.3 of~\citet{Gotze1991-ei}.
\end{proof}

Our final lemma characterizes the distribution of the finite population Wright-Fisher process as described in Sec.~\ref{sec: wf} and App.~\ref{app: wf details}.
\begin{lemma}\label{lem: finite wf}
    For each $\vec x^\zeta_t$ in the interior of the simplex,
    $$p(\vec x^\zeta_t\mid x_0, t)=\mathbb E_{m\sim A^{(\zeta)}(\psi, \tau_t)}\mathbb E_{\vec p\sim \mathrm{Dir}(\psi\vec\pi+m\vec x_0)}\mathrm{Mult}(\zeta, \vec p)(\vec x_t^\zeta)$$
    for a distribution over the natural numbers $A^{(\zeta)}(\psi, \tau_t)$ supported on $\{1, \dots, \zeta\}$ such that $|A^{(\zeta)}(\psi, \tau_t)(m)- A(\psi, \tau_t)(m)|=C\zeta^{-1}\exp(-C'm^2)$ for constants $C, C'$ only depending on $\psi, \tau_t$, each $m$.
\end{lemma}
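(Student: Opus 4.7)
The plan is to prove the representation by moment-duality with a backward ancestral process tailored to the finite-population Wright--Fisher chain with parent-independent mutation. Since $\mathcal L^{\mathrm{mut}}=\psi(\mathbbm 1\vec\pi^T-I)$, every mutation is equivalent to a fresh re-draw from $\vec\pi$, so going backward from time $t$ each lineage is independently ``killed'' at rate $\psi$ (attaching an i.i.d.\ $\vec\pi$ type to the killing event), while coalescences occur at generation-swap events (rate $\zeta$) according to the exact finite-WF sampling rule: all live lineages jointly re-pick parents uniformly from $\zeta$ slots. Define $A^{(\zeta)}(\psi,\tau_t)$ as the law of the number $M_t$ of un-killed lineages surviving back to time $0$; all such lineages carry type $x_0$ because the process starts at $x_0$.

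The core analytic step is to show that, conditional on $M_t=m$, the integer count vector $\zeta\vec x_t^\zeta$ is Dirichlet--multinomial with parameters $\psi\vec\pi+m\vec x_0$ --- the exact finite analogue of Kimura's infinite-population identity $\vec z_t\mid m\sim\mathrm{Dir}(\psi\vec\pi+m\vec x_0)$. I would do this by factorial-moment duality: for any $\vec n=(n_1,\ldots,n_B)$ with $n=\sum_bn_b\le\zeta$,
\begin{equation*}
\mathbb E\prod_b\bigl(\zeta\vec x_{t,b}^\zeta\bigr)_{(n_b)}
=(\zeta)_{(n)}\,\mathbb P\bigl(\text{an ordered $n$-sample at time }t\text{ has type counts }\vec n\bigr),
\end{equation*}
and the sampling probability on the right is computed by running the coalescent-with-killing on only $n$ lineages (rather than $\zeta$), conditioning on the number $m'$ of un-killed lineages, and assigning types ($x_0$ to un-killed, i.i.d.\ $\vec\pi$ to killed blocks). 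A direct Gamma-function calculation then matches this expression to the corresponding factorial moment of the claimed Dirichlet--multinomial mixture term by term in $m$; since these moments determine any distribution on $\{0,1,\ldots,\zeta\}^B$, the conditional identification follows.

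For the rate bound $|A^{(\zeta)}(\psi,\tau_t)(m)-A(\psi,\tau_t)(m)|\le C\zeta^{-1}e^{-C'm^2}$, I would couple the finite-$\zeta$ coalescent to the Kingman coalescent on the same killing times. The two processes agree on the killing mechanism; the sole discrepancy is the coalescent merger structure, where the finite-$\zeta$ pair-merger rate differs from the Kingman rate by $O(\zeta^{-1})$ and the probability of any multi-merger per generation-swap is $O(k^2/\zeta^2)$, summing to an $O(\zeta^{-1})$ total-variation gap while the line count remains $O(\zeta^{1/2})$. Composing this $\zeta^{-1}$ coupling error with the super-exponential tail $A(\psi,\tau_t)(m)\lesssim e^{-C'm^2}$ of the Kingman line-count distribution (Griffiths 1984, via the Gaussian/saddle-point approximation that Alg.~\ref{alg: jenkins low t} exploits) gives the claimed product bound.

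The hardest part is the second step: the Dirichlet--multinomial identification at finite $\zeta$ is not as standard as its $\zeta\to\infty$ limit, and one must carefully verify that marginalizing over the combinatorial structure of the ``mutation families'' in the ancestral partition reproduces exactly the Dirichlet--multinomial mass function and not some more elaborate discrete cousin; this is where the parent-independence of $\mathcal L^{\mathrm{mut}}$ and the exchangeability of the WF resampling are both essential, and any finite-$\zeta$ correction to either would break the clean mixture form.
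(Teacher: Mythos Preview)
Your coalescent-with-killing construction for the first half is exactly the route the paper takes, only the paper does not redo the factorial-moment calculation: it simply invokes \citet{Hoppe1984-in}, whose P\'olya-urn result says that, conditional on $m$ surviving (un-mutated) lineages, further samples from the population are drawn from the urn with weights $\psi\vec\pi+m\vec x_0$, which is precisely the Dirichlet--multinomial mixture you are aiming for. So what you flag as ``the hardest part'' is in fact a standard population-genetics fact, and your factorial-moment argument, while correct in spirit, is more laborious than citing Hoppe.

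The genuine gap is in your argument for the rate bound. A successful coupling of the finite-$\zeta$ and Kingman ancestral processes with failure probability $O(\zeta^{-1})$ only yields
\[
|A^{(\zeta)}(m)-A(m)|\le \mathbb P(\text{coupling fails},\,M_t^\zeta=m)+\mathbb P(\text{coupling fails},\,M_t=m)\le \min\bigl(O(\zeta^{-1}),\,e^{-C'm^2}\bigr),
\]
not the \emph{product} $C\zeta^{-1}e^{-C'm^2}$; ``composing'' a global TV bound with a marginal tail bound does not produce a pointwise product. To get the product you would have to show that the coupling failure probability is itself $O(\zeta^{-1}e^{-C'm^2})$ on the event $\{M_t=m\}$, and your sketch gives no mechanism for this (if anything, large $m$ means more surviving lineages and hence \emph{more} opportunities for multi-mergers to spoil the coupling). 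The paper avoids this entirely: it uses Tavar\'e's explicit formula
\[
A^{(\zeta)}(\psi,\tau_t)(m)=\sum_{k=m}^\zeta e^{-k(k+\psi-1)\tau_t/2}(-1)^{k-m}\frac{(2k+\psi-1)(m+\psi)_{(k-1)}}{m!(k-m)!}\cdot\frac{(\zeta-k+1)_{(k)}}{(\zeta+\psi)_{(k)}},
\]
observes that the finite-$\zeta$ correction factor satisfies $\bigl|\tfrac{(\zeta-k+1)_{(k)}}{(\zeta+\psi)_{(k)}}-1\bigr|\lesssim k^2/\zeta$, and then bounds the resulting series term by term, with the $e^{-k(k+\psi-1)\tau_t/2}$ factor supplying the $e^{-C'm^2}$ directly for $k\ge m$. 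This analytic route gives the product bound that the downstream application (summing $e^{-cm^2}$ against Dirichlet densities in the ELBO convergence proof) actually needs.
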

\begin{proof}
    This is standard in the population genetics literature.
    Define $A^{(\zeta)}(\psi, \tau_t)(m)$ the probability that $m$ alleles survive backwards in the coalescent of population $\zeta$ up to time $\tau_t$.
    Conditioned on observing $m$ individuals with allele $x_0$, \citet{Hoppe1984-in} showed that sampling more individuals from the population is equivalent to sampling from a Pólya urn with allele probabilities $\psi\vec\pi+m\vec x_0$, giving the Dirichlet multinomials.

    \citet{Tavare1984-uz} shows $A(\psi, \tau_t)(m)=\lim_{\zeta\to\infty}A^{(\zeta)}(\psi, \tau_t)(m)$ and for $m>0$
    $$A^{(\zeta)}(\psi, \tau_t)(m)=\sum_{k=m}^\zeta e^{-k(k+\psi-1)\tau_t/2}(-1)^{k-m}\frac{(2k+\psi-1)(m+\psi)_{(k-1)}}{m!(k-m)!}\frac{(\zeta-k+1)_{(k)}}{(\zeta+\psi)_{(k)}}.$$
    Note
    $$\frac{(m+\psi)_{(k-1)}}{(k-m)!}\leq\frac{(m+\psi)_{(k-1)}}{(k-1)!}\frac{(k-1)!}{(k-m)!}\leq k^{m+\psi}(m+\psi)^ck^m$$
    for some $c>0$ and 
    $$\left|\frac{(\zeta-k+1)_{(k)}}{(\zeta+\psi)_{(k)}}-1\right|\lesssim k^2/\zeta.$$
    Therefore
    \begin{equation*}
    \begin{aligned}|
        A^{(\zeta)}(\psi, \tau_t)(m)-A(\psi, \tau_t)(m)|\lesssim &\sum_{k=m}^\infty e^{-k(k+\psi-1)\tau_t/2}(2k+\psi-1)k^{2m+\psi}\frac{m^c}{m!}\left(\frac{k^2}{\zeta}\wedge 1\right)\\
        \lesssim&\zeta^{-1}\frac{m^c}{m!}\sum_{k=m}^\infty e^{-k(k+\psi-1)\tau_t/2}k^{2m+\psi+3}\\
        \leq&\zeta^{-1}\frac{m^c}{m!}e^{-m(m+\psi-1)\tau_t/2}\sum_{j=0}^\infty e^{-j(m+\psi-1)\tau_t/2}(j+m)^{2m+\psi+3}\\
    \end{aligned}
    \end{equation*}
    and
    \begin{equation*}
    \begin{aligned}
        \sum_{j=0}^\infty e^{-j(m+\psi-1)\tau_t/2}(j+m)^{2m+\psi+3}\leq &\sum_{j=0}^m (j+m)^{2m+\psi+3}\\
        &+\sum_{j=m}^\infty e^{-j(m+\psi-1)\tau_t/2}(j+m)^{2m+\psi+3}\\
        \leq &m (2m)^{2m+\psi+3}+\sum_{j=m}^\infty e^{-j(m+\psi-1)\tau_t/2}(2j)^{2m+\psi+3}\\
        \leq &(2m)^{2m+\psi+4}+2^{2m+\psi+3}\sum_{j=0}^\infty e^{-j(m+\psi-1)\tau_t/2}j^{2m+\psi+3}\\
        = &(2m)^{2m+\psi+4}+2^{2m+\psi+3}\sum_{j=1}^\infty e^{-(j-\frac{4}{\tau_t}\log j)(m+\psi-1)\tau_t/2}j^{5-\psi}\\
        \leq &(2m)^{2m+\psi+4}+2^{2m+\psi+3}\sum_{j=1}^\infty e^{-(j-\frac{4}{\tau_t}\log j)(\psi-1)\tau_t/2}j^{5-\psi}\\
        \lesssim &(2m)^{2m+\psi+4}+2^{2m+\psi+3}.
    \end{aligned}
    \end{equation*}
\end{proof}

\section{Experimental Details}\label{app: experiment-dets}

\subsection{DNA}
We describe the experiments in Sec.~\ref{sec: application wf improved}.

\paragraph{Training and data}
For all DNA models, we use the same base CNN model and optimizer hyperparameters used to train DDSM \citep{Avdeyev2023-mv} and Dirichlet flow matching \citep{Stark2024-rf} with code from \url{https://github.com/jzhoulab/ddsm} used with compliance with their licence and
\url{https://github.com/HannesStark/dirichlet-flow-matching} used with an MIT licence.
We train our Wright-Fisher simplicial model on the FlyBrain enhancer data from \url{https://zenodo.org/records/10184648}.

We trained our model on an A100 80GB GPU over 11 h for 700 epochs like \citet{Avdeyev2023-mv}.
We trained a DDSM model using the code in \url{https://github.com/jzhoulab/ddsm} and used a pre-trained flow-matching model from \url{https://github.com/HannesStark/dirichlet-flow-matching}.

\paragraph{Computational comparison}
All three models we tested need to pass their noisy $\vec x_t$ through a neural network.
We chose the same neural network for our diffusion model as used in \citep{Avdeyev2023-mv} and \citep{Stark2024-rf}.
For a reasonably sized model, like the ESM model for protein experiments, the neural network computations took 75\% of our compute time on average, meaning the overhead from sampling and loss computations cannot be more than 25\%.

However the DNA architecture was very small, at only 3 M parameters.
For the DNA setting then we precomputed and cached $\vec x_t, F_\psi$ and $G_\psi$ so that a majority of training time would come from the neural network.
Indeed our model took 3 hours on an A100 to train for 200 epochs, comparable to 7 hours on an A6000 for 200 epochs in~\citet{Stark2024-rf}.

\paragraph{DNA accessibility (ATAC) predictor} To get accurate predictions of a position-resolution epigenetic marker for DNA-accessibility (a property one often wants to design), we use the CNN bpAITAC model from \citet{Chandra2025} to predict chromatin accessibility traces with code from \texttt{https://github.com/nuriachandra/bpAITAC}.
The model is trained on  embryonic drosophila chromatin accessibility from the \citet{calderon2022} developmental fly dataset 16-20 hour subset, with ATAC-seq reads combined across cell types, using held-out chromosome chr2L for validation. bpAITAC was trained on a single NVIDIA TITAN RTX GPU (24GB) with early stopping based on validation loss.

bpAITAC produces two outputs: 1) total counts (a measure of regional accessibility), and 2) probability distribution of the counts. The base-pair resolution counts prediction is easily computed by multiplying the two outputs. The resulting per-base counts are modelled by a Poisson distribution.
That is, \texttt{bpAITAC} takes a one-hot-encoded sequence $x_0$ of length $D=500$ and predicts a positive $250$-dimensional vector that represents the predicted ``accessibility-profile'' in the centre $250$ positions of the sequence.
For a target profile of $250$ numbers, $\vec y\in\mathbb N^{250}$, we compute the probability by using the \texttt{bpAITAC} predictions as means of independent Poisson distributions
$$p(\vec y|x_0)=\prod_{d=1}^{D} \text{Poisson}\left(\vec v_d\right)(\vec y_d)$$
where $\vec v=\texttt{bpAITAC}(x_0)$ is the output of the predictor.
Since \texttt{bpAITAC} is a neural network which accepts one-hot-encoded $x_0t$, we may also pass $x_0$ which have each position $x_{0, d}$ lying on the simplex.

\paragraph{Evaluation}
We use the \texttt{ode\_likelihood} function to evaluate the likelihood of the trained diffusion model in the code of~\citet{Avdeyev2023-mv}. 

We collected 100 trace predictions from \cite{calderon2022} validation chromosome chr2L to use as targets, picking the 100 peaks with the highest combined signal.
For each target and model we sampled 10 conditional samples using 1000 function evaluations.
We sampled from our simplicial diffusion model using the procedure described in App.~\ref{app: sampling}.
To sample from the flow matching model in~\citet{Stark2024-rf}, we modified the \texttt{get\_cls\_score} function in their code to return the one-step predictor that we used in our App~\ref{app: sampling}.
Finally, we write custom code based on reversing an SDE to sample from the simplicial diffusion model in~\citet{Avdeyev2023-mv}.
To do so, we note they perform diffusion in a space with each position $\vec v_{t, d}\in[0, 1]^{B-1}$.
We compute their prediction $\tilde x_0(\vec v_t)$ by transforming the output of their neural network and then compute a prediction of $\nabla_{\vec v_t}\log p(y|\tilde x_0(\vec v_t))$ with a one-step estimator as in in our App~\ref{app: sampling}, and add it to their score for $\vec v$ every step.
We add this modification into their function \texttt{Euler\_Maruyama\_Sampler}.

To calculate $\tilde x_0(\vec v_t)$ we note they build a neural network to predict $\vec s = \nabla_{\vec v_t}\log p(\vec v_t)$ which equals $$\sum_b\tilde x_{0, b}\nabla_{\vec v_t}\log p(\vec v_t|x_0=b)$$
for some implicit prediction $\tilde x_{0, b}$ which we must solve for.
Now note, by the choice of the reverse stick-breaking procedure of \citet{Avdeyev2023-mv}, $\hat U_{b, b'}:=(\nabla_{\vec v_t}\log p(\vec v_t|x_0=b))_{b'}=\nabla_{ v_{t, b'}}\log p(v_{t, b'}|v_{0, b'}=\delta_{b, b'})$ for $b'\leq b$ and $(\nabla_{\vec v_t}\log p(\vec v_t|x_0=b))_{b'}=\nabla_{\vec v_{t, b'}}\log\mathrm{Beta}(1, B-b')(\vec v_{t, b'})$ otherwise.
So, $\vec s=U\tilde x_0=U_{:, :-1}\tilde x_{0, :-1}+U_{:, -1}(1-\tilde x_{0, :-1}^T\mathbbm 1)=(U_{:, :-1}-U_{:, -1}\mathbbm 1^T)\tilde x_{0, :-1}+U_{:, -1}$.
Therefore we can solve for $\tilde x_{0, :-1}$ by solving this linear system.

\subsection{Protein}

We describe the protein experiments in Sec.~\ref{sec:application unification}.

\paragraph{Training and data} 
For all protein models, we started from pre-trained ESM2 150M weights~\citep{Lin2023} under an MIT license as in MDLM~\citep{Wang2024-bn}.
We trained with a learning rate of $10^-5$ for an A100 80GB GPU over 48 h for 3 million sequences, substantially less than the training budget of~\citet{Wang2024-bn}.
We trained on UniRef50~\citep{Suzek2007-ev} data from \url{https://zenodo.org/records/6564798}.

\paragraph{Evaluation}
From each model we sampled 1000 sequences of length 200. We used a uniform grid of 100 points and integrated backwards, and we applied 4 corrector steps per predictor step as described in~\citet{Campbell2022-zm}. 
Then we predicted pLDDTs of sequences with Omegafold~\citet{Wu2022-ma} under the Apache-2.0 License, with 1 cycle for each sequence.

\subsection{Language}

We describe the language experiments in Sec.~\ref{sec:application unification}.

\paragraph{Training and data} 
We used the same architecture and training settings as~\citet{Lou2023-vm}, using their code at \url{https://github.com/louaaron/Score-Entropy-Discrete-Diffusion} under an MIT license.
We trained our model on $4$ A100 80GB GPUs for between 40 and 50 hours total on $33$ billion tokens taken from the \texttt{lm1b} dataset.
We used a learning rate of $3 \times 10^{-4}$ and an EMA of $0.9999$.
Our diffusion transformer had an embedding dimension of $768$ with $12$ layers and $12$ attention heads. 
The Gaussian models used pre-trained BERT embeddings scaled by a factor of $8$. 

For our individual discrete and Gaussian language models, each device used a physical batch size of $64$ and took $2$ gradient accumulation steps for an overall batch size of $512$.
For our unified model, we accumulated over Gaussian and discrete batches to get an overall batch size of $1024$.

\paragraph{Evaluation}
We sample using $1000$ iterations. 
Following \citet{Lou2023-vm}, we evaluate the sample quality of our models through the generative perplexity of their unconditional samples according to GPT2-large. 

\section{Supplementary Experiments}

\subsection{Antibody optimization downstream task}\label{app: thermo experiment}

We test our unified models from Sec.~\ref{sec:application unification} on a different downstream task.
\citet{Hie2023-sr} suggested that generative protein models can be used to suggest mutations that improve the stability of antibody sequences.
To test our diffusion model’s ability to successfully improve antibody properties, we perturb a parental VHH sequence by noising a UniRef50-trained diffusion model by $t$ then denoising with 128 steps.\footnote{Note that this follows established methods for ML-based antibody diversification as in \citet{raghu2025guided}.}
To emulate a realistic wet lab setting, we investigate sampling 50 unique single- and double-point mutants of the seed VHH by rejection sampling.
We repeat this process 100 times, selecting the top resulting sequence from each repeat ``experiment’’ according to a proprietary thermostability oracle. 
The amount of noising for each of the individual Gaussian, simplicial, and discrete diffusion models was determined by a hyperparameter sweep over $t \in [0.01, 0.02, 0.05, 0.1, 0.2, 0.5]$, where the chosen hyperparameter gave the most unique sequences with fewer than or equal to 5 mutations to the parental sequence.
This hyperparameter was then shared with each sub-model of the unified model.

The thermostability oracle we used is an ensemble of 10 CARP/ByteNet regressors \citep{yang2024convolutions}, pretrained on approximately 537,000 sequences from phage display, processed using Next Generation Sequencing (NGS), and 9556 $T_m$ datapoints obtained from NanoDSF. 
The resulting ensemble achieved a test cross-validated Spearman correlation of 0.72.

In Fig.~\ref{fig: transfer bighat} we see that unification does not substantially harm performance on this downstream task.

\begin{figure}[h]
    \centering
    \includegraphics[width=0.4\linewidth]{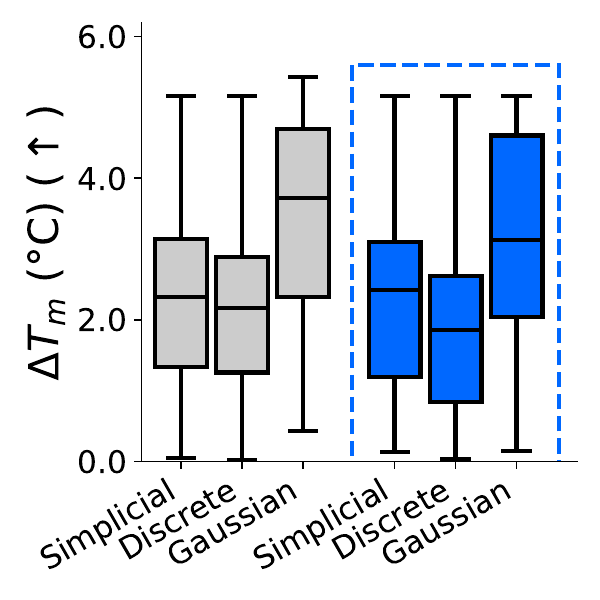}
    \caption{\textbf{The sufficient statistic parametrization enables a single model to perform competitive discrete, Gaussian, and simplicial optimization of antibodies.}
    Using our protein models from Fig.~\ref{fig: transfer}, we ``denoise'' antibody sequences and plot the predicted improvement in melting temperature in libraries of size 100.}
    \label{fig: transfer bighat}
\end{figure}

\subsection{Fitting image data: MNIST}\label{app: mnist}

We perform the analysis of Fig.~\ref{fig: transfer} for image data and find a similar result.
We evaluate our unified discrete diffusion framework on the MNIST dataset, consisting of 28x28 grayscale images. We discrete the pixel intensities to $N=8$ levels using uniform quantization, preserving the continuous structure of the token identities while reducing the computational cost.
We compare the performance of our single unified model (SSP) to the performance of three individually-trained diffusion models: discrete, simplicial, and Gaussian. 

All models use a U-Net backbone with an embedding size of 128, 4 downsampling/upsampling blocks, and ReLU activations. 
Models are trained with the Adam optimizer with learning rate 0.001 and batch size 128 for 20 epochs. For Gaussian diffusion, we map each class index $x \in \{0, \cdots, C\}$ to a 2D continuous embedding with a circular parameterization $\mathrm{emb}(x)=( \cos(\theta), \sin(\theta))$ where $\theta = \frac{x}{C-1}\pi$. This embeddings encodes the similarity of different pixel values and ensures that the resulting diffusion process closely resembles continuous diffusion.
We found models with a 1-D parameterization $\mathrm{emb}(x)=2 \times (\frac{x}{C-1})-1$ performed much worse.

We evaluate the model performance using validation likelihood, as shown in \Cref{fig: mnist nll}. 
First, as in Fig.~\ref{fig: transfer} we find that the likelihoods between the unified model are competitive with the individually trained models.
In fact, we are even able to achieve slightly better performance for discrete and Gaussian diffusion, perhaps because the parameterization is easier to learn from, or because of a benefit from learning on diverse data.

As well, while we might expect Gaussian diffusion to achieve the best data fit due to the continuous nature of the data, we see the opposite: among our individual models, Gaussian surprisingly achieves the worst likelihood. 
This demonstrates the importance of considering multiple types of diffusion paradigms depending on the downstream tasks, thereby motivating our approach of training a single unified model.

We also generate 64 unconditional samples per model using 1,000 steps of ancestral sampling. Through our visualizations in \Cref{fig: mnist samples}, we see that the unified model does not lead to a noticeable drop in sample quality compared to individual models.

\begin{figure}
    \centering
    \includegraphics[width=0.5\linewidth]{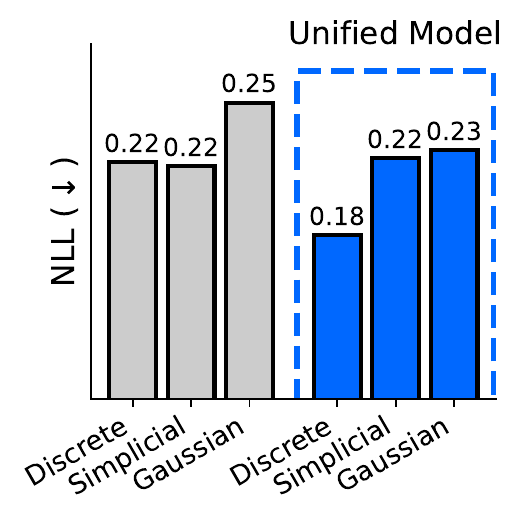}
    \caption{\textbf{The SSP enables a single model to fit image data across 3 modalities.} We perform the analysis of Fig.~\ref{fig: transfer} for image data and find a similar result on MNIST.}
    \label{fig: mnist nll}
\end{figure}

\begin{figure}
    \centering
    \begin{subfigure}[b]{0.32\textwidth}
    \includegraphics[width=\textwidth]{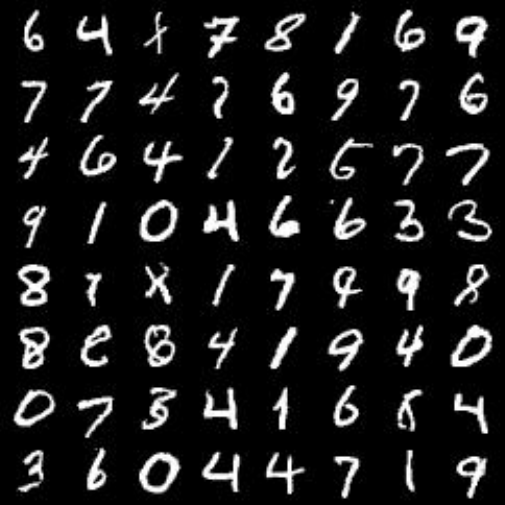}
    \caption{Discrete}
    \end{subfigure}%
    \hfill
    \begin{subfigure}[b]{0.32\textwidth}
    \includegraphics[width=\textwidth]{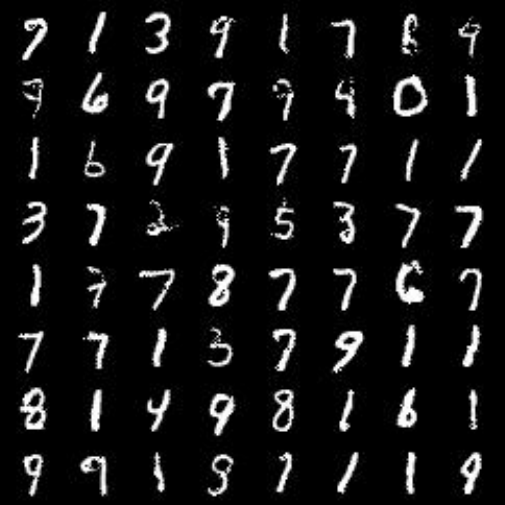}
    \caption{Simplicial}
    \end{subfigure}%
    \hfill
    \begin{subfigure}[b]{0.32\textwidth}
    \includegraphics[width=\textwidth]{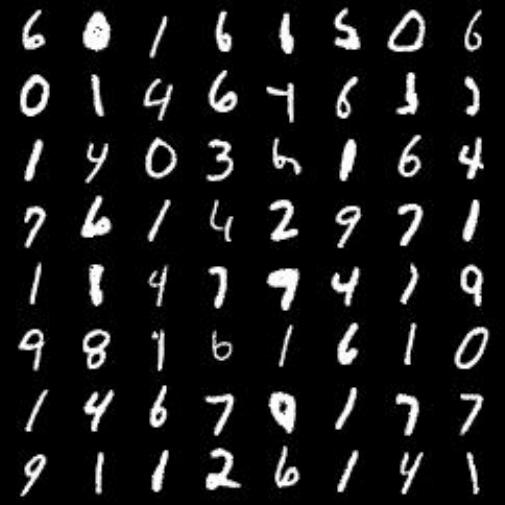}
    \caption{Gaussian}
    \end{subfigure}
    \begin{subfigure}[b]{0.32\textwidth}
    \includegraphics[width=\textwidth]{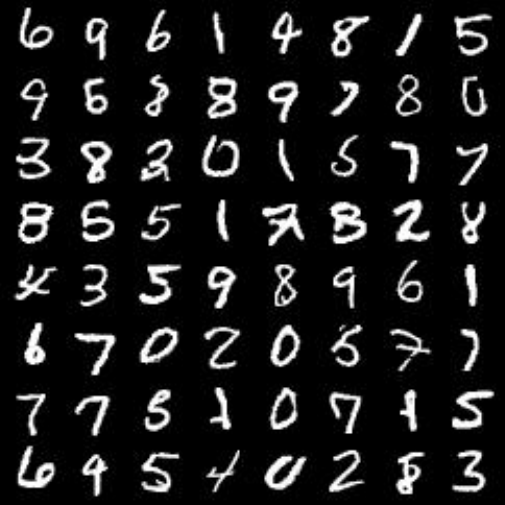}
    \caption{Unified Discrete}
    \end{subfigure}%
    \hfill
    \begin{subfigure}[b]{0.32\textwidth}
    \includegraphics[width=\textwidth]{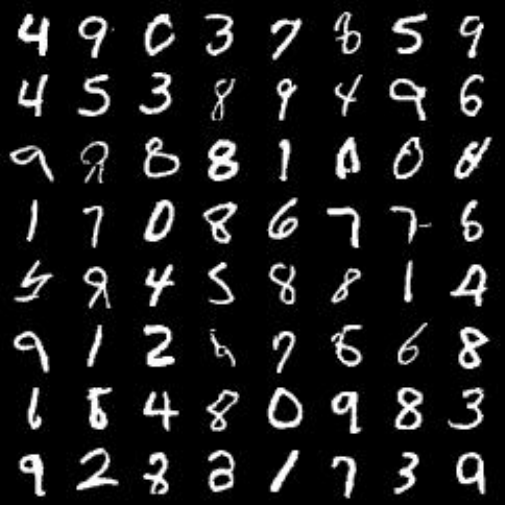}
    \caption{Unified Simplicial}
    \end{subfigure}%
    \hfill
    \begin{subfigure}[b]{0.32\textwidth}
    \includegraphics[width=\textwidth]{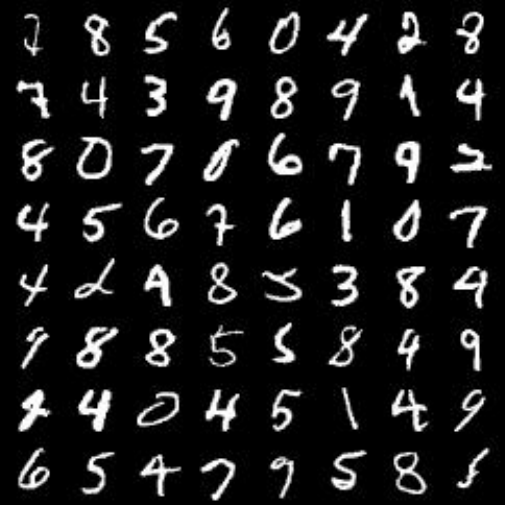}
    \caption{Unified Gaussian}
    \end{subfigure}
    
    \caption{\textbf{The SSP results in no noticeable drop in generation quality for image models.}
    We plot samples from models trained on MNIST.}
    \label{fig: mnist samples}

\end{figure}

\end{document}